\def\b{{\bf b}}
\def\C{{\bf C}}
\def\I{{\bf I}}
\def\X{{\bf X}}
\def\s{{\bf s}}
\def\x{{\bf x}}
\def\y{{\bf y}}
\def\z{{\bf z}}
\def\u{{\bf u}}
\def\v{{\bf v}}
\def\w{{\bf w}}
\def\0{{\bf 0}}
\def\1{{\bf 1}}
\def\AM{{\mathcal A}}
\def\RB{{\mathbb R}}
\def\NB{{\mathbb N}}
\def\epsi{\mbox{\boldmath$\epsilon$\unboldmath}}
\def\etab{\mbox{\boldmath$\eta$\unboldmath}}
\def\Tha{\mbox{\boldmath$\Theta$\unboldmath}}
\def\vps{\mbox{\boldmath$\varepsilon$\unboldmath}}
\def\argmin{\mathop{\rm argmin}}
\def\liml{\mathop{\lim}\limits}
\def\sgn{\mathrm{sgn}}
\title{The Bernstein Function: A Unifying Framework of Nonconvex Penalization in Sparse Estimation}
\author{\name Zhihua Zhang  \\
\addr  MOE-Microsoft Key Lab for Intelligent Computing and Intelligent Systems \\
Department of Computer Science and Engineering \\
Shanghai Jiao Tong University \\
800 Dong Chuan Road, Shanghai, China 200240 \\
\texttt{zhihua@sjtu.edu.cn}
}
\date{\today}
\begin{document}

\maketitle

\begin{abstract}%
In this paper we study nonconvex penalization using Bernstein functions.
Since the Bernstein function is concave and nonsmooth at the origin, it can induce a class of nonconvex functions
for high-dimensional sparse estimation problems.
We derive a threshold function based on the Bernstein penalty and give its mathematical properties in sparsity modeling.
We show that a coordinate descent algorithm is especially appropriate for penalized regression problems with the Bernstein penalty.
Additionally, we prove that the Bernstein function can be defined as the concave conjugate of a $\varphi$-divergence and develop a conjugate maximization algorithm for finding the sparse solution.
Finally, we particularly exemplify a family of Bernstein nonconvex penalties based on a generalized Gamma measure and conduct
empirical analysis for this family.
\end{abstract}
\begin{keywords} nonconvex penalization, Bernstein functions, coordinate descent algorithms, the generalized Gamma measure,
$\varphi$-divergences, conjugate maximization algorithms
\end{keywords}

\section{Introduction}

Variable  selection  plays a fundamental role in statistical modeling for high-dimensional
data sets, especially when the underlying model has a sparse representation.
The approach based on penalty theory has been widely used for variable selection in the literature.
A principled approach is due to the lasso of \citet{TibshiraniLASSO:1996}, which employs the $\ell_1$-norm penalty and
performs variable selection via the soft threshold
operator. However, \citet{Fan01} pointed out
that the lasso shrinkage method produces
biased estimates for the large coefficients. \citet{ZouJasa:2006} argued that the lasso might not
be an oracle procedure under certain scenarios.

Accordingly,
\citet{Fan01} proposed three criteria for a good penalty function. That is, the resulting estimator should hold
\emph{sparsity}, \emph{continuity} and \emph{unbiasedness}. Moreover, \citet{Fan01} showed that a nonconvex penalty generally
admits the oracle properties.
This leads to recent developments of nonconvex penalization in sparse learning.
There exist many nonconvex penalties, including the $\ell_q$ ($q\in (0,1)$) penalty,  the
smoothly clipped absolute deviation (SCAD) \citep{Fan01},
the minimax concave plus penalty (MCP) \citep{Zhang2010mcp}, the kinetic energy plus  penalty (KEP)~\citep{ZhangTR:2013},  the capped-$\ell_1$ function~\citep{TZhangJMLR:10,ZhangZhang2012},  the nonconvex exponential penalty (EXP)~\citep{BradleyICML:1998,GaoAAAI:2011},
the LOG penalty~\citep{MazumderSparsenet:11,ArmaganDunsonLee},  etc.
These penalties have been demonstrated to have attractive properties theoretically and practically.

On one hand, nonconvex penalty functions typically have the tighter  approximation to the $\ell_0$-norm and hold the oracle properties~\citep{Fan01}.
On the other hand, they would yield computational challenges due to  nondifferentiability and nonconvexity.
Recently,  \citet{MazumderSparsenet:11}
developed  a SparseNet algorithm  base on  coordinate descent.
Especially,  the authors studied the coordinate descent algorithm for the MCP function
\citep[also see][]{BrehenyAAS:2010}. Moreover,  \cite{MazumderSparsenet:11} proposed some desirable properties for threshold operators based on nonconvex
penalties. For example, the threshold operator should be a strict nesting
w.r.t.\ a sparsity parameter.
However,  the authors  claimed that not all nonconvex penalties are suitable for use with  coordinate descent.

In this paper we introduce Bernstein functions into sparse estimation, giving rise to a unifying approach to  nonconvex penalization.
The Bernstein function is
a class of functions whose first-order derivatives are completely monotone~\citep{SSVBook:2010,FellerBook:1971}.
The Bernstein function can be formed as a class of sparsity-inducing nonconvex penalty functions. Moreover,
the Bernstein function has the L\'{e}vy-Khintchine representation.
We particularly exemplify a family of Bernstein nonconvex penalties based on a generalized Gamma measure~\citep{Aalen:1992,Brix:1999}.
The special cases include the KEP,
nonconvex LOG and EXP as well as a
penalty function that we call  linear-fractional (LFR) function. Moreover, we find that the MCP function is
a truncated special version.

The Bernstein function has attractive ability in sparsity modeling. Geometrically, the Bernstein function holds  the property of regular variation~\citep{FellerBook:1971}. That is,   the Bernstein function
bridges the $\ell_{q}$-norm ($0\leq q <1$) and the $\ell_1$-norm. Theoretically,
it admits the oracle properties and can results in a unbiased and continuous sparse estimator. Computationally, the resulting
estimation problem can be efficiently solved by using coordinate descent algorithms. Moreover, the corresponding
threshold operator has to some extend the nesting property~\citep{MazumderSparsenet:11}.

Another important contribution of this paper offers a new construction approach for Bernstein functions. That is,
we show that the Bernstein function can be be defined as concave conjugates of $\varphi$-divergences~\citep{CsiszarPHI:1967,CensorBook:1997}
under certain conditions. This construction illustrates an interesting
connection between LOG and EXP as well as between KEP and MCP~\citep{ZhangNIPS:2012,ZhangTR:2013}.
We note that
\citet{WipfNIPS:2008} used the idea of concave conjugate for expressing the
automatic relevance determination (ARD) cost function, and
\citet{TZhangJMLR:10} derived the bridge penalty by using the idea of concave conjugate.
To the best of our knowledge, however,
our work is the first time to uncover the intrinsic connection between the Bernstein function and the  $\varphi$-divergence.

Based on this new construction approach, we also develop a conjugate-maximization (CM) algorithm for solving  penalized regression problems.
The CM algorithm consists of a C-step and an M-step.
There is an interesting resemblance between CM and EM.
The C-step of CM calculates the concave conjugate of a $\varphi$-divergence with respect to an auxiliary (weight) vector,
while the E-step of EM the expected sufficient statistics with respect to missing data.
The M-steps of both CM and EM are to find the new estimate of the parameter vector in question.
Additionally, the CM algorithm shares the same convergence property with the conventional
EM algorithm~\citep{WuEM:1983}.

It is worth pointing out that the CM algorithm is related to the augmented Lagrangian method~\citep{NocedalWright:2006,CensorBook:1997}.
Additionally,
the CM algorithm enjoys the idea behind
the iterative reweighted $\ell_2$ or $\ell_1$ methods~\citep{ChartrandICASSP:2008,CandesWakinBoyd:2008,WipfNIPS:2008,Daubechies:2010,WipfNagarajan:2010,TZhangJMLR:10}.
Thus, CM also implies a so-called majorization-minimization (MM) procedure~\citep{HunterLiAS:2004}.
An attractive merit of the CM over the existing MM methods is its ability
in handling the choice of tuning parameters, which is a very important issue in nonconvex sparse regularization.

The remainder of this paper is organized as follows.
Section~\ref{sec:lapexp} exploits Bernstein functions in the construction of nonconvex penalties.
In Section~\ref{sec:sest} we investigate sparse estimation problems based on the Bernstein function and devise the coordinate descent algorithm for finding the sparse solution.
In Section~\ref{sec:math} we conduct theoretical analysis of the corresponding sparse estimation problem.
In Section~\ref{sec:cc} we study Bernstein penalty functions based on concave conjugate of the $\varphi$-divergence.
In Section~\ref{sec:cm} we devise the CM algorithm based on the the $\varphi$-divergence.
Finally, we conclude our work in
Section~\ref{sec:conclusion}. 

\section{Nonconvex Penalization via Bernstein Functions}
\label{sec:lapexp}

Suppose we are given a set of training
data $\{(\x_i, y_i): i=1,\ldots, n\}$, where
the $\x_i \in \RB^{p}$ are the input vectors and the $y_i$ are the corresponding
outputs. Moreover, we assume that $\sum_{i=1}^n \x_i=\0$ and $\sum_{i=1}^n y_i=0$. 
We now consider the following linear regression model:
\[
\y = \X \b + \vps,
\]
where $\y=(y_1, \ldots, y_n)^T$ is the $n{\times}1$ output vector,
$\X=[\x_1, \ldots, \x_n]^T$ is the $n {\times} p$
input matrix, and $\vps$ is a Gaussian error vector $N(\vps|\0, \sigma \I_n)$.  We aim to
find a sparse estimate  of regression vector $\b=( b_1, \ldots, b_p)^T$ under the regularization framework.

The classical regularization approach is based on
a penalty function of $\b$. That is,
\[
\min_{\b}  \;  \Big\{F(b) \triangleq   \frac{1}{2} \|\y {-} \X \b\|_2^2 +  P(\b; \lambda) \Big\},
\]
where $P(\cdot)$ is
the regularization term penalizing model complexity and $\lambda$ ($>0$) is the tuning  parameter of balancing the relative significance
of the loss function and the penalty.

A widely used setting for  penalty is $P(\b; \lambda)= \sum_{j=1}^p P(b_j; \lambda)$,
which implies that the penalty function consists of $p$ separable subpenalties.
In order to find a sparse solution of $\b$, one imposes the $\ell_0$-norm penalty $\|\b\|_0$ to  $\b$ (i.e., the number of nonzero elements of $\b$).
However, the resulting optimization problem is usually NP-hard. Alternatively,
the $\ell_1$-norm   $\|\b\|_1=  \sum_{j=1}^p |b_j|$ is an effective convex  penalty.
Recently, some nonconvex alternatives, such as the  log-penalty,
SCAD,  MCP and KEP, have been  employed. Meanwhile, iteratively reweighted $\ell_q$  ($q=1$ or $2$) minimization or coordination descent methods  were developed for finding
sparse solutions.

In this paper we are concerned with nonconvex penalization based on a Bernstein function~\citep{SSVBook:2010}.
Let $f \in C^{\infty}(0, \infty)$ with $f\geq 0$.
We say $f$ is completely monotone if $(-1)^k f^{(k)} \geq 0$ for all $k \in \NB$ and a Bernstein function
if $(-1)^k f^{(k)} \leq 0$ for all $k \in \NB$. It is well known that
$f$ is a Bernstein function if and only if the mapping $s \mapsto \exp(- t f(s))$ is completely monotone for all $t\geq 0$.
Additionally, $f$ is a Bernstein function if and only if it has the representation
\[
f(s) = a + \beta s + \int_{0}^{\infty} {\big[1 - \exp(-  s u) \big]  \nu(d u)} \; \mbox{ for all } s> 0,
\] 
where $a, \beta \geq 0$, and  $\nu$ is the L\'{e}vy measure satisfying additional requirements $\nu(-\infty, 0)=0$ and
$\int_{0}^{\infty} { \min(u, 1) \nu(d u) } < \infty$. Moreover, this representation is
unique. The representation is famous as the L\'{e}vy-Khintchine formula.

Since $\liml_{s\to 0} f(s) =a$ and $\liml_{s\to \infty} \frac{f(s)}{s}=\beta$~\citep{SSVBook:2010},
we will assume that $\liml_{s \to 0}f(s)=0$ and $\liml_{s\to \infty} \frac{f(s)}{s} =0$ to make $a=0$ and $\beta=0$.
Note that  $s^{q}$ for $q \in (0, 1)$ is a Bernstein function of $s$ on $(0, \infty)$ satisfying the above assumptions.
However, $f(s)=s$ is  Bernstein but does not satisfy  the condition $\liml_{s\to \infty} \frac{f(s)}{s} =0$. Indeed,
$f(s)=s$  is an extreme case because  $\beta=1$ and $\nu(d u) =\delta_{0}(u) d u$ (the Dirac Delta measure) in its L\'{e}vy-Khintchine formula.
In fact, the  condition $\liml_{s\to \infty} \frac{f(s)}{s} =0$ aims to exclude this Bernstein function for our concern in this paper.

\subsection{Bernstein Penalty Functions}

We  now define the penalty function $P(\b; \lambda)$ as $\lambda \sum_{j=1}^p \Phi(|b_j|)$,
where the penalty term $\Phi(s)$ is a Bernstein function  of $s$ on $(0, \infty)$ such that $\liml_{s \to 0}\Phi(s)=0$ and $\liml_{s \to \infty} \frac{\Phi(s)}{s} =0$.
Clearly,
$\Phi(s)$
is nonnegative, nondecreasing and concave on $(0, \infty)$, because $\Phi(s)\geq 0$, $\Phi'(s)\geq 0$ and $\Phi{''}(s)\leq 0$.
Moreover, we have the following theorem.

\begin{theorem} \label{thm:lapexp00}
Let $\Phi
(s)$ be a nonzero Bernstein function of $s$ on $(0, \infty)$. Assume $\liml_{s \to 0}\Phi(s)=0$ and $\liml_{s\to \infty} \frac{\Phi(s)}{s} =0$.
Then
\begin{enumerate}
\item[\emph{(a)}] $\Phi(|b|)$ is a nonnegative and nonconvex function of $b$ on $(-\infty, \infty)$, and an increasing function
of $|b|$ on  $[0, \infty)$.
\item[\emph{(b)}] $\Phi(|b|)$ is continuous  w.r.t.\ $b$ but nondifferentiable at the origin.
\end{enumerate}
\end{theorem}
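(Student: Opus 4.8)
The plan is to sharpen the three pointwise facts already recorded before the theorem --- $\Phi \geq 0$, $\Phi' \geq 0$ and $\Phi'' \leq 0$ on $(0,\infty)$ --- into their strict counterparts $\Phi'(s) > 0$ and $\Phi''(s) < 0$ for every $s > 0$, and then to read off all of the assertions in (a) and (b) essentially by inspection. The entire theorem, and in particular the nondifferentiability at the origin, hinges on $\Phi'(0^+) > 0$, so the real content is to exclude the two degenerate cases in which $\Phi'$ or $\Phi''$ vanishes identically; this is precisely where the standing hypotheses that $\Phi$ is nonzero and that $\lim_{s\to\infty}\Phi(s)/s = 0$ do their work.

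First I would establish the strict inequalities. By the definition of a Bernstein function, $(-1)^k \Phi^{(k)} \leq 0$ for all $k \in \NB$; rewriting the $(k{+}1)$-st of these as $(-1)^k (\Phi')^{(k)} = (-1)^k \Phi^{(k+1)} \geq 0$ shows that $\Phi'$ is completely monotone, and the same manipulation applied once more shows that $-\Phi''$ is completely monotone. A completely monotone function is the Laplace transform of a nonnegative measure, so it is either identically zero or strictly positive on $(0,\infty)$. If $\Phi' \equiv 0$ then $\Phi$ is constant and, by $\lim_{s\to 0}\Phi(s)=0$, equal to zero, contradicting the hypothesis that $\Phi$ is nonzero; hence $\Phi'(s) > 0$ throughout. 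If $-\Phi'' \equiv 0$ then $\Phi$ is affine, and $\Phi(0)=0$ forces $\Phi(s) = cs$, whence $\lim_{s\to\infty}\Phi(s)/s = c = 0$ again makes $\Phi$ zero; hence $\Phi''(s) < 0$ throughout. Thus $\Phi$ is strictly increasing and strictly concave on $(0,\infty)$.

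With these in hand, part (a) is immediate. Nonnegativity of $\Phi(|b|)$ follows from $\Phi \geq 0$ together with the convention $\Phi(0)=0$ coming from $\lim_{s\to 0}\Phi(s)=0$; the strictly increasing dependence on $|b|$ follows from $\Phi'(s) > 0$. For nonconvexity it suffices to exhibit a single failure of the convexity inequality, and the restriction of $\Phi(|b|)$ to $b > 0$ is exactly $\Phi$, which is strictly concave and nonlinear there; a function that is strictly concave on a subinterval cannot be convex, so $\Phi(|b|)$ is nonconvex on $(-\infty,\infty)$. Concretely one may take $0 < b_1 < b_2$ and invoke $\Phi(\tfrac{b_1+b_2}{2}) > \tfrac12(\Phi(b_1)+\Phi(b_2))$.

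For part (b), continuity is the composition of the continuous map $b \mapsto |b|$ with $\Phi$, the latter extended continuously to $s=0$ by $\Phi(0)=0$. The substantive claim is nondifferentiability at the origin, which I would settle by comparing the two one-sided derivatives. For $b > 0$ the derivative of $\Phi(|b|)$ is $\Phi'(b)$ and for $b < 0$ it is $-\Phi'(-b)$; equivalently, by the mean value theorem the right and left difference quotients of $\Phi(|b|)$ at $0$ converge to $\Phi'(0^+) = \lim_{t\to 0^+}\Phi(t)/t$ and to $-\Phi'(0^+)$ respectively. Because $\Phi'$ is nonincreasing (as $\Phi'' < 0$) and strictly positive, $\Phi'(0^+) = \sup_{s>0}\Phi'(s) > 0$, whether this supremum is finite (a corner, as for the exponential-type penalties) or $+\infty$ (a cusp, as for $s^q$ with $q \in (0,1)$). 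In either case the two one-sided derivatives are unequal, so $\Phi(|b|)$ is not differentiable at $b=0$. The one lemma worth isolating is that a nonzero completely monotone function cannot vanish anywhere on $(0,\infty)$, since it is this fact, rather than any computation, that carries the argument.
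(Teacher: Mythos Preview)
Your argument is correct. The paper does not give a separate proof of this theorem; it treats parts (a) and (b) as essentially immediate from the sign conditions $\Phi\geq 0$, $\Phi'\geq 0$, $\Phi''\leq 0$ recorded just before the statement, and the one nontrivial point---that $\Phi'(0^+)>0$, which is what forces the one-sided derivatives at the origin to disagree---is argued only later in the text (in the paragraph following Theorem~\ref{thm:lp2}) via the monotonicity $\Phi'(s)\leq\Phi'(0)$ together with the nonzero hypothesis. Your route through the dichotomy for completely monotone functions (identically zero or strictly positive everywhere) is a slightly cleaner device, since it yields the strict inequalities $\Phi'>0$ and $\Phi''<0$ on all of $(0,\infty)$ in one stroke and does not require splitting on whether $\Phi'(0^+)$ is finite; but the substance---ruling out the degenerate cases $\Phi'\equiv 0$ and $\Phi''\equiv 0$ using the hypotheses that $\Phi$ is nonzero and $\lim_{s\to\infty}\Phi(s)/s=0$---is the same as what the paper does informally.
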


Recall that under the conditions in Theorem~\ref{thm:lapexp00}, $a$ and $\beta$ in the L\'{e}vy-Khintchine formula vanish.
Theorem~\ref{thm:lapexp00} (b) shows that $\Phi'(|b|)$  is singular at the origin. Thus,   $\Phi(|b|)$  can define a
class of sparsity-inducing nonconvex penalty functions.
We can clearly see the connection of the bridge penalty $|b|^{\rho}$ with the $\ell_0$-norm and the $\ell_1$-norm
as $\rho$ goas from $0$ to 1. However, the sparse estimator resulted from the bridge penalty is not continuous.
This would make numerical computations and model predictions unstable~\citep{Fan01}.
In this paper we consider another class of Bernstein nonconvex penalties.

In particular, to explore the relationship of the Bernstein penalties with the $\ell_0$-norm and the $\ell_1$-norm,
we further assume that
$\liml_{s \to 0} \Phi'(s)<\infty$.
Since $\Phi(s)$ is a nonzero Bernstein function of $s$, we can conclude that $\Phi'(0)>0$.
If it is not true, we have $\Phi'(s)=0$ due to $\Phi'(s) \leq \Phi'(0)$.
This implies that $\Phi(s)=0$ for any $s \in (0, \infty)$
because $\Phi(0)=0$. This conflicts with that   $\Phi(s)$ is  nonzero.
Similarly, we can also deduce $\Phi{''}(0) < 0$. Based on this fact, we can change the assumption
$\Phi'(0)<\infty$ as
$\Phi'(0)=1$  without loss of generality. In fact, we can replace $\Phi(s)$ with $\frac{\Phi(s)}{\Phi'(0)}$ to met this assumption, because
the resulting $\Phi(s)$ is still Bernstein and satisfies $\Phi(0)=0$, $\liml_{s \to \infty} \frac{\Phi(s)}{s} =0$
and $\Phi'(0) = 1$.

\begin{theorem} \label{thm:lp2}
Assume the conditions in Theorem~\ref{thm:lapexp00} hold. If
$\Phi'(0) = \liml_{s \to 0} \Phi'(s)= 1$, then
\[
\lim_{\alpha \to 0+} \frac{\Phi(\alpha |b|)}{\Phi(\alpha)} = |b|.
\]
Furthermore, if  $\liml_{s \to \infty} \frac{{s \Phi'(s)}}{\Phi(s)}$ exists,
then for $b\neq 0$,
\[
\lim_{\alpha \to \infty} \frac{\Phi(\alpha |b|)}{\Phi(\alpha)} = |b|^{\gamma},
\]
where $\gamma=\liml_{s \to \infty} \frac{{s \Phi'(s)}}{\Phi(s)} \in [0, 1)$.
Especially, if $\gamma \in (0, 1)$, we also have
\[
 \lim_{\alpha \to \infty} \frac{ \Phi'(\alpha |b|)}{\Phi'(\alpha)} =|b|^{\gamma{-}1}.
\]
\end{theorem}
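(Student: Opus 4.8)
The plan is to treat the theorem as three regular-variation statements and to reduce each one to the behaviour of the logarithmic derivative $\Phi'(s)/\Phi(s)$. Throughout I would use two elementary consequences of the hypotheses of Theorem~\ref{thm:lapexp00} together with $\Phi'(0)=1$. First, since $\Phi(0)=0$ and $\Phi$ is differentiable at the origin with $\Phi'(0)=1$, we have $\Phi(s)/s \to 1$ as $s\to 0^+$. Second, since $\Phi$ is concave with $\Phi(0)=0$, the tangent inequality $\Phi(0)\le \Phi(s)-s\Phi'(s)$ yields $s\Phi'(s)\le \Phi(s)$, so that $0\le s\Phi'(s)/\Phi(s)\le 1$ for every $s>0$; this uniform bound is what makes the limiting index live in $[0,1]$.

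For the first limit ($\alpha\to 0^+$) I would simply factor, for $b\neq 0$,
\[
\frac{\Phi(\alpha|b|)}{\Phi(\alpha)} = \frac{\Phi(\alpha|b|)/(\alpha|b|)}{\Phi(\alpha)/\alpha}\,|b|,
\]
and let $\alpha\to 0^+$; both the numerator and the denominator of the leading quotient tend to $\Phi'(0)=1$, leaving $|b|$, while the case $b=0$ is immediate from $\Phi(0)=0$. The core of the theorem is the second limit, which I would obtain by writing the ratio in logarithmic form and integrating the logarithmic derivative,
\[
\log\frac{\Phi(\alpha|b|)}{\Phi(\alpha)} = \int_{\alpha}^{\alpha|b|}\frac{\Phi'(t)}{\Phi(t)}\,dt = \int_{1}^{|b|}\frac{\alpha u\,\Phi'(\alpha u)}{\Phi(\alpha u)}\,\frac{du}{u},
\]
using the substitution $t=\alpha u$. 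As $\alpha\to\infty$ the integrand converges, for each fixed $u>0$, to $\gamma/u$ because $s\Phi'(s)/\Phi(s)\to\gamma$; since $\alpha u\,\Phi'(\alpha u)/\Phi(\alpha u)\le 1$ by the concavity bound, the integrand is dominated by the integrable function $1/u$ on the compact interval between $1$ and $|b|$, so dominated convergence gives $\int_1^{|b|}\gamma\,du/u=\gamma\log|b|$. Exponentiating yields $\Phi(\alpha|b|)/\Phi(\alpha)\to|b|^\gamma$, and the same concavity bound already forces $\gamma\le 1$.

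Two points then remain, and the strict inequality $\gamma<1$ is the step I expect to be the main obstacle: the concavity bound gives only $\gamma\le 1$, and the standing condition $\Phi(s)/s\to 0$ is in fact automatic for every Bernstein function with $\beta=0$, so it supplies nothing extra by itself. To exclude $\gamma=1$ I would argue through the L\'{e}vy--Khintchine representation: writing $\Phi(s)-s\Phi'(s)=\int_0^\infty\bigl[1-(1+su)e^{-su}\bigr]\nu(du)$, the identity $\gamma=1-\lim_{s\to\infty}(\Phi(s)-s\Phi'(s))/\Phi(s)$ shows that $\gamma=1$ forces $\Phi(s)-s\Phi'(s)=o(\Phi(s))$, which via a Karamata/Tauberian comparison requires the tail $\nu((x,\infty))$ to be regularly varying of index $-1$ as $x\to 0^+$, contradicting the Lévy integrability requirement $\int_0^1 u\,\nu(du)<\infty$. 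Hence $\gamma\in[0,1)$. Finally, for the derivative limit under $\gamma\in(0,1)$, I would recombine the pieces as
\[
\frac{\Phi'(\alpha|b|)}{\Phi'(\alpha)} = \frac{\alpha|b|\,\Phi'(\alpha|b|)/\Phi(\alpha|b|)}{\alpha\,\Phi'(\alpha)/\Phi(\alpha)}\cdot\frac{\Phi(\alpha|b|)}{\Phi(\alpha)}\cdot\frac{1}{|b|},
\]
and let $\alpha\to\infty$: the first factor tends to $\gamma/\gamma=1$ (this is exactly where $\gamma>0$ is needed to avoid a $0/0$ form), the second tends to $|b|^\gamma$ by the second limit, and the third is $|b|^{-1}$, so the product tends to $|b|^{\gamma-1}$.
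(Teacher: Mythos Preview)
Your arguments for the three limits are correct and, unlike the paper, self-contained. For $\alpha\to 0^+$ you and the paper do essentially the same thing (the paper writes it as L'H\^opital). For $\alpha\to\infty$ the paper simply invokes Theorem~1 in Chapter~VIII.9 of Feller (the Karamata characterization of regular variation), whereas your logarithmic-integral plus dominated-convergence computation proves that characterization directly in the special situation at hand; this is a genuinely more elementary route and also makes transparent why only the boundedness $s\Phi'(s)/\Phi(s)\le 1$ is needed to justify the passage to the limit. Your recombination for the derivative ratio is likewise correct and neatly isolates why $\gamma>0$ is required.

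The real gap is in your exclusion of $\gamma=1$. The implication you state---that $\bar\nu(x)=\nu((x,\infty))$ regularly varying of index $-1$ at $0^+$ contradicts $\int_0^1 u\,\nu(du)<\infty$---is false. Take $\bar\nu(x)\sim x^{-1}(\log(1/x))^{-2}$ near $0$; this is regularly varying of index $-1$, yet $u\,\nu(du)\sim u^{-1}(\log(1/u))^{-2}\,du$, which is integrable on $(0,1/2)$. So index $-1$ regular variation of the tail is perfectly compatible with the L\'evy integrability condition (and even with the stronger $\int_0^\infty u\,\nu(du)=\Phi'(0)=1$). The Tauberian step preceding it is also more delicate than you indicate: from $\Phi(s)/s$ slowly varying one gets an asymptotic for $\int_0^x\bar\nu$, and passing to $\bar\nu$ itself needs an additional monotone-density argument.

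The paper avoids this entirely. Its Lemma~\ref{lem:03} proves $\gamma<1$ by composing with a second Bernstein function: set $h(s)=\log(1+\Phi(s))$, note $h$ is again Bernstein with $h(0)=0$, $h'(0)=1$, and observe
\[
\lim_{s\to\infty}\frac{h(s)}{\log(1+s)}=\lim_{s\to\infty}\frac{s\Phi'(s)}{\Phi(s)}=\gamma.
\]
Lemma~\ref{lem:88} then shows $h(s)/\log(1+s)$ is eventually decreasing, while concavity and $\Phi(s)/s\to 0$ give $\Phi(M)<M$, hence $h(M)/\log(1+M)<1$ for some large $M$; monotonicity forces the limit strictly below $1$. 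If you want to keep your proof self-contained, this compositional trick (or an equivalent argument showing $h(s)/\log(1+s)$ is eventually decreasing) is what you need to replace your Tauberian sketch.
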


\paragraph{Remarks 1} \; It is worth noting that  $\Phi'(s)$ is completely monotone on $(0, \infty)$.
Moreover,  $\Phi'(s)$ is the Laplace transform of some probability distribution due to $\Phi'(0)=1$~\citep{FellerBook:1971}.
Additionally, Lemma~\ref{lem:lapexp} (see the appendix) shows that $\liml_{s \to \infty} \frac{{s \Phi'(s)}}{\Phi(s)}= 0$
whenever $\liml_{s \to \infty} \Phi(s)< \infty$. If $\liml_{s \to \infty} \Phi(s)= \infty$, we take $\Psi(s) \triangleq \log(1+\phi(s))$
which is also Bernstein and holds the conditions $\Psi(0)=0$,  $\Psi'(0)=1$ and $\Psi'(\infty)=0$.  In this case,
consider $\Psi'(s) = \frac{\Phi'(s)}{1+ \Phi(s)}$ and $\liml_{s \to \infty} \frac{{s \Phi'(s)}}{\Phi(s)}=  \liml_{s \to \infty} s \Psi'(s)$.
Thus, Lemma~~\ref{lem:lapexp}-(b)  directly applies the Bernstein function $\Psi(s)$. In summary, the condition ``$\liml_{s \to \infty} \frac{{s \Phi'(s)}}{\Phi(s)}$ exists" is essentially natural.

\paragraph{Remarks 2} \;
It follows from Theorem~1 in Chapter VIII.9 of \cite{FellerBook:1971} that $\liml_{s \to \infty} \frac{{s \Phi'(s)}}{\Phi(s)}= \gamma \in (0, 1)$ if and only if
$ \liml_{\alpha \to \infty} \frac{ \Phi'(\alpha |b|)}{\Phi'(\alpha)} =|b|^{\gamma{-}1}$.
However, $\liml_{\alpha \to \infty} \frac{ \Phi'(\alpha |b|)}{\Phi'(\alpha)} =|b|^{{-}1}$ (i.e., $\gamma=0$) is only sufficient for $\liml_{s \to \infty} \frac{{s \Phi'(s)}}{\Phi(s)}=0$.
It is also seen from Lemma~\ref{lem:01} in the appendix that  $\liml_{s \to \infty} \frac{\Phi(s)}{\log (s)}< \infty$ is a sufficient condition for $\liml_{s \to \infty} \frac{{s \Phi'(s)}}{\Phi(s)}=0$ and from Lemma~\ref{lem:03} in the appendix that $\liml_{s \to \infty} \frac{{s \Phi'(s)}}{\Phi(s)}=\gamma \in [0, 1)$.

The second part of Theorem~\ref{thm:lp2} shows that the property of regular variation for the Bernstein function $\Phi(s)$ and its
derivative $\Phi'(s)$~\citep{FellerBook:1971}. That is, $\Phi(s)$ and $\Phi'(s)$ vary regularly with exponents  $\gamma$ and $\gamma{-}1$,
respectively. If  $\liml_{s \to \infty} \frac{\Phi(s)}{\log (s)}<\infty$, then $\Phi(s)$  varies slowly (i.e., $\gamma=0$).
This property
implies an important connection of the Bernstein function with the $\ell_0$-norm and $\ell_1$-norm. With this connection,
we see that $\alpha$ plays a role of sparsity parameter because it measures sparseness of $\Phi(\alpha |\b|)/\Phi(\alpha)$.
In the following we present a  family of
Bernstein functions which admit the properties in Theorem~\ref{thm:lp2}.

\begin{table}[!ht]
\begin{center}
\caption{Several Bernstein  functions $\Phi_{\rho}(s)$ on $[0, \infty)$ as well as their  derivatives} \label{tab:exam}
\begin{tabular}{llll}
  \hline
 & Bernstein functions & First-order derivatives &  L\'{e}vy measures \\ \hline
KEP & $\Phi_{-1}(s) =  \sqrt{2 s {+}1} -1 $ & $\Phi_{-1}'(s)=  \frac{1}{\sqrt{2  s +1}}$
& $\nu(du) = \frac{1}{ \sqrt{2\pi}} u^{{-}\frac{3}{2}}  \exp({-} \frac{u}{2}) d u$\\
LOG &  $\Phi_{0}(s)=  \log\big(s  {+}1 \big)$ & $\Phi_{0}'(s)=  \frac{1}{s  {+}1}$ &
  $\nu(du) = \frac{1}{ u} \exp( {-} {u}) d u$ \\
LFR & $\Phi_{{1}/{2}}(s) =  \frac{2 s}{ s +2}$   & $\Phi'_{{1}/{2}}(s) =  \frac{4 }{({s}+{2})^2}$
&  $\nu(d u) = {4} \exp(- {2 u} ) d u$ \\
EXP & $\Phi_{1}(s) =  1- \exp(-  s)$ & $\Phi_{1}'(s) =   \exp(- s)$ & $\nu(d u) =  \delta_{1}(u) d u$ \\
\hline
\end{tabular}
\end{center}
\end{table}

\subsection{Examples}

We consider a family of
Bernstein functions of the form
\begin{equation} \label{eqn:first}
\Phi_{\rho}(s) = \left\{\begin{array}{ll} \log(1+ s) & \mbox{if } \rho=0, \\
  \frac{1}{\rho}\Big[1- \big(1+ {(1{-}\rho)} s \big)^{-\frac{\rho}{1-\rho}} \Big]  & \mbox{if } \rho <1 \mbox{ and } \rho\neq 0, \\  1- \exp(- s) & \mbox{if } \rho=1. \end{array}
  \right.
\end{equation}
It can be directly verified that $\Phi_{0}(s) = \liml_{\rho\to 0} \Phi_{\rho}(s)$ and $\Phi_{1}(s) = \liml_{\rho\to 1-} \Phi_{\rho}(s)$.
The corresponding L\'{e}vy measure is
\begin{equation} \label{eqn:first_nu}
\nu(d u) =  \frac{((1{-}\rho))^{-1/(1{-}\rho)}}{\Gamma(1/(1{-}\rho))} u^{\frac{\rho}{1-\rho}-1} \exp\Big( {-} \frac{u}{(1{-}\rho) } \Big) d u.
\end{equation}
Note that ${u}  \nu(d u)$ forms a Gamma measure for random variable $u$. Thus,
this L\'{e}vy measure $\nu(d u)$ is referred to as a generalized Gamma measure~\citep{Brix:1999}.
This family of the Bernstein functions were studied by \cite{Aalen:1992} for survival analysis.
We here show that they can be also used for sparsity modeling.

It is easily seen that the Bernstein functions $\Phi_{\rho}(s)$ for $\rho \leq  1$ satisfy the conditions:
$\Phi(0)=0$,
$\Phi'(0) =1$ and $(-1)^{k}\Phi^{(k+1)}(0)< \infty$ for $k \in \NB$, in Theorem \ref{thm:lp2} and Lemma~\ref{lem:lapexp} (see the appendix). Thus,
$\Phi_{\rho}(s)$ for  $\rho  \leq  1$ have the properties given in Theorem \ref{thm:lp2} and Lemma~\ref{lem:lapexp}.
These properties show that when letting $s=|b|$, the Bernstein functions $\Phi(|b|)$ form nonconvex penalties.

The derivative of $\Phi_{\rho}(s)$ is defined by
\begin{equation} \label{eqn:deriv}
\Phi'_{\rho}(s) = \left\{\begin{array}{ll} \frac{1}{1 {+} s} & \mbox{if } \rho=0, \\
\big(1+ {(1{-}\rho)} s \big)^{-\frac{1}{1-\rho}}  & \mbox{if } \rho <1 \mbox{ and } \rho\neq 0, \\  \exp(- s) & \mbox{if } \rho=1. \end{array}
  \right.
\end{equation}
It  is also directly verified that $\Phi'_{0}(s) = \liml_{\rho\to 0} \Phi'_{\rho}(s)$ and $\Phi'_{1}(s) = \liml_{\rho\to 1-} \Phi'_{\rho}(s)$.
When $\rho \in [0, 1]$, we have $\liml_{s \to \infty} \frac{s \Phi'_{\rho}(s)}{\Phi_{\rho}(s)} =0$ (or $\liml_{s\rightarrow \infty} \frac{\Phi(s)}{\log(s)}<\infty$).
When  $\rho < 0$, we then have $\liml_{s \to \infty} \frac{s \Phi'_{\rho}(s)}{\Phi_{\rho}(s)} = \frac{\rho}{\rho{-}1} \in (0, 1)$.

\begin{proposition}\label{pro:33} Let $\Phi_{\rho}(s)$ on $(0, \infty)$ be defined in (\ref{eqn:first}). Then
\begin{enumerate}
\item[\emph{(a)}] If $-\infty<\rho_1< \rho_2\leq 1$ then $\Phi'_{\rho_1}(s)\geq \Phi'_{\rho_2}(s)$ and $\Phi_{\rho_1}(s)\geq \Phi_{\rho_2}(s)$;
\item[\emph{(b)}] $\liml_{\alpha \to \infty} \frac{ \Phi'_{\rho}( \alpha)}{\alpha^{\gamma-1}} = (1-\gamma)^{1-\gamma}$ where $\gamma=0$ if $\rho \in (0, 1]$ and $\gamma=\frac{\rho}{\rho{-}1}$ if $\rho \in (-\infty, 0]$, and
\[
\liml_{\alpha \to \infty} \frac{\Phi_{\rho}(\alpha s)}{\Phi_{\rho}(\alpha )}= \left\{\begin{array}{ll} 1 & \mbox{if } \rho \in [0, 1], \\
s^{\frac{\rho}{\rho{-}1}} & \mbox{if } \rho \in (-\infty, 0). \end{array} \right.
\]
\end{enumerate}
\end{proposition}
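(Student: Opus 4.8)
The plan is to base everything on the explicit closed forms (\ref{eqn:first}) and (\ref{eqn:deriv}), organized around the substitution $t = 1-\rho \in [0,\infty)$, which collapses the generic derivative into the single expression $\Phi'_\rho(s) = (1+ts)^{-1/t}$, with the boundary values $t=1$ (i.e.\ $\rho=0$) recovering $1/(1+s)$ and $t\to 0^+$ (i.e.\ $\rho=1$) recovering $e^{-s}$ by continuity. Working with this one formula avoids repeatedly splitting into the three branches of the piecewise definition.

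For part (a) I would first establish the ordering of the derivatives. Fixing $s>0$ and writing $g(t) = (1+ts)^{-1/t} = \exp(-t^{-1}\log(1+ts))$, I differentiate the logarithm to get $\frac{d}{dt}\log g(t) = -t^{-2}\big[\tfrac{ts}{1+ts} - \log(1+ts)\big]$. The bracket is nonpositive by the elementary inequality $\log(1+x)\geq x/(1+x)$ (take $x=ts$), so $g$ is nondecreasing in $t$; since $t=1-\rho$, this means $\Phi'_\rho(s)$ is nonincreasing in $\rho$, giving $\Phi'_{\rho_1}(s)\ge\Phi'_{\rho_2}(s)$ whenever $\rho_1<\rho_2$. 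The ordering of the functions themselves then drops out from $\Phi_\rho(0)=0$ and $\Phi_\rho(s)=\int_0^s\Phi'_\rho(u)\,du$: integrating the pointwise inequality yields $\Phi_{\rho_1}(s)\ge\Phi_{\rho_2}(s)$. The only care is to confirm the derivative formula and $\Phi_\rho(0)=0$ extend continuously across $\rho=0$ and $\rho=1$.

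For part (b) I would compute the two limits directly. For the derivative limit when $\rho\le 0$, note that $1-\gamma = 1/(1-\rho)$ and $\gamma-1 = -1/(1-\rho)$, so $\frac{\Phi'_\rho(\alpha)}{\alpha^{\gamma-1}} = \big(\tfrac1\alpha + (1-\rho)\big)^{-1/(1-\rho)} \to (1-\rho)^{-1/(1-\rho)} = (1-\gamma)^{1-\gamma}$ (the value $\rho=0$ is the sanity check $\alpha/(1+\alpha)\to 1$). For the ratio of the functions I would isolate the dominant term: when $\rho\in(0,1)$ both $\Phi_\rho(\alpha s)$ and $\Phi_\rho(\alpha)$ converge to the finite value $1/\rho$ (since $(1+(1-\rho)\alpha)^{-\rho/(1-\rho)}\to 0$), so the ratio tends to $1$; the cases $\rho=0$ and $\rho=1$ give $\log(1+\alpha s)/\log(1+\alpha)\to 1$ and $(1-e^{-\alpha s})/(1-e^{-\alpha})\to 1$; and for $\rho<0$ the power term diverges and dominates, so $\big((1-\rho)\alpha s\big)^{-\rho/(1-\rho)}/\big((1-\rho)\alpha\big)^{-\rho/(1-\rho)} \to s^{-\rho/(1-\rho)} = s^{\rho/(\rho-1)}$.

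The main obstacle is not any single computation but keeping the case bookkeeping honest, and in particular reconciling the two exponents in play. I expect the delicate point to be the derivative limit for $\rho\in(0,1]$: although the statement assigns $\gamma=0$ there (correct for $\Phi_\rho$, which is bounded and hence slowly varying), the derivative $\Phi'_\rho$ itself decays like $\alpha^{-1/(1-\rho)}$, which is strictly faster than $\alpha^{\gamma-1}=\alpha^{-1}$, so in fact $\Phi'_\rho(\alpha)/\alpha^{-1}\to 0$ rather than to $(1-\gamma)^{1-\gamma}=1$. I would therefore state the first displayed limit only for $\rho\le 0$, where $\Phi'_\rho$ genuinely varies regularly with index $\gamma-1$ and the constant $(1-\gamma)^{1-\gamma}$ is sharp, and let the $\rho\in(0,1]$ regime be captured by the second limit, which does hold. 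Tying back to Theorem~\ref{thm:lp2}, that second limit identifies the regular-variation exponent of $\Phi_\rho$ as $\gamma$, consistent with $\gamma=\liml_{s\to\infty} s\Phi'(s)/\Phi(s)$.
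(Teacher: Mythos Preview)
Your approach to part~(a) is essentially the paper's own: the paper substitutes $\omega=\tfrac{1}{1-\rho}$ (the reciprocal of your $t$), writes $\Phi'_\rho(s)=1/g(\omega)$ with $g(\omega)=(1+s/\omega)^{\omega}$, and appeals to the classical fact that $(1+s/\omega)^{\omega}$ is increasing in $\omega$ toward $e^{s}$; you prove the equivalent monotonicity by differentiating $\log g(t)$ and invoking $\log(1+x)\ge x/(1+x)$. Both then integrate from $0$ using $\Phi_\rho(0)=0$. So for (a) your proposal and the paper coincide up to the choice of parameter and the level of detail.

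For part~(b) the paper simply says the proof is immediate and omits it, so your explicit limit computations add content. More to the point, your diagnosis of the $\rho\in(0,1]$ case is correct: for such $\rho$ one has $\Phi'_\rho(\alpha)\sim((1-\rho)\alpha)^{-1/(1-\rho)}$, which is $o(\alpha^{-1})$, so $\alpha\,\Phi'_\rho(\alpha)\to 0$ rather than to $(1-\gamma)^{1-\gamma}=1$ (e.g.\ LFR gives $4\alpha/(\alpha+2)^2\to 0$, EXP gives $\alpha e^{-\alpha}\to 0$). Thus the first limit in~(b), read literally with $\gamma=0$, fails on $(0,1]$; the formula $(1-\gamma)^{1-\gamma}$ is sharp only for $\rho\le 0$, where indeed $1-\gamma=1/(1-\rho)$ and your computation $\big(\alpha^{-1}+(1-\rho)\big)^{-1/(1-\rho)}\to(1-\rho)^{-1/(1-\rho)}$ goes through. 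This is consistent with the paper's later Theorem~\ref{thm:oracle2}, which explicitly allows $c_0=0$ when $\gamma=0$, so the discrepancy is in the statement of Proposition~\ref{pro:33}-(b) rather than in anything you have done. Your plan to restrict the derivative limit to $\rho\le 0$ and let the second (ratio) limit carry the $\rho\in(0,1]$ regime is the right fix.
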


Proposition~\ref{pro:33}-(b) shows the property of regular variation for $\Phi_{\rho}(s)$; that is, $\Phi_{\rho}(s)$ varies slowly when $0 \leq \rho \leq 1$, while it
varies regularly with exponent $\rho/(\rho{-}1)$ when $\rho < 0$.
Thus, $\frac{\Phi_{\rho}(\alpha |b|)}{\Phi_{\rho}(\alpha )}$ for $\rho < 0$ approaches to the $\ell_{\rho/(\rho{-}1)}$-norm $\|b\|_{\rho/(\rho{-}1)}$
as $\alpha \to \infty$.

We list four special Bernstein functions in Table~\ref{tab:exam} by taking different $\rho$. Specifically,
these  penalties are  the kinetic energy plus (KEP) function, nonconvex \emph{log-penalty} (LOG),
nonconvex \emph{exponential-penalty} (EXP), and \emph{linear-fractional} (LFR) function, respectively. Figure~\ref{fig:berns} depicts these functions and their derivatives.
In Table~\ref{tab:exam} we also give the L\'{e}vy measures  corresponding to these functions.
Clearly, KEP gets a continuum of penalties from $\ell_{1/2}$ to the $\ell_1$, as varying $\alpha$
from $\infty$ to $0$~\citep{ZhangTR:2013}. But the LOG, EXP and LFR penalties get the entire continuum of penalties from $\ell_{0}$ to the $\ell_1$.
The LOG, EXP and LFR penalties  have been applied in the literature~\citep{BradleyICML:1998,GaoAAAI:2011,WestonJMLR:2003,GemanPAMI:1992,NikolovaSIAM:2005}.
In image processing and computer vision, these functions are usually also called \emph{potential functions}.
However, to the best of our  knowledge, there is no work to establish their connection with Bernstein functions.

\begin{figure}[!ht]
\centering
\subfigure[$\Phi_{\rho}(s)$ ]{\includegraphics[width=75mm,height=50mm]{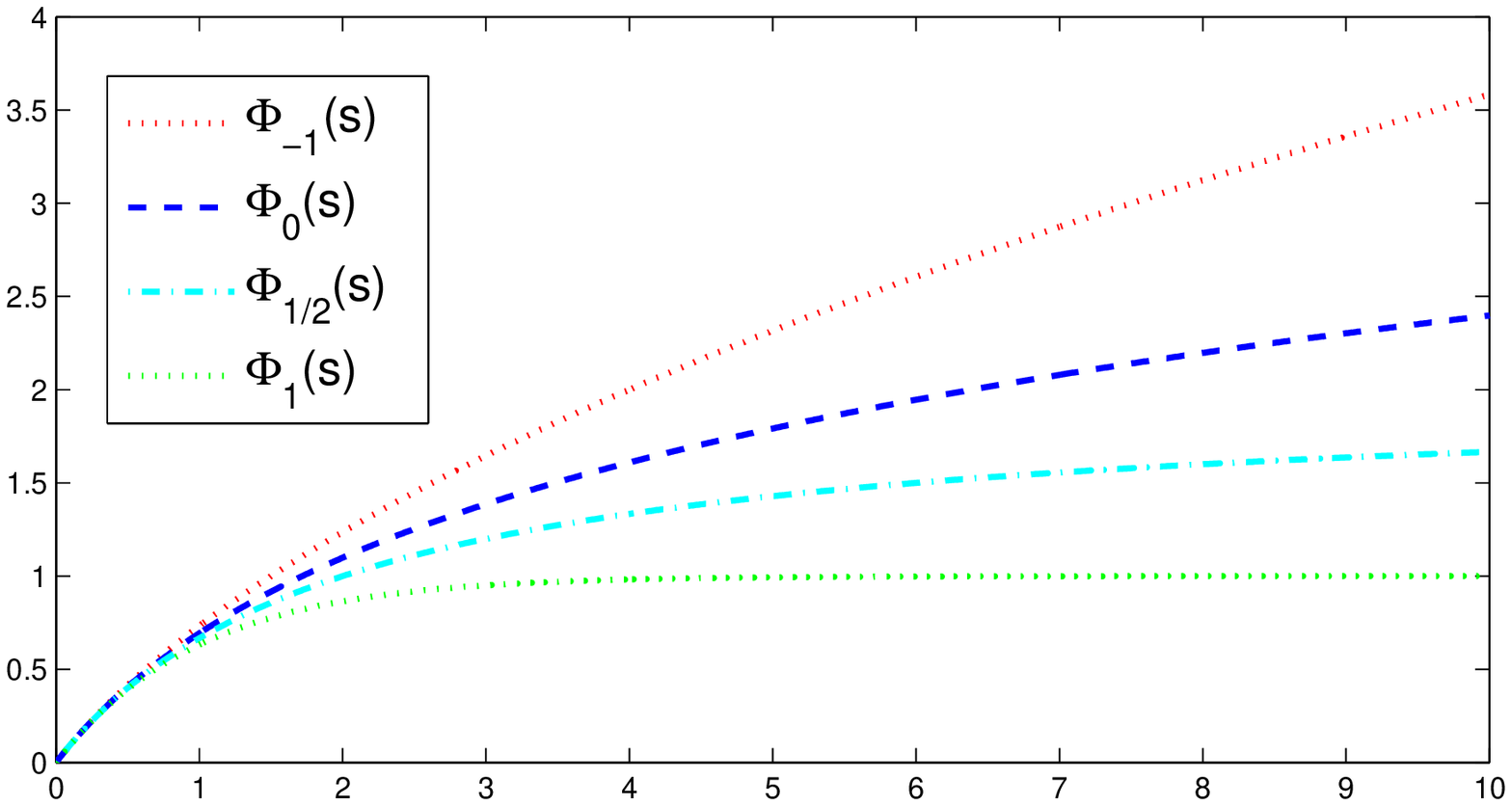}} \subfigure[$\Phi'_{\rho}(s)$]{\includegraphics[width=75mm,height=50mm]{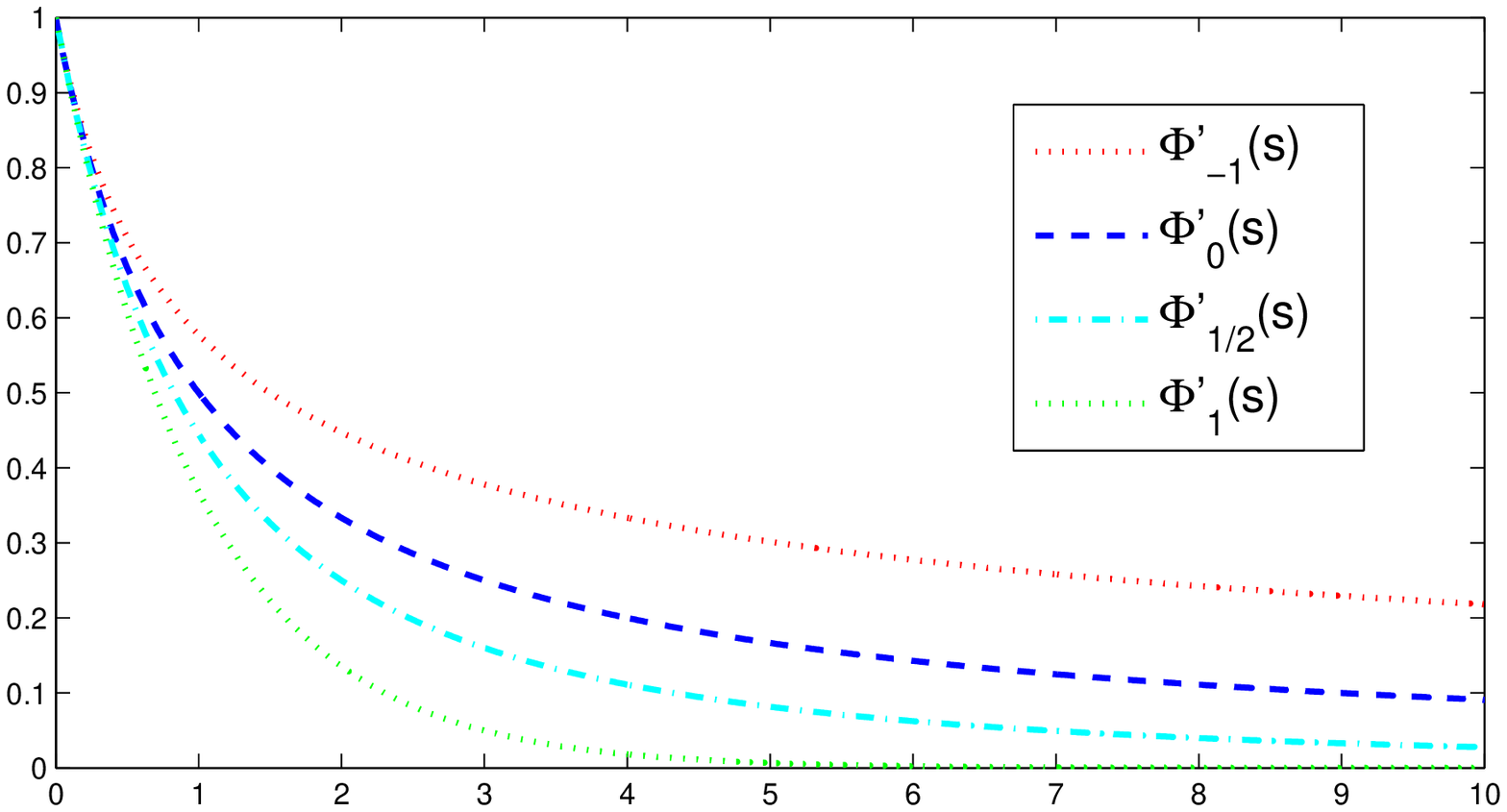}} \\
\caption{(a) The Bernstein functions $\Phi_{\rho}(s)$ for $\rho=-1$, $\rho=0$, $\rho=\frac{1}{2}$ and $\rho=1$ corresponding to KEP, LOG, LFR and EXP.  (b) The corresponding derivatives $\Phi'_{\rho}(s)$.}
\label{fig:berns}
\end{figure}

\begin{figure}[!ht]
\centering
 \subfigure[]{\includegraphics[width=100mm,height=70mm]{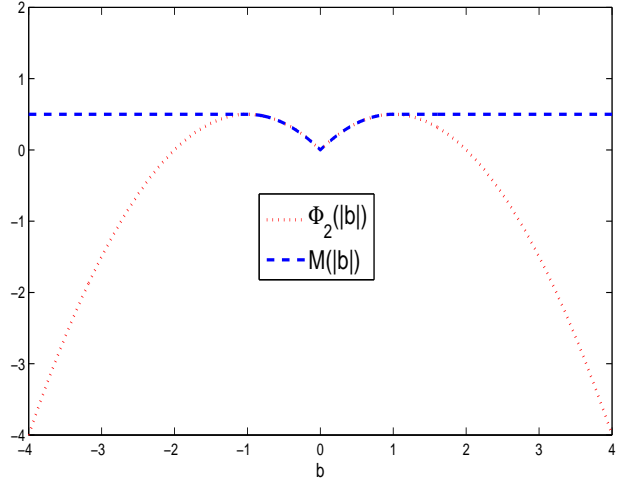}} \\
\caption{The Bernstein function $\Phi_{2}(|b|)$ and the MCP function $M(|b|)$.}
\label{fig:berns2}
\end{figure}

Finally, we note that the MCP function can be regarded as a truncated version of  $\Phi_{2}(s)$ (i.e.,  $\rho=2$).
Clearly,  $\Phi_{2}(s)$ is well-defined for $s\geq 0$ but no longer Bernstein, because $\Phi_{2}(s)$ is negative when $s> 2$.
Moreover,
it is decreasing when $s\geq 1$ (see Figure~\ref{fig:berns2}).  To make a concave penalty function from $\Phi_{2}(s)$, we truncate
$\Phi_{2}(s)$ as $1/2$ whenever $s\geq 1$,  yielding the MCP function. That is,
\begin{equation} \label{eqn:mcp}
 M(\alpha s) = \left\{\begin{array}{ll} \frac{1}{2} & \mbox{ if } s\geq \frac{1}{\alpha}, \\
\alpha s - \frac{\alpha^2 s^2}{2} & \mbox{ if } s < \frac{1}{\alpha}.  \end{array} \right.
\end{equation}

\section{Sparse Estimation Based on Bernstein Penalty Functions}
\label{sec:sest}

We now study  mathematical  properties of  the sparse estimators based on Bernstein penalty functions.
These properties show that Bernstein penalty functions are suitable for use of a coordinate descent algorithm~\citep{MazumderSparsenet:11}.

\subsection{Threshold Operators}
\label{sec:threshold}

Let $\Phi(|b|)$ be a Bernstein penalty function.
Following  \cite{Fan01}, we  define the univariate penalized least squares problem
\begin{equation} \label{eqn:general}
J_{1}(b)\triangleq \frac{1}{2} (z- b)^2 +  {\lambda} \Phi(|b|),
\end{equation}
where $z=\x^T\y$.  \citet{Fan01} stated that a good penalty should result in an estimator with three properties.
(a) ``Unbiasedness:" it is nearly unbiased when the true unknown parameter is large; (b) ``Sparsity:" it is a threshold rule, which
automatically sets small estimated coefficients to zero; (c) ``Continuity:" it is continuous in $z$ to avoid instability
in model computation and prediction.

According to the discussion in \citet{Fan01}, the resulting
estimator from (\ref{eqn:general}) is nearly unbiased if $\Phi'(|b|) \rightarrow 0$ as $|b|\rightarrow \infty$.
The Bernstein penalty function satisfies the conditions
$\Phi(0+)=0$ and
$\liml_{s \to \infty} \Phi'(s) = 0$, so it can result in an unbiased sparse estimator.

\begin{theorem} \label{thm:sparsty} Let $\Phi(s)$ be a nonzero Bernstein function of $s$ on $(0, \infty)$ such that
$\Phi(0)= 0$  and $\liml_{s \to \infty} \frac{\Phi(s)}{s}=\liml_{s \to \infty} \Phi'(s) = 0$.
Consider the penalized least squares problem in (\ref{eqn:general}).
\begin{enumerate}
\item[\emph{(i)}] If $\lambda \leq -\frac{1}{\Phi{''}(0)}$, then the resulting estimator  is defined as
\[
\hat{b} = S(z, \lambda) \triangleq  \left\{ \begin{array}{ll}
{{\sgn}(z)} \kappa(|z|) & \textrm{ if } |z| >
{\lambda} \Phi{'}(0), \\
0 & \textrm{ if } |z| \leq {\lambda}\Phi{'}(0),
\end{array} \right.
\]
where $\kappa(|z|)\in (0, |z|)$ is the unique positive root of
$b {+} {\lambda} \Phi'(b) {-} |z|=0$ in $b$.
\item[\emph{(ii)}] If $\lambda > -\frac{1}{\Phi{''}(0)}$, then the resulting estimator  is defined as
\[
\hat{b} = S(z, \lambda) \triangleq  \left\{ \begin{array}{ll}
{{\sgn}(z)} \kappa(|z|) & \textrm{ if } |z| >
s^* + {\lambda} \Phi'(s^*), \\
0 & \textrm{ if } |z|\leq s^* + {\lambda} \Phi'(s^*),
\end{array} \right.
\]
where $s^*>0$ is the unique root of $1 {+} {\lambda} \Phi{''}(s)=0$ and $\kappa(|z|)$
is the unique root of $b {+} {\lambda} \Phi'(b) {-} |z|=0$ on $(s^*, |z|)$.
\end{enumerate}
\end{theorem}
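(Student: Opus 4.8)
The plan is to reduce the problem to a one‑sided scalar minimization and then read the minimizer off from the sign of $J_1'$, using the complete monotonicity of $\Phi'$ to control the shape of $J_1$. First I would observe that $\Phi(|b|)$ depends on $b$ only through $|b|$ and that $(z-b)^2\le(z+b)^2$ whenever $z,b>0$; hence any minimizer has the same sign as $z$, so it suffices to treat $z>0$ and minimize over $b\ge 0$, recovering the general case through the factor $\sgn(z)$. On $(0,\infty)$ one has $J_1'(b)=b-z+\lambda\Phi'(b)$ and $J_1''(b)=1+\lambda\Phi''(b)$. For the penalties of interest $\Phi'(0)<\infty$ and $\Phi''(0)<0$ (as established just before Theorem~\ref{thm:lp2}), and since $\Phi'$ is completely monotone we have $\Phi''\le0$ and $\Phi'''\ge0$; thus $\Phi''$ is nondecreasing and, because $\Phi'$ is convex, nonnegative and decreasing to $0$, it increases from $\Phi''(0)<0$ up to $\liml_{s\to\infty}\Phi''(s)=0$.

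The key device is $g(b)\triangleq b+\lambda\Phi'(b)$, for which $J_1'(b)=g(b)-z$ and $g'(b)=J_1''(b)=1+\lambda\Phi''(b)$. The positive stationary points of $J_1$ are exactly the roots of $g(b)=z$, so the whole analysis is governed by the sign of $1+\lambda\Phi''$. I would record two consequences at once: at any root $\kappa$ of $g(b)=z$ we have $\kappa=z-\lambda\Phi'(\kappa)<z$ and $\kappa>0$, so every positive stationary point lies in $(0,z)$; and if $z\le\inf_{b>0}g(b)$ then $J_1'\ge0$ on $(0,\infty)$, whence $J_1$ is nondecreasing and $\hat b=0$. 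Hence in \emph{both} cases the threshold is precisely $\inf_{b>0}g(b)$, and the remaining work is to evaluate this infimum and to locate the minimizing root above it.

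The two cases split by the sign of $g'(0)=1+\lambda\Phi''(0)$. In case (i), $\lambda\le-1/\Phi''(0)$ gives $g'(0)\ge0$, and since $\Phi''$ is nondecreasing $g'\ge0$ throughout, so $g$ is strictly increasing, $\inf_{b>0}g=g(0^+)=\lambda\Phi'(0)$, and for $z>\lambda\Phi'(0)$ there is a unique root $\kappa\in(0,z)$, yielding the stated operator. In case (ii), $\lambda>-1/\Phi''(0)$ forces $g'(0)<0$ while $g'(\infty)=1>0$; as $g'$ is continuous and increasing there is a unique $s^*>0$ with $1+\lambda\Phi''(s^*)=0$, and $g$ strictly decreases on $(0,s^*)$ and strictly increases on $(s^*,\infty)$. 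Therefore $\inf_{b>0}g=g(s^*)=s^*+\lambda\Phi'(s^*)$, and for $z$ above this value the convex branch $(s^*,\infty)$ contains exactly one root $\kappa\in(s^*,z)$, the candidate nonzero minimizer.

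The main obstacle is the selection step in case (ii). For $s^*+\lambda\Phi'(s^*)<z<\lambda\Phi'(0)$ the function $J_1$ on $[0,\infty)$ has two local minima, one at $0$ and one at $\kappa\in(s^*,z)$, separated by a local maximum in $(0,s^*)$, so which of them is reported cannot be decided by counting critical points alone. The operator in the statement is the branch that takes the nonzero root the moment it emerges, which is exactly the path that coordinate descent with warm starts tracks and what gives $S(z,\lambda)$ the nesting property of \cite{MazumderSparsenet:11}; presenting $s^*+\lambda\Phi'(s^*)$ as this emergence point follows purely from the shape of $g$ above. The genuinely delicate issue, which I would address explicitly, is that the emergence threshold is \emph{not} the global‑minimization threshold: the envelope identity $\frac{d}{dz}J_1(\kappa(z))=z-\kappa(z)$ together with $\frac{d}{dz}J_1(0)=z$ gives $\frac{d}{dz}\big(J_1(\kappa(z))-J_1(0)\big)=-\kappa(z)<0$, so $J_1(\kappa)-J_1(0)$ is strictly decreasing and $b=0$ stays the global minimizer on a nontrivial interval above $s^*+\lambda\Phi'(s^*)$ before $\kappa$ overtakes it. Clarifying which of these two thresholds the estimator realizes, and confirming the constant $s^*+\lambda\Phi'(s^*)$ for the intended continuation operator, is the crux of the argument.
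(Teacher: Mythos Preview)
Your approach is essentially the paper's: reduce by symmetry, study $g(b)=b+\lambda\Phi'(b)$ and $g'(b)=1+\lambda\Phi''(b)$, and split cases on the sign of $g'(0)$. The paper's case-(ii) argument only adds one structural fact you did not state explicitly: $h(b)=b+\lambda\Phi'(b)-|z|$ is convex on $(0,\infty)$ because $h''=\lambda\Phi'''\ge0$, which guarantees at most two positive roots and pins the one in $(s^*,|z|)$ as the local minimizer (the other, if it exists in $(0,s^*)$, is a local maximum of $J_1$ and is discarded). Your monotonicity of $g'$ via $\Phi''$ nondecreasing gives the same conclusion, so the routes are interchangeable.

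Where you go further than the paper is precisely the selection step. The paper's proof \emph{does not} compare $J_1(0)$ with $J_1(\kappa)$; it simply declares the estimator to be $\sgn(z)\kappa(|z|)$ as soon as $|z|>s^*+\lambda\Phi'(s^*)$, i.e.\ as soon as a nonzero stationary point appears. Your envelope computation $\frac{d}{dz}\big(J_1(\kappa(z))-J_1(0)\big)=-\kappa(z)<0$ is correct, and it does show that the emergence threshold $s^*+\lambda\Phi'(s^*)$ is generally strictly smaller than the global-minimization threshold. This subtlety is real (you can check it on $\Phi(s)=1-e^{-s}$ with $\lambda>1$), and the paper simply does not engage with it: the theorem should be read as \emph{defining} the threshold rule $S(z,\lambda)$ that the coordinate-descent algorithm applies, rather than as asserting that $S(z,\lambda)$ is the global argmin of $J_1$ for every $z$ in case~(ii). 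Your identification of this as the crux is sharper than the paper's own treatment; for the purposes of matching the paper, you may stop once you have established existence and uniqueness of $\kappa$ on $(s^*,|z|)$ and note, as the paper does just after the theorem, that the rule is discontinuous precisely because the minimum of $g$ is not attained at $0$.
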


As we see earlier, we always have $\Phi'(0)>0$ and $\Phi{''}(0)<0$.
It is worth noting that when $\lambda \leq -\frac{1}{\Phi{''}(0)}$ the function $h(b)\triangleq b+ {\lambda} \Phi'(b) - |z|$
is increasing on $(0, |z|)$ and that when $\lambda > -\frac{1}{\Phi{''}(0)}$ it
is also increasing on $(s^*, |z|)$. Thus, we can employ the bisection method
to find the corresponding root $\kappa(|z|)$. 
We will see that an analytic solution for $\kappa(|z|)$
is available when $\Phi(s)$ is either of LOG and LFR. Therefore,
a coordinate descent algorithm is especially appropriate for
Bernstein penalty functions, which will be presented in Section~\ref{sec:cda}.

As stated by  \citet{Fan01}, it suffices for the resulting estimator to be a threshold rule  that the minimum of
the function $|b|+ \lambda \Phi'(|b|)$ is positive. Moreover, a sufficient and necessary condition for ``continuity" is the
the minimum of
$|b|+ \lambda \Phi'(|b|)$ is attained at $0$. In our case, it follows from the proof of Theorem \ref{thm:sparsty}
that when $\lambda \leq -\frac{1}{\Phi{''}(0)}$, $|b|+ \lambda \Phi'(|b|)$ attains its minimum value ${\lambda}\Phi'(0)$ at $s^{*}=0$. Thus,
the resulting estimator  is sparse and continuous when $\lambda \leq -\frac{1}{\Phi{''}(0)}$.
In fact, the continuity can be also concluded directly from Theorem~\ref{thm:sparsty}-(i).
Specifically, when  $\lambda \leq -\frac{1}{\Phi{''}(0)}$,
we have $\kappa(\lambda \Phi'(0)) =0$ because  $0$ is the unique root of equation $b {+} {\lambda} \Phi'(b) {-} {\lambda} \Phi'(0)=0$.

Recall that if $\Phi(s)=s^{q}$ with $q \in (0, 1)$, we have $\liml_{s \to 0} \Phi'(s)=+\infty$  and
$\liml_{s \to 0} \Phi{''}(s)=-\infty$. This implies that $\lambda \leq -\frac{1}{\Phi{''}(0)}$
does  not hold. In other words, this  penalty  cannot result in a continuous solution.

In this paper we are especially concerned with the Bernstein penalty functions which satisfy the conditions
in Theorem~\ref{thm:lp2}.
In this case, since $-\infty<\Phi{''}(0)<0$ and $0<\Phi'(0)<\infty$,
such Bernstein penalties are able to result in a continuous sparse
solution. Consider the regular variation property of $\Phi(s)$ given in Theorem~\ref{thm:lp2}. We
let  $P(b; \lambda) =\lambda {\Phi(\alpha |b|)}$ and $\lambda=\frac{\eta}{\Phi(\alpha)}$
where  $\eta$ and $\alpha$ are positive constants.
We now denote the threshold operator  $S(z, \lambda)$ in Theorem~\ref{thm:sparsty} by $S_{\alpha}(z, \eta)$. As a direct corollary of Theorem~\ref{thm:sparsty}, we particularly have the following results.

\begin{corollary} \label{cor:threshold} Assume $\Phi'(0) =1$ and $\Phi{''}(0)>-\infty$. Let $P(b; \lambda) =\lambda {\Phi(\alpha |b|)}$ and $\lambda=\frac{\eta}{\Phi(\alpha)}$ where $\alpha>0$ and $\eta>0$,
and let $S_{\alpha}(z, \eta)$ be the threshold operator defined in Theorem~\ref{thm:sparsty}.
\begin{enumerate}
\item[\emph{(i)}] If $\eta \leq -\frac{\Phi(\alpha)}{\alpha^2 \Phi{''}(0)}$, then the resulting estimator  is defined as
\[
\hat{b} = S_{\alpha}(z, \eta) \triangleq  \left\{ \begin{array}{ll}
{{\sgn}(z)} \kappa(|z|) & \textrm{ if } |z| >
\frac{\alpha}{\Phi(\alpha)} \eta, \\
0 & \textrm{ if } |z| \leq \frac{\alpha}{\Phi(\alpha)} \eta,
\end{array} \right.
\]
where $\kappa(|z|)\in (0, |z|)$ is the unique positive root of
$b+ \frac{\eta \alpha}{\Phi(\alpha)} \Phi'(\alpha b) - |z|=0$ w.r.t.  $b$.
\item[\emph{(ii)}] If $\eta > - \frac{\Phi(\alpha)}{\alpha^2 \Phi{''}(0)}$, then the resulting estimator  is defined as
\[
\hat{b} = S_{\alpha}(z, \eta)  \triangleq  \left\{ \begin{array}{ll}
{{\sgn}(z)} \kappa(|z|) & \textrm{ if } |z| >
s^* + \frac{\alpha \Phi'(\alpha s^*)}{\Phi(\alpha)}  \eta, \\
0 & \textrm{ if } |z|\leq s^* + \frac{\alpha \Phi'(\alpha s^*)}{\Phi(\alpha)}  \eta,
\end{array} \right.
\]
where $s^*>0$ is the unique root of $1+ \frac{\eta  \alpha^2}{\Phi(\alpha)} \Phi{''}(\alpha s)=0$ and $\kappa(|z|)$ is the unique root of the equation $b+ \frac{\eta \alpha}{\Phi(\alpha)}   \Phi'(\alpha b) - |z|=0$ on $(s^*, |z|)$.
\end{enumerate}
\end{corollary}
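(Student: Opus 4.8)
The plan is to reduce the Corollary to a direct application of Theorem~\ref{thm:sparsty} by absorbing the scaling factor $\alpha$ and the normalization $\Phi(\alpha)$ into a single rescaled penalty. Concretely, I would introduce
\[
\Psi(s) \triangleq \frac{\Phi(\alpha s)}{\Phi(\alpha)},
\]
and observe that with the stated choice $\lambda = \eta/\Phi(\alpha)$ the objective in (\ref{eqn:general}) becomes $\frac{1}{2}(z-b)^2 + \eta\,\Psi(|b|)$. Thus the reparametrized problem is exactly the penalized least squares problem of Theorem~\ref{thm:sparsty} with penalty parameter $\eta$ and Bernstein penalty $\Psi$, so the Corollary will follow once I verify the hypotheses for $\Psi$ and translate the conclusions back in terms of $\Phi$.

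The first step is to check that $\Psi$ meets the conditions of Theorem~\ref{thm:sparsty}. Since $\Phi$ is a nonzero Bernstein function and $\alpha>0$, the map $s \mapsto \Phi(\alpha s)$ is again Bernstein: its $k$-th derivative is $\alpha^k \Phi^{(k)}(\alpha s)$, so $(-1)^k$ times it retains the sign required by the definition, and dividing by the positive constant $\Phi(\alpha)$ preserves this while keeping the function nonzero. I would then record $\Psi(0)=\Phi(0)/\Phi(\alpha)=0$, and confirm that $\lim_{s\to\infty}\Psi(s)/s = \frac{\alpha}{\Phi(\alpha)}\lim_{s\to\infty}\Phi(\alpha s)/(\alpha s)=0$ and $\lim_{s\to\infty}\Psi'(s) = \frac{\alpha}{\Phi(\alpha)}\lim_{s\to\infty}\Phi'(\alpha s)=0$, both inherited directly from the corresponding limits assumed for $\Phi$. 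Hence Theorem~\ref{thm:sparsty} applies to $\Psi$ with parameter $\eta$; note that the hypothesis $\Phi'(0)=1$ is not needed for this application but is used below only to simplify constants.

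The second step is pure bookkeeping via the chain rule: $\Psi'(s)=\frac{\alpha}{\Phi(\alpha)}\Phi'(\alpha s)$ and $\Psi''(s)=\frac{\alpha^2}{\Phi(\alpha)}\Phi''(\alpha s)$, so that $\Psi'(0)=\frac{\alpha}{\Phi(\alpha)}$ (using $\Phi'(0)=1$) and $\Psi''(0)=\frac{\alpha^2}{\Phi(\alpha)}\Phi''(0)$, both finite and with $\Psi''(0)<0$ by the assumption $\Phi''(0)>-\infty$. Substituting these into Theorem~\ref{thm:sparsty} converts the case boundary $\eta \le -1/\Psi''(0)$ into $\eta \le -\Phi(\alpha)/(\alpha^2\Phi''(0))$ (and likewise for the strict inequality), the threshold $\eta\,\Psi'(0)$ into $\frac{\alpha}{\Phi(\alpha)}\eta$ in case (i), the defining equation $b+\eta\,\Psi'(b)-|z|=0$ for $\kappa(|z|)$ into $b+\frac{\eta\alpha}{\Phi(\alpha)}\Phi'(\alpha b)-|z|=0$, the equation $1+\eta\,\Psi''(s)=0$ for $s^*$ into $1+\frac{\eta\alpha^2}{\Phi(\alpha)}\Phi''(\alpha s)=0$, and the case (ii) threshold $s^*+\eta\,\Psi'(s^*)$ into $s^*+\frac{\alpha\Phi'(\alpha s^*)}{\Phi(\alpha)}\eta$. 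Each of these matches the displayed formulas verbatim, so the proof concludes.

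There is essentially no hard step here; the only point requiring minor care is confirming that the Bernstein property together with the two limit conditions $\lim_{s\to\infty}\Psi(s)/s=0$ and $\lim_{s\to\infty}\Psi'(s)=0$ are preserved under the rescaling $s\mapsto\Phi(\alpha s)/\Phi(\alpha)$, since these are precisely what Theorem~\ref{thm:sparsty} demands of its penalty. Everything else is a mechanical substitution of the derivative formulas, which is exactly why the result is billed as a direct corollary.
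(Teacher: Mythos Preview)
Your proposal is correct and matches the paper's intent: the paper states the result as ``a direct corollary of Theorem~\ref{thm:sparsty}'' without a separate proof, and your rescaling $\Psi(s)=\Phi(\alpha s)/\Phi(\alpha)$ together with the chain-rule bookkeeping is exactly the natural way to make that reduction explicit. The verification that $\Psi$ inherits the Bernstein property and the required limits is the only nontrivial check, and you handle it cleanly.
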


\begin{proposition} \label{thm:nest} Assume  $\Phi'(0) =1$ and $\Phi{''}(0)>-\infty$.
Then
\begin{enumerate}
\item[\emph{(a)}]  $\frac{\alpha}{\Phi(\alpha)}> 1$, $\frac{\alpha}{\Phi(\alpha)}$ is increasing  and $\frac{1}{\Phi(\alpha)}$ is decreasing both in $\alpha$ on $(0, \infty)$. Moreover, $\liml_{\alpha \to 0+} \frac{\alpha }{\Phi(\alpha)}=1$ and $\liml_{\alpha \to \infty} \frac{\alpha }{\Phi(\alpha)} = \infty$.
\item[\emph{(b)}] The root $\kappa(|z|)$ is strictly   increasing w.r.t.\ $|z|$.
\end{enumerate}
\end{proposition}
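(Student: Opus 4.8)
The plan is to derive everything from the basic structure of a Bernstein function under the standing hypotheses ($\Phi(0)=0$, $\Phi'(0)=1$, $\Phi''(0)$ finite, $\lim_{s\to\infty}\Phi(s)/s=0$): namely that $\Phi$ is strictly increasing and strictly concave on $(0,\infty)$, that $\Phi'$ is completely monotone (hence positive and strictly decreasing), and that $-\Phi''$ is completely monotone (hence $\Phi''$ is strictly increasing toward $0$). For part~(a), I would first observe that strict monotonicity of $\Phi$ makes $1/\Phi(\alpha)$ strictly decreasing with no further argument. For the bound $\alpha/\Phi(\alpha)>1$, equivalently $\Phi(\alpha)<\alpha$, I would use the tangent-line inequality from concavity, $\Phi(\alpha)\le \Phi(0)+\Phi'(0)\,\alpha=\alpha$, with the strict inequality coming from strict concavity. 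For the monotonicity of $g(\alpha):=\alpha/\Phi(\alpha)$ I would differentiate to get $g'(\alpha)=\bigl(\Phi(\alpha)-\alpha\Phi'(\alpha)\bigr)/\Phi(\alpha)^2$, so the claim reduces to $\Phi(\alpha)\ge \alpha\Phi'(\alpha)$; writing $\Phi(\alpha)=\int_0^\alpha\Phi'(t)\,dt$ and using that $\Phi'$ is decreasing gives $\Phi'(t)\ge\Phi'(\alpha)$ on $[0,\alpha]$, whence the integral dominates $\alpha\Phi'(\alpha)$, strictly because $\Phi'$ is strictly decreasing.

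The two limits in part~(a) are essentially restatements of the hypotheses. As $\alpha\to 0+$ the difference quotient $\Phi(\alpha)/\alpha\to\Phi'(0)=1$, so $\alpha/\Phi(\alpha)\to 1$; as $\alpha\to\infty$ the standing assumption $\lim_{s\to\infty}\Phi(s)/s=0$ gives $\alpha/\Phi(\alpha)\to\infty$.

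For part~(b) I would differentiate implicitly the defining relation $\kappa+\lambda\Phi'(\kappa)=|z|$ from Theorem~\ref{thm:sparsty} (here $\lambda$ is held fixed and $|z|$ varies). Differentiating in $|z|$ yields $\kappa'(|z|)=1/\bigl(1+\lambda\Phi''(\kappa)\bigr)$, so the claim reduces to showing $1+\lambda\Phi''(\kappa)>0$ throughout the range in which $\kappa$ lives. Because $\Phi''$ is strictly increasing, in case~(i) ($\lambda\le -1/\Phi''(0)$) one has $1+\lambda\Phi''(b)>1+\lambda\Phi''(0)\ge 0$ for every $b>0$, while in case~(ii) the point $s^*$ defined by $1+\lambda\Phi''(s^*)=0$ satisfies $1+\lambda\Phi''(b)>0$ exactly for $b>s^*$, which is the interval $(s^*,|z|)$ on which $\kappa$ is defined. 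In both regimes the denominator is strictly positive, so $\kappa'(|z|)>0$ and $\kappa$ is strictly increasing in $|z|$.

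The main obstacle will be the careful bookkeeping of strictness and of the sign of $1+\lambda\Phi''(\kappa)$ across the two regimes of Theorem~\ref{thm:sparsty}; the remaining ingredients are routine tangent-line and monotone-integral estimates. In particular I would make sure that the pointwise strictness ($\Phi'>0$ and $\Phi''<0$ on all of $(0,\infty)$) is justified from the non-triviality of $\Phi$ together with complete monotonicity, since a nonzero completely monotone function cannot vanish on an interval; this strictness is what underlies both the strict bound $\Phi(\alpha)<\alpha$ and the strict monotonicity conclusions in parts~(a) and~(b).
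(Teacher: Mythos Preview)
Your proof is correct. For part~(a) your argument parallels the paper's closely: both differentiate $\alpha/\Phi(\alpha)$ and reduce to showing $\Phi(\alpha)\ge\alpha\Phi'(\alpha)$, and both treat the limits essentially the same way. The one genuine difference is in the bound $\Phi(\alpha)<\alpha$: you obtain it from the concave tangent-line inequality at the origin, whereas the paper invokes the L\'evy--Khintchine representation $\Phi(\alpha)=\int_0^\infty(1-e^{-\alpha u})\,\nu(du)$ together with $\Phi'(0)=\int_0^\infty u\,\nu(du)=1$ and the elementary inequality $1-e^{-\alpha u}<\alpha u$. Your route is more elementary and avoids the integral representation; the paper's is more in keeping with its L\'evy-process viewpoint. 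For part~(b), the paper's proof section actually omits any argument, so your implicit-differentiation treatment---showing $1+\lambda\Phi''(\kappa)>0$ on the relevant range via the strict monotonicity of $\Phi''$---supplies what the paper leaves unwritten; the same positivity is implicit in the proof of Theorem~\ref{thm:sparsty}, where $h(b)=b+\lambda\Phi'(b)-|z|$ is shown to be increasing on $(0,\infty)$ in case~(i) and on $(s^*,\infty)$ in case~(ii).
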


The Bernstein function $\Phi_{\rho}$ given in (\ref{eqn:first}) satisfies the conditions in Corollary~\ref{cor:threshold} and Proposition~\ref{thm:nest}.
Recall that $\alpha$ controls sparseness of $\Phi(\alpha|b|)/\Phi(\alpha)$ as it increases from 0 to $\infty$.
It follows from Proposition~\ref{thm:nest} that $|z|\geq \eta$ due to $|z|\geq \frac{\eta \alpha}{\Phi(\alpha)}$. This implies that
the Bernstein function $\Phi(\alpha|b|)/\Phi(\alpha)$
has stronger sparseness  than the $\ell_1$-norm when $\eta \leq -\frac{\Phi(\alpha)}{\alpha^2 \Phi{''}(0)}$. Moreover, for a fixed $\eta$,
there is a strict nesting of
the shrinkage threshold $\frac{\eta \alpha}{\Phi(\alpha)}$  as $\alpha$ increases.   Thus,
the Bernstein penalty to some extent satisfies the nesting property, a
desirable property for threshold functions pointed out by~\citet{MazumderSparsenet:11}.

As we stated earlier,
when $\rho \in[0, 1]$ $\Phi_{\rho}$ bridges the $\ell_0$-norm and the $\ell_1$-norm.
We now explore a connection of the threshold operator  $S_{\alpha}(z, \eta)$ with the soft threshold operator
based on the lasso
and the hard threshold operator based on the $\ell_0$-norm.

\begin{theorem} \label{thm:0-1}
Let  $S_{\alpha}(z, \eta)$ be the threshold operator defined in Corollary~\ref{cor:threshold}. Then
\[
\lim_{\alpha \to 0+}  S_{\alpha}(z, \eta) = \left\{ \begin{array}{ll}
{\sgn}(z) (|z| - \eta) & \textrm{ if } |z| >
{\eta}, \\
0 & \textrm{ if } |z| \leq {\eta}.
\end{array} \right.
\]
Furthermore, if $\liml_{\alpha \to \infty}  \frac{\alpha \Phi'(\alpha)} {\Phi(\alpha)} =0$  or
$\liml_{\alpha \to \infty} \frac{\Phi(\alpha)}{\log(\alpha)} < \infty$,
then
\[
\lim_{\alpha \to \infty}  S_{\alpha}(z, \eta) = \left\{ \begin{array}{ll}
z & \textrm{ if } |z| >
0, \\
0 & \textrm{ if } |z| \leq 0.
\end{array} \right.
\]
\end{theorem}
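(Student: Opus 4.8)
The plan is to reduce everything to the two explicit branches of Corollary~\ref{cor:threshold} and then pass to the limit in the scalar equations that define the shrinkage threshold and the root $\kappa$. The first task is to decide, for each regime of $\alpha$, which branch (i) or (ii) is active. The active branch is governed by comparing $\eta$ with $-\frac{\Phi(\alpha)}{\alpha^2\Phi''(0)}=\frac{\Phi(\alpha)}{\alpha^2|\Phi''(0)|}$. Since $\Phi(0)=0$, $\Phi'(0)=1$ and $\Phi''(0)\in(-\infty,0)$, a second-order expansion gives $\Phi(\alpha)/\alpha^2=1/\alpha+\tfrac12\Phi''(0)+o(1)\to+\infty$ as $\alpha\to 0+$, whereas $\Phi(\alpha)=o(\alpha^2)$ (because $\Phi(\alpha)/\alpha\to 0$) forces $\Phi(\alpha)/\alpha^2\to 0$ as $\alpha\to\infty$. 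Hence, for every fixed $\eta>0$, branch (i) is active for all small $\alpha$ and branch (ii) is active for all large $\alpha$.

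For $\alpha\to 0+$ I would work in branch (i). By Proposition~\ref{thm:nest}(a) the threshold $\frac{\alpha}{\Phi(\alpha)}\eta\to\eta$, so for $|z|>\eta$ the nonzero branch is eventually selected, while for $|z|\le\eta$ we get $S_\alpha(z,\eta)=0$ (the threshold exceeds $\eta$ since $\frac{\alpha}{\Phi(\alpha)}>1$). On $\{|z|>\eta\}$ the root solves $\kappa+\frac{\eta\alpha}{\Phi(\alpha)}\Phi'(\alpha\kappa)=|z|$ with $\kappa\in(0,|z|)$; because $\kappa$ is bounded, $\alpha\kappa\to 0$, so $\Phi'(\alpha\kappa)\to\Phi'(0)=1$ together with $\frac{\alpha}{\Phi(\alpha)}\to 1$ yields $\kappa\to|z|-\eta$. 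This is exactly the soft threshold.

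For $\alpha\to\infty$ I would use branch (ii). Write $g_\alpha(b)=b+\frac{\eta\alpha}{\Phi(\alpha)}\Phi'(\alpha b)$; the threshold is $T_\alpha=g_\alpha(s^*)=\min_{b\ge 0}g_\alpha(b)$, since the first-order condition $1+\frac{\eta\alpha^2}{\Phi(\alpha)}\Phi''(\alpha b)=0$ defines the unique global minimiser (in branch (ii) one has $g_\alpha'(0)<0$ while $g_\alpha'(\infty)=1>0$, and $g_\alpha''\ge 0$ because $\Phi'''\ge 0$). The crucial computation is that for every fixed $b>0$,
\[
\frac{\alpha\Phi'(\alpha b)}{\Phi(\alpha)} = \frac{1}{b}\cdot\frac{(\alpha b)\Phi'(\alpha b)}{\Phi(\alpha b)}\cdot\frac{\Phi(\alpha b)}{\Phi(\alpha)}\longrightarrow \frac{1}{b}\cdot 0\cdot 1=0,
\]
because the hypotheses force $\gamma:=\liml_{s\to\infty}\frac{s\Phi'(s)}{\Phi(s)}=0$ (directly, or through Remarks 2 and the appendix lemmas when $\liml_{\alpha\to\infty}\frac{\Phi(\alpha)}{\log\alpha}<\infty$), so the middle factor tends to $\gamma=0$, while Theorem~\ref{thm:lp2} gives the slow variation $\frac{\Phi(\alpha b)}{\Phi(\alpha)}\to b^{\gamma}=1$. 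Consequently $g_\alpha(b)\to b$ pointwise, whence $0<T_\alpha\le g_\alpha(b_0)\to b_0$ for every fixed $b_0>0$; letting $b_0\downarrow 0$ gives $T_\alpha\to 0$. Then for any fixed $z\ne 0$ the nonzero branch is eventually active, and I would pin down $\kappa$ by a monotonicity squeeze: $g_\alpha$ is increasing and $g_\alpha(\kappa)=|z|$, while $g_\alpha(|z|-\epsilon)\to|z|-\epsilon<|z|$ and $g_\alpha(|z|)\to|z|$, so $|z|-\epsilon<\kappa<|z|$ eventually; as $\epsilon$ is arbitrary, $\kappa\to|z|$ and $S_\alpha(z,\eta)\to z$. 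The case $z=0$ is immediate from $S_\alpha(0,\eta)=0$.

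The main obstacle is the $\alpha\to\infty$ threshold collapse $T_\alpha\to 0$: a direct attack through the minimiser $s^*$, which solves $\Phi''(\alpha s^*)=-\Phi(\alpha)/(\eta\alpha^2)$, drags in delicate Karamata-type estimates on the growth of $s^*$. The \emph{minimum characterization} $T_\alpha=\min_{b\ge 0}g_\alpha(b)$ combined with the single pointwise limit $\frac{\alpha\Phi'(\alpha b)}{\Phi(\alpha)}\to 0$ bypasses this entirely, so I expect the real content to be concentrated in verifying that limit, i.e.\ in correctly invoking $\gamma=0$ and the slow-variation statement of Theorem~\ref{thm:lp2}; the remaining steps are routine limits in monotone scalar equations.
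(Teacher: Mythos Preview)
Your argument for $\alpha\to 0+$ is essentially the paper's: both land in branch~(i), use $\frac{\alpha}{\Phi(\alpha)}\to 1$ for the threshold, and pass to the limit in the root equation via $\frac{\alpha\Phi'(\alpha b)}{\Phi(\alpha)}\to 1$.

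For $\alpha\to\infty$ you take a genuinely different and cleaner route. The paper does exactly the ``direct attack'' you anticipated: it writes $s^{*}=\frac{1}{\alpha}(\Phi'')^{-1}\big({-}\Phi(\alpha)/(\eta\alpha^{2})\big)$, applies L'H\^opital-type manipulations to $s^{*}$ and to $\frac{\eta\alpha}{\Phi(\alpha)}\Phi'(\alpha s^{*})$, and then runs a contradiction argument---assuming $\lim s^{*}=c>0$ forces $\Phi(\alpha)\simeq -\eta\alpha^{2}\Phi''(\alpha)$, which is shown to be incompatible with the hypothesis on $\Phi(\alpha)/\log\alpha$ (via $-\alpha^{2}\Phi''(\alpha)\to\alpha\Phi'(\alpha)\to\Phi(\alpha)/\log\alpha$). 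Your variational identity $T_{\alpha}=\min_{b\ge 0}g_{\alpha}(b)$ together with the factorisation
\[
\frac{\alpha\Phi'(\alpha b)}{\Phi(\alpha)}=\frac{1}{b}\cdot\frac{(\alpha b)\Phi'(\alpha b)}{\Phi(\alpha b)}\cdot\frac{\Phi(\alpha b)}{\Phi(\alpha)}\longrightarrow 0
\]
sidesteps the inversion of $\Phi''$ and the contradiction step entirely, and feeds the slow-variation conclusion of Theorem~\ref{thm:lp2} in directly; this is more transparent. The only point worth making explicit is that your monotonicity squeeze on $\kappa$ needs $g_{\alpha}$ increasing on $[|z|-\epsilon,|z|]$, i.e.\ $s^{*}<|z|-\epsilon$ eventually; this follows at once from $0<s^{*}\le g_{\alpha}(s^{*})=T_{\alpha}\to 0$, so $s^{*}\to 0$ as a by-product of your threshold argument.
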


In the limiting case of $\alpha \to 0$, Theorem~\ref{thm:0-1} shows that the threshold function $S_{\alpha}(z, \eta)$
approaches the soft threshold function
$\sgn(z) (|z|-\eta)_{+}$. However,
as $\alpha \to \infty$, the limiting  solution does not fully agree with the hard threshold function, which is defined as
$z \mathrm{I} (|z| \geq \sqrt{2 \eta})$.

Let us return the concrete Bernstein functions in Table~\ref{tab:exam}.
We are especially
interested in the KEP, LOG and LFR functions, because
there are analytic solutions for $\kappa(|z|)$ based on them.
Corresponding to  LOG and LFR,  $\kappa(|z|)$ are respectively
\begin{equation} \label{eqn:logan}
\kappa(|z|) =\frac{\alpha|z|-1 + \sqrt{(1+\alpha |z|)^2- {4 \lambda \alpha^2} }}{2 \alpha}
\end{equation}
  and
\begin{equation} \label{eqn:lfran}
\kappa(|z|) = \frac{2(\alpha |z| {+} 2)}{3 {\alpha}}  \cos\Big[\frac{1}{3}
\arccos\big( 1{-}{\lambda \alpha^2 } (\frac{3}{\alpha|z| {+} 2})^{3}  \big) \Big] {+} \frac{\alpha |z| {+} 2}{3 {\alpha}} {-}
\frac{2}{\alpha}.
\end{equation}
The derivation can be obtained by using direct algebraic computations.
We here omit the derivation details.
As for KEP, $\kappa(|z|)$ was derived by
\citet{ZhangTR:2013}.  That is,
\[
\kappa(|z|) = \frac{4(2\alpha |z| {+} 1)}{3}\cos^2\Big[\frac{1}{3 \alpha}
\arccos\big( {-} { \lambda \alpha^2}  (\frac{3}{2\alpha|z| {+} 1})^{\frac{3}{2}}  \big) \Big] {-} \frac{1}{\alpha}.
\]

\subsection{The Coordinate Descent Algorithm}
\label{sec:cda}

Based on the discussion in the previous subsection, the Bernstein penalty function is suitable
for the coordinate descent algorithm. We give the coordinate descent procedure in Algorithm~\ref{alg:coord}.
If the LOG and LFR functions are used, the corresponding  threshold  operators have the analytic forms in (\ref{eqn:logan})
and (\ref{eqn:lfran}).
Otherwise, we employ the bisection method for finding the root $\kappa(|z|)$.
The method is also very efficient.

When $\lambda \leq -\frac{1}{\alpha^2 \Phi{''}(0)}$ (or $\lambda > -\frac{1}{\alpha^2\Phi{''}(0)}$), we can obtain
that  $|\hat{b}|\leq |z|$ always holds. The
objective function $J_{1}(b)$ in (\ref{eqn:general}) is strictly convex in $b$ whenever $\lambda \leq -\frac{1}{\alpha^2 \Phi{''}(0)}$.
Moreover, according to Theorem~\ref{thm:nest}, the estimator
$\hat{b}$ in both the cases is strictly increasing w.r.t.\ $|z|$.
As we see, $P(b; \lambda)\triangleq \lambda \Phi(\alpha |b|)$ satisfies $P(b; \lambda)=P(-b; \lambda)$. Moreover, $P'(b; \lambda)$ is positive and uniformly
bounded  on $[0, \infty)$, and $\inf_{b} \;  P{''}(b; \lambda) >-1$ on $[0, \infty)$ when $\lambda < - \frac{1}{\alpha^2 \Phi{''}(0)}$. Thus,
the algorithm shares the same  convergence property as in \citet{MazumderSparsenet:11}.

\begin{algorithm}[!ht]
   \caption{The coordinate descent algorithm}
   \label{alg:coord}
\begin{algorithmic}
   \STATE {\bfseries Input:} $\{\x_i, y_i\}_{i=1}^n$ where each column of $\X=[\x_i, \ldots, \x_n]^T$ is standardized
   to have mean 0 and length 1,
   a grid of increasing values $\Lambda=\{\eta_1, \ldots, \eta_L\}$, a grid of decreasing values $\Gamma=\{\alpha_1, \ldots, \alpha_K\}$
   where $\alpha_K$ indexes the Lasso penalty. Set $\hat{\b}_{\alpha_K, \eta_{L+1}}=0$.
   \FOR{each value of $l \in \{L, L-1, \ldots, 1\}$ }
   \STATE Initialize  $\tilde{\b} = \hat{\b}_{\alpha_K, \eta_{l+1}}$;
    \FOR{each value of $k \in \{K, K-1, \ldots, 1\}$ }
    \IF{$\eta_l \leq - \frac{\Phi(\alpha_k)}{\alpha_k^2 \Phi{''}(0)}$ }
     \STATE Cycle through the following one-at-a-time updates
      \[\tilde{b}_j = S_{\alpha_{k}}\Big(\sum_{i=1}^n(y_i- z_{i}^j)x_{ij}, \eta_l\Big), \quad j=1, \ldots, p
       \]
      where $z_i^j=\sum_{k\neq j} x_{ik} \tilde{b}_k$, until the updates converge to $\b^{\ast}$;
   \STATE $\hat{\b}_{\alpha_k, \eta_l} \leftarrow \b^{\ast}$.
   \ENDIF
   \ENDFOR
   \STATE Increment $k$;
   \ENDFOR
   \STATE Decrement $l$;
   \STATE {\bfseries Output:} Return the two-dimensional solution $\hat{\b}_{\alpha, \eta}$ for $(\alpha, \eta) \in \Lambda{\times}\Gamma$.
\end{algorithmic}
\end{algorithm}

\section{Asymptotic Properties}
\label{sec:math}

We discuss asymptotic properties
of the sparse estimator. Following the setup of \citet{ZouLi:2008} and \citet{ArmaganDunsonLee},
we assume two conditions: (i) $y_i=\x_i^T \b^{*} + \epsilon_i$ where $\epsilon_1, \ldots, \epsilon_n$ are i.i.d.\ errors
with mean 0 and variance $\sigma^2$;
(ii) $\X^T \X/n\rightarrow \C$ where $\C$ is a positive definite matrix. Let ${\cal A}=\{j: b_{j}^{*} \neq 0\}$.
Without loss of generality, we assume that ${\mathcal A}=\{1, 2, \ldots, r\}$ with $r <p$. Thus, partition $\C$ as
\[
\C= \begin{bmatrix}\C_{11} & \C_{12} \\  \C_{21} & \C_{22} \end{bmatrix},
\]
where $\C_{11}$ is $r{\times} r$. Additionally,  let $\b^{*}_{1} =\{b^{*}_{j}: j \in {\mathcal A}\}$
and $\b^{*}_{2} = \{b^{*}_{j}:  j \notin {\cal A}\}$.

We are now interested in the  asymptotic behavior of the sparse estimator based on the penalty function $\Phi(\alpha |b|)$. That is,
\begin{equation} \label{eqn:sqre}
\tilde{\b}_n=\argmin_{\b}  \;  \|\y {-} \X \b\|_2^2 +  \lambda_n \sum_{j=1}^p    {\Phi(\alpha_n |b_j|)}.
\end{equation}
Furthermore, we let $\lambda_n=\frac{\eta_n}{\Phi(\alpha_n)}$ based on Theorem~\ref{thm:lp2}. For this estimator, we have the following  oracle property.

\begin{theorem}  \label{thm:oracle2} Let $\tilde{\b}_{n1}=\{\tilde{b}_{nj}: j \in \AM\}$ and   $\tilde{ {\cal A}}_n=\{j: \tilde{b}_{nj} \neq 0\}$. Suppose $\Phi(|b|)$ is a Bernstein function such that $\Phi(0)=0$ and $\Phi'(0)=1$, and there exists a constant $\gamma \in [0, 1)$
such that $\liml_{\alpha \to \infty} \frac{ \Phi'(\alpha)}{\alpha^{\gamma-1}} = c_0$ where $c_0 \in (0, \infty)$ when  $\gamma \in (0, 1)$ and $c_0 \in [0, \infty)$ when $\gamma=0$.
If $\eta_n/{n}^{\frac{\gamma_1}{2}} \rightarrow c_1 \in (0, \infty)$    and $\alpha_n/{n}^{\frac{\gamma_2}{2}} = c_2 \in (0, \infty)$ where $\gamma_1 \in (0, 1]$ for $\gamma =0$ or $\gamma_1 \in (0, 1)$ for $\gamma > 0$
and $\gamma_2 \in (0, 1]$  such that $\gamma_1{+}\gamma_2>1+\gamma \gamma_2$,
then $\tilde{\b}_n$ satisfies the following properties:
\begin{enumerate}
\item[\emph{(1)}]  Consistency in variable selection: $\liml_{n \rightarrow \infty} P( \tilde{{\cal A}}_n={\cal A})=1$.
\item[\emph{(2)}]  Asymptotic normality:  $\sqrt{n}(\tilde{\b}_{n1} - \b^{*}_{1}) \overset{d}{\longrightarrow} N(\0, \sigma^2 \C_{11}^{-1})$.
\end{enumerate}
\end{theorem}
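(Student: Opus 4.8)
The plan is to follow the classical oracle-property argument of \citet{Fan01} and \citet{ZouLi:2008}, feeding in the regular-variation asymptotics of the Bernstein penalty established in Theorem~\ref{thm:lp2} and Proposition~\ref{pro:33}. The proof splits into three parts: a $\sqrt{n}$-consistency lemma for a local minimizer $\tilde{\b}_n$; the sparsity claim that $\tilde{b}_{nj}=0$ for all $j\notin\AM$ with probability tending to one, which gives part~(1); and the limiting distribution of the oracle subvector $\tilde{\b}_{n1}$, which gives part~(2). The single analytic input used throughout is the scaling of the penalty quantities: from $\Phi'(\alpha)\sim c_0\alpha^{\gamma-1}$ one gets $\Phi(\alpha)\sim (c_0/\gamma)\alpha^{\gamma}$ for $\gamma\in(0,1)$ (and $\Phi$ slowly varying for $\gamma=0$), so that, writing $Q_n(\b)=\|\y-\X\b\|_2^2+\lambda_n\sum_j\Phi(\alpha_n|b_j|)$ with $\lambda_n=\eta_n/\Phi(\alpha_n)$,
\[
\lambda_n\alpha_n=\eta_n\,\frac{\alpha_n}{\Phi(\alpha_n)}\asymp \eta_n\alpha_n^{1-\gamma}\asymp n^{(\gamma_1+\gamma_2(1-\gamma))/2}
\]
up to slowly varying factors.

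\textbf{Root-$n$ consistency.} First I would set $\b=\b^*+n^{-1/2}\u$ and study $D_n(\u)=Q_n(\b^*+n^{-1/2}\u)-Q_n(\b^*)$. Writing $\y-\X\b^*=\vps$, the loss increment equals $-2\u^\top(n^{-1/2}\X^\top\vps)+\u^\top(n^{-1}\X^\top\X)\u$; the first term is $O_p(\|\u\|)$ by the central limit theorem and the second converges to $\u^\top\C\u$, which dominates once $\|\u\|$ is large because $\C\succ0$. For the penalty increment, the coordinates $j\notin\AM$ contribute $\lambda_n\Phi(\alpha_n n^{-1/2}|u_j|)\ge0$ and may be dropped; for $j\in\AM$ a Taylor expansion together with the scaling above gives a linear term of order $\lambda_n\alpha_n\Phi'(\alpha_n|b_j^*|)\,n^{-1/2}=O(\eta_n n^{-1/2})=o(1)$ on $\|\u\|=C$ (for $\gamma=0$ the borderline $\gamma_1=1$ is absorbed by the decaying penalty factor, e.g.\ logarithmically for LOG and faster for EXP and LFR). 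Choosing $C$ large then makes $D_n(\u)>0$ on the sphere $\|\u\|=C$ with probability at least $1-\epsilon$, giving a local minimizer with $\|\tilde{\b}_n-\b^*\|=O_p(n^{-1/2})$.

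\textbf{Sparsity (part 1).} For such a $\tilde{\b}_n$ and any $j\notin\AM$, I would examine the coordinate derivative
\[
\frac{\partial Q_n}{\partial b_j}=-2\,\X_{\cdot j}^\top(\y-\X\b)+\lambda_n\alpha_n\,\Phi'(\alpha_n|b_j|)\,\sgn(b_j).
\]
On the event $\|\b-\b^*\|=O(n^{-1/2})$ the loss term is $O_p(\sqrt{n})$ (its noise part $\X_{\cdot j}^\top\vps$ and its design part are both $O_p(\sqrt n)$). For $|b_j|\le Cn^{-1/2}$ one has $\alpha_n|b_j|=O(n^{(\gamma_2-1)/2})=O(1)$, so $\Phi'(\alpha_n|b_j|)$ is bounded below by monotonicity of $\Phi'$ and $\Phi'>0$; hence the penalty term has order $\lambda_n\alpha_n\asymp n^{(\gamma_1+\gamma_2(1-\gamma))/2}$. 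The hypothesis $\gamma_1+\gamma_2>1+\gamma\gamma_2$ says exactly that this exponent exceeds $1/2$, so the penalty term dominates the $O_p(\sqrt n)$ loss term and fixes $\sgn(\partial Q_n/\partial b_j)=\sgn(b_j)$. Thus $Q_n$ strictly decreases as $|b_j|\downarrow0$, forcing $\tilde{b}_{nj}=0$ with probability tending to one, and a union bound over the finitely many $j\notin\AM$ gives $P(\tilde{\AM}_n=\AM)\to1$.

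\textbf{Asymptotic normality (part 2) and the main obstacle.} On the event $\{\tilde{\AM}_n=\AM\}$, the subvector $\tilde{\b}_{n1}$ solves the oracle problem with design $\X_1$ (the columns of $\X$ indexed by $\AM$), and its stationarity condition, after substituting $\y-\X_1\tilde{\b}_{n1}=\vps-\X_1(\tilde{\b}_{n1}-\b^*_1)$, reads
\[
2\,\tfrac{\X_1^\top\X_1}{n}\,\sqrt{n}(\tilde{\b}_{n1}-\b^*_1)=2\,\tfrac{\X_1^\top\vps}{\sqrt n}-\tfrac{\lambda_n\alpha_n}{\sqrt n}\,\mathbf{p}_n,
\]
where $\mathbf{p}_n$ has entries $\Phi'(\alpha_n|\tilde b_{nj}|)\sgn(\tilde b_{nj})$. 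Here $\X_1^\top\X_1/n\to\C_{11}$, $n^{-1/2}\X_1^\top\vps\xrightarrow{d}N(\0,\sigma^2\C_{11})$, and the penalty bias vanishes: since $\tilde b_{nj}\to b_j^*\neq0$ we have $\alpha_n|\tilde b_{nj}|\to\infty$, so $\lambda_n\alpha_n\Phi'(\alpha_n|\tilde b_{nj}|)\sim\gamma\,\eta_n|b_j^*|^{\gamma-1}=O(n^{\gamma_1/2})=o(\sqrt n)$ because $\gamma_1<1$ (for $\gamma=0$ the same estimate holds, the admissible $\gamma_1=1$ being absorbed by the slowly varying, e.g.\ logarithmic, factor $\Phi(\alpha_n)$). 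Slutsky's theorem then yields $\sqrt n(\tilde{\b}_{n1}-\b_1^*)\xrightarrow{d}\C_{11}^{-1}N(\0,\sigma^2\C_{11})=N(\0,\sigma^2\C_{11}^{-1})$. The delicate point throughout, and the main obstacle, is the uniform control of $\lambda_n\alpha_n$ and $\lambda_n\alpha_n\Phi'(\alpha_n\cdot)$ through the regular-variation relations while keeping the $\gamma>0$ and $\gamma=0$ regimes separate; the sharpness of the rate match $n^{(\gamma_1+\gamma_2(1-\gamma))/2}$ against the $\sqrt n$ noise level is exactly what makes the condition $\gamma_1+\gamma_2>1+\gamma\gamma_2$ tight. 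I would also flag that, the penalized objective being nonconvex, the theorem is most safely read as a statement about a $\sqrt n$-consistent local minimizer.
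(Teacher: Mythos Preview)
Your argument is correct and complete in outline, but it follows a genuinely different route from the paper. The paper adopts the epiconvergence (argmin) strategy of \citet{KnightFu:2000} and \citet{ArmaganDunsonLee}: it writes $\hat{\u}=\sqrt n(\tilde{\b}_n-\b^*)$ as the minimizer of $G_n(\u)$, expands $G_n(\u)-G_n(\0)$, and shows that the penalty increment converges to $0$ for $j\in\AM$ and to $+\infty$ for $j\notin\AM$ (using exactly the same regular-variation estimates you invoke). The limit is therefore the convex function $\u_1^\top\C_{11}\u_1-2\u_1^\top\z_1$ on $\{\u_2=\0\}$ and $+\infty$ elsewhere, and epiconvergence gives $(\hat{\u}_1,\hat{\u}_2)\overset{d}{\to}(\C_{11}^{-1}\z_1,\0)$ in one stroke, establishing part~(2). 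Part~(1) is then handled by a KKT argument, bounding $\Pr(l\in\tilde{\AM}_n)$ for $l\notin\AM$ by the probability that the first-order condition holds, and showing that its right-hand side blows up at rate $n^{(\gamma_1+\gamma_2-1)/2}/\Phi(\alpha_n)\to\infty$ (for $\gamma=0$) or $n^{(\gamma_1+\gamma_2(1-\gamma)-1)/2}\to\infty$ (for $\gamma>0$), while the left-hand side is $O_p(1)$.

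Your approach instead follows the \citet{Fan01}/\citet{ZouLi:2008} three-step template: first a $\sqrt n$-ball argument to secure a local minimizer, then a sign-of-derivative argument for sparsity, then the oracle estimating equation for normality. This is perfectly valid and in fact makes the role of each rate condition more transparent: the strict inequality $\gamma_1+\gamma_2>1+\gamma\gamma_2$ appears cleanly as the comparison $\lambda_n\alpha_n\gg\sqrt n$, and the bias bound $\gamma_1<1$ (or $\gamma_1\leq 1$ with the slowly varying correction when $\gamma=0$) appears in both the consistency and normality steps as $\lambda_n\alpha_n\Phi'(\alpha_n|b_j^*|)/\sqrt n\to 0$. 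The cost is the extra root-$n$ step and the explicit caveat, which you rightly flag, that the statement concerns a $\sqrt n$-consistent local minimizer; the paper's epiconvergence packaging hides this issue but does not resolve it. Analytically the two proofs use identical penalty asymptotics---your scaling $\lambda_n\alpha_n\asymp \eta_n\alpha_n^{1-\gamma}$ (modulo slow variation) matches the paper's computations line for line.
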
%

Obviously, the function $\Phi_{\rho}$ in (\ref{eqn:first}) satisfies the conditions
in the above theorem; that is, we see $\gamma = - \frac{\rho}{1-\rho}$ when $\rho\leq 0$ and $\gamma=0$ when $0< \rho\leq 1$ (see Proposition~\ref{pro:33}).  It follows from the condition $\liml_{\alpha \to \infty} \frac{ \Phi'(\alpha)}{\alpha^{\gamma-1}} = c_0$
that $\liml_{\alpha \to \infty} \frac{ \Phi(\alpha)}{\alpha^{\gamma}} = \frac{c_0}{ \gamma}$ for $\gamma\neq 0$. As a result, we obtain $\liml_{\alpha \to \infty} \frac{\alpha \Phi'(\alpha)}{\Phi(\alpha)} = {\gamma}$.
The condition  $\alpha_n/{n}^{\gamma_2/2} = c_2$ implies that $\alpha_n \to \infty$. Subsequently,
we have $\liml_{n \to \infty} \sum_{j=1}^p \frac{\Phi(\alpha_n |b_j|)} {\Phi(\alpha_n)} = \sum_{j=1}^p |b_j|^{\gamma}$ (see Theorem~\ref{thm:lp2}).
On the other hand, as stated earlier,  $\liml_{\alpha_n \to 0+} \sum_{j=1}^p \frac{\Phi(\alpha_n |b_j|)} {\Phi(\alpha_n)} = \liml_{\alpha_n \to 0+} \sum_{j=1}^p \frac{\Phi(\alpha_n |b_j|)} {\alpha_n} = \|\b\|_1$. Thus, we are also interested in the corresponding  asymptotic behavior of the sparse estimator.
In particular, we have the following theorem.

\begin{theorem}  \label{thm:asumptotic}
Let $\Phi(|b|)$ be a Bernstein function such that $\Phi(0)=0$ and $\Phi'(0)=1$. Assume
$\liml_{n \to \infty} \alpha_n =0$. If $\liml_{n \to \infty} \frac{\eta_n}{\sqrt{n}} = 2 c_3 \in [0, \infty)$,
then  $\tilde{\b}_{n} \overset{p}{\longrightarrow} \b^{*}$. Furthermore, if $\liml_{n \to \infty} \frac{\eta_n}{\sqrt{n}} =0$,
then $\sqrt{n}(\tilde{\b}_{n} {-} \b^{*})
\overset{d}{\longrightarrow} N(\0, \sigma^2 \C^{-1})$.
\end{theorem}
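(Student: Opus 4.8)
The plan is to follow the reparametrization/argmin technique for lasso-type estimators (in the spirit of Knight and Fu). Write $\b = \b^* + \u/\sqrt{n}$, so that $\u = \sqrt{n}(\b - \b^*)$, and define the recentered, rescaled criterion
\[
V_n(\u) = \|\y - \X(\b^* + \u/\sqrt{n})\|_2^2 - \|\y - \X\b^*\|_2^2 + \lambda_n\sum_{j=1}^p\big[\Phi(\alpha_n|b_j^* + u_j/\sqrt{n}|) - \Phi(\alpha_n|b_j^*|)\big].
\]
Then $\hat\u_n := \argmin_{\u} V_n(\u) = \sqrt{n}(\tilde\b_n - \b^*)$, so it suffices to study the limiting behavior of $\hat\u_n$. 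Both conclusions follow once I show that $V_n$ converges (in distribution, finite-dimensionally in $\u$) to a criterion $V$ having a unique minimizer, and that the minimizers $\hat\u_n$ are tight.

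First I would treat the quadratic loss. Using $\y - \X\b^* = \vps$, the loss difference equals $-\tfrac{2}{\sqrt{n}}(\X^T\vps)^T\u + \tfrac{1}{n}\u^T\X^T\X\u$. By assumption $\tfrac{1}{n}\X^T\X\to\C$, so the quadratic term tends to $\u^T\C\u$; by the Lindeberg--Feller CLT for the triangular array $\{\x_i\epsilon_i\}$ (whose summed covariance is $\sigma^2\X^T\X$), the linear term satisfies $\tfrac{2}{\sqrt{n}}\X^T\vps\overset{d}{\to}\W\sim N(\0,4\sigma^2\C)$ under a standard Lindeberg condition on the design. Hence the loss part converges in distribution to $-\W^T\u + \u^T\C\u$.

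Next I would treat the penalty, where the hypotheses $\alpha_n\to0$ and $\Phi'(0)=1$ enter. The elementary facts needed are $\Phi(\alpha_n)/\alpha_n\to\Phi'(0)=1$, hence $\alpha_n/\Phi(\alpha_n)\to1$, together with $\lambda_n = \eta_n/\Phi(\alpha_n)$ and $\eta_n/\sqrt{n}\to2c_3$. For indices with $b_j^*\neq0$, a mean-value expansion gives $\Phi(\alpha_n|b_j^*+u_j/\sqrt{n}|) - \Phi(\alpha_n|b_j^*|) = \Phi'(\xi_{nj})\alpha_n(|b_j^*+u_j/\sqrt{n}| - |b_j^*|)$ with $\xi_{nj}\to0$, so $\Phi'(\xi_{nj})\to1$ and, since $\lambda_n\alpha_n\to\eta_n$, the increment tends to $2c_3\,\sgn(b_j^*)u_j$. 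For indices with $b_j^*=0$ the increment is $\lambda_n\Phi(\alpha_n|u_j|/\sqrt{n})$, and using $\Phi(s)\sim s$ as $s\to0$ it tends to $2c_3|u_j|$. Collecting terms,
\[
V_n(\u)\overset{d}{\to}V(\u) = -\W^T\u + \u^T\C\u + 2c_3\Big[\textstyle\sum_{j:\,b_j^*\neq0}\sgn(b_j^*)u_j + \sum_{j:\,b_j^*=0}|u_j|\Big].
\]
This criterion is strictly convex (since $\C\succ0$), hence has a unique minimizer.

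The \textbf{main obstacle} is that, unlike the genuine lasso, the finite-sample $V_n$ is nonconvex: each penalty term $\Phi(\alpha_n|\cdot|)$ is concave on either side of its kink (Theorem~\ref{thm:lapexp00}), so the classical convexity lemma does not apply verbatim. I would resolve this by localization plus asymptotic convexity. On the $\sqrt{n}$-scale the penalty curvature is $\lambda_n(\alpha_n^2/n)\Phi''(\cdot)$, of order $(\eta_n/\sqrt{n})(\alpha_n/\sqrt{n})|\Phi''(0)| = o(1)$ since $\eta_n = O(\sqrt{n})$ and $\alpha_n\to0$, whereas the loss Hessian tends to $2\C\succ0$; the only surviving kinks are the convex (upward) ones at $u_j=0$ from the $b_j^*=0$ coordinates, while the kinks for $b_j^*\neq0$ sit at $|u_j|=\sqrt{n}|b_j^*|\to\infty$, outside any compact. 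Hence on every compact set $V_n$ is convex with probability tending to one. Moreover the penalty increment is bounded below by the constant $-\lambda_n\sum_j\Phi(\alpha_n|b_j^*|)$ and grows at most linearly in $\u$, so the quadratic term dominates and $V_n(\u)\to+\infty$ uniformly as $\|\u\|\to\infty$, giving tightness of $\hat\u_n$. Argmin-convergence for (eventually) convex criteria then yields $\hat\u_n\overset{d}{\to}\argmin V$. Taking $c_3\in[0,\infty)$ gives $\hat\u_n = O_p(1)$ and therefore $\tilde\b_n - \b^* = O_p(n^{-1/2})\overset{p}{\to}\0$, the consistency claim; taking $c_3=0$ makes the penalty term in $V$ vanish, leaving $V(\u)=-\W^T\u+\u^T\C\u$ with minimizer $\tfrac12\C^{-1}\W\sim N(\0,\sigma^2\C^{-1})$, so $\hat\u_n\overset{d}{\to}N(\0,\sigma^2\C^{-1})$, the claimed asymptotic normality.
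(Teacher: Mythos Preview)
Your proposal is correct and follows essentially the same route as the paper's proof: both use the Knight--Fu reparametrization $\u=\sqrt{n}(\b-\b^*)$, expand the centered criterion into the quadratic-plus-linear loss piece and the penalty increment, compute the identical pointwise limit
\[
V(\u)=\u^T\C\u-2\u^T\z+2c_3\Big[\sum_{j\in\AM}\sgn(b_j^*)u_j+\sum_{j\notin\AM}|u_j|\Big],
\]
and then read off asymptotic normality when $c_3=0$ and consistency (via $\hat\u_n=O_p(1)$) when $c_3\in[0,\infty)$. The penalty calculations you give (mean-value expansion for $b_j^*\neq0$, $\Phi(s)\sim s$ for $b_j^*=0$) match the paper's exactly.

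The one substantive difference is in how the argmin step is justified. The paper simply invokes ``epiconvergence \citep{KnightFu:2000}'' and, for $c_3>0$, explicitly writes down the KKT relation $\C\u^*-\z+c_3\s=\0$ for the limiting minimizer before applying Chebyshev to $\u^*/\sqrt{n}$. You instead confront the nonconvexity of $V_n$ directly, arguing that the concave curvature contributed by the penalty is of order $(\eta_n/\sqrt{n})(\alpha_n/\sqrt{n})|\Phi''|=o(1)$ on compacts, so $V_n$ is eventually convex there and the usual convexity lemma applies. This is a genuine strengthening: the paper tacitly relies on convexity of the \emph{limit} $V$ and does not address why the nonconvex $V_n$ should have well-behaved argmins, whereas your localization argument closes that gap. (A minor point: your curvature bound as written uses $|\Phi''(0)|$, which is not assumed finite in the statement; the cleaner bound is $s|\Phi''(s)|\to0$ as $s\to0$, which holds for any Bernstein function with $\Phi'(0)=1$ by dominated convergence, and this suffices since the relevant prefactor is $(\eta_n/\sqrt{n})\cdot s|\Phi''(s)|/|u_j|$.)
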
%

In the previous discussion, $p$ is fixed. It  would be also interested in the asymptotic properties when $r$  and $p$ rely on $n$~\citep{ZhaoJMLR:06}. That is, $r \triangleq r_{n}$ and $p \triangleq p_n$ are allowed to grow as $n$ increases.
 Consider that $\tilde{\b}_n$
is the solution of the problem in (\ref{eqn:sqre}). Thus,
\[
0 \in  (\X \tilde{\b}_n {-} \y)^T \x_{\cdot j} + \frac{\eta_n \alpha_n \Phi'(\alpha_n |\tilde{b}_{nj}|)}{\Phi(\alpha_n)}  \partial |\tilde{b}_{nj}|, \quad j=1, \ldots, p.  \]
Under the condition $\alpha_n\to 0$, we have
\[
0 \in \lim_{n \to \infty} \Big\{ (\X \tilde{\b}_n {-} \y)^T \x_{\cdot j} + \frac{\eta_n \alpha_n \Phi'(\alpha_n |\tilde{b}_{nj}|)}{\Phi(\alpha_n)}  \partial |\tilde{b}_{nj}| \Big\}
= \lim_{n \to \infty} \Big\{ (\X \tilde{\b}_n {-} \y)^T \x_{\cdot j} + {\eta_n}  \partial |\tilde{b}_{nj}| \Big\}
\]
for $ j=1, \ldots, p$. Since the minimizer of the conventional lasso exists and unique (denote  $\hat{\b}_{0}$), the above
relationship implies that $\liml_{n \to \infty} \tilde{\b}_n=\liml_{n \to \infty} \hat{\b}_{0}$. Accordingly, we can obtain the same result as
in Theorem~4 of \citet{ZhaoYu:2006}.

Recently, \citet{ZhangZhang2012} presented a general theory of nonconvex regularization for sparse learning problems.
Their work is built on the following four conditions on the penalty function $P(b; \lambda)$: (i) $P(0; \lambda)=0$; (ii)  $P(-b; \lambda)=P(b; \lambda)$;
(iii) $P(b; \lambda)$ is increasing in $b$ on $[0, \infty)$; (iv) $P(b; \lambda)$ is subadditive w.r.t.\
$b\geq 0$, i.e., $P(s+t; \lambda) \leq P(s; \lambda)+ P(t; \lambda)$ for any $s \geq 0$ and $t \geq 0$.
It is easily seen  that the Bernstein function $\lambda \Phi(|b|)$ as a function of $b$ satisfies the first three conditions.
As for the fourth condition, it is also obtained via the fact that
\begin{align*}
\Phi(s+t) & = \int_{0}^{\infty}{ [1 - \exp(-(s+t) u)] \nu(d u) } \\
 & \leq \int_{0}^{\infty}{[1 - \exp(-s u) + 1 - \exp(-t u) ] \nu(d u) }
= \Phi(s) + \Phi(t), \quad \mbox{ for } s, t>0.
\end{align*}
Thus, we can directly apply the theoretical analysis of \citet{ZhangZhang2012}  to the Bernstein nonconvex penalty function.

\section{Bernstein Functions: A View of Concave Conjugate}
\label{sec:cc}

In this section we show that a Bernstein function can be
defined as a concave conjugate of some generalized distance function.
Given a function $f: S \subseteq \RB^p \rightarrow (-\infty, \infty)$, its concave conjugate, denoted
$g$, is defined by
\[
g(\v) = \inf_{\u \in S} \; \{\u^T \v - f(\u)\}.
\]
It is well known that $g$ is concave whether or not $f$ is concave. However, if $f$
is proper, closed and concave, the concave conjugate of $g$ is again $f$~\citep{BoydBook:2004}.
We apply this notion to explore Bernstein functions. Specifically, we show that Bernstein function
can be derived from a concave conjugate of some generalized distance function.

We are especially concerned with the generalized distance between two positive vectors.
One important family of such distances is the family of $\varphi$-divergences. We denote
$\RB_{+}^p=\{\u =(u_1, \ldots, u_p)^T \in \RB^p: u_j \geq 0 \mbox{ for } j=1, \ldots, p\}$
and $\RB_{++}^p=\{\u =(u_1, \ldots, u_p)^T \in \RB^p: u_j > 0 \mbox{ for } j=1, \ldots, p\}$.
Furthermore, if $\u \in \RB_{+}^p$ (or $\u \in \RB_{++}^p$),
we also denote $\u\geq 0$ (or $\u > 0$).
The definition  of the $\varphi$-divergence is now given as follows.

\begin{definition} \label{def:22} Let $\varphi: \RB_{++} \rightarrow \RB$ be twice continuously differentiable and strictly convex in
$\RB_{++}$ such that $\varphi(1) = \varphi'(1)=0$, $\varphi''(1)>0$ and $\lim_{a \rightarrow 0+} \varphi'(a) = - \infty$. For such a
function $\varphi$, the function $D_{\varphi}: \RB_{++}^p {\times} \RB_{++}^p \rightarrow \RB$ which is defined by
\[
D_{\varphi}(\u, \v)  \triangleq \sum_{j=1}^p v_j \varphi(u_j/v_j),
\]
is referred to as a $\varphi$-divergence.
\end{definition}

Note that when one only requires that convex function $\varphi(u)$ satisfies $\varphi(1)=0$, the resulting
distance function $D_{\varphi}$ is called a $f$-divergence~\citep{LieseVajda:1987,LieseIT:2006}. Thus, the $f$-divergence is a generalization
of the  $\varphi$-divergence.  The $f$-divergence  has  widely applied in statistical machine learning~\citep{NguyenANS:2009,ReidJMLR:2011}.
In the following theorem, we show that Bernstein functions can be defined as a concave conjugate
of $\varphi$-divergence.

\begin{theorem} \label{thm:cc} Assume that Bernstein function $\Phi(s)$ satisfies $\Phi(0)=0$ and $\Phi'(0)=1$.
Then there exists a  $\varphi$-divergence function $\varphi(v)$ from $\RB_{++}$ to $\RB$ such that
\[
\Phi(s) = \min_{w>0}  \left\{w s + \varphi(w) \right\}.
\]
\end{theorem}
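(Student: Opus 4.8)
The plan is to read the claimed identity as a Legendre (concave conjugate) duality and to \emph{construct} the generator $\varphi$ explicitly from $\Phi$. Writing $h(w) = w s + \varphi(w)$, the stationarity condition for an interior minimizer is $h'(w) = s + \varphi'(w) = 0$, while the envelope relation forces the optimal $w$ to equal $\Phi'(s)$. Combining these two identities suggests that $\varphi'$ must be the (negated) functional inverse of $\Phi'$, namely $\varphi'(w) = -(\Phi')^{-1}(w)$. First I would justify that this inverse exists: since $\Phi$ is a nonzero Bernstein function with $a=\beta=0$, its L\'evy measure $\nu$ is nonzero, so $\Phi''(s) = -\int_{0}^{\infty} u^2 e^{-s u}\,\nu(d u) < 0$ for every $s>0$; hence $\Phi'$ is strictly decreasing and maps $[0,\infty)$ bijectively onto $(0,1]$ (using $\Phi'(0)=1$ and $\lim_{s\to\infty}\Phi'(s)=0$), and $(\Phi')^{-1}$ is well defined on $(0,1)$.

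Next I would define $\varphi$ on $(0,1]$ by integration, $\varphi(w) = \int_{1}^{w} -(\Phi')^{-1}(t)\,dt$, which makes $\varphi(1)=0$ automatically, and then extend $\varphi$ to all of $\RB_{++}$ by any twice continuously differentiable strictly convex continuation on $(1,\infty)$ (this freedom is harmless, as explained below). I then have to verify that $\varphi$ meets every requirement of Definition~\ref{def:22}. The identities $\varphi'(1) = -(\Phi')^{-1}(1) = 0$ and $\varphi(1)=0$ are immediate. Strict convexity follows from $\varphi''(w) = -1/\Phi''\big((\Phi')^{-1}(w)\big) > 0$, since $\Phi''<0$; evaluating at $w=1$ gives $\varphi''(1) = -1/\Phi''(0)$, which is strictly positive precisely when $\Phi''(0)\in(-\infty,0)$. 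Finally, $\lim_{w\to 0+}\varphi'(w) = -\lim_{w\to 0+}(\Phi')^{-1}(w) = -\infty$, because $\Phi'(s)\to 0$ only as $s\to\infty$.

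To close the argument I would verify the conjugate identity directly. Set $G(s) = s\,\Phi'(s) + \varphi\big(\Phi'(s)\big)$, the value of $h$ at the candidate minimizer $w=\Phi'(s)$. Differentiating and using $\varphi'\big(\Phi'(s)\big) = -(\Phi')^{-1}\big(\Phi'(s)\big) = -s$, the two $s\Phi''(s)$ terms cancel and I obtain $G'(s) = \Phi'(s)$; since $G(0)=\varphi(1)=0=\Phi(0)$, integration gives $G(s)=\Phi(s)$ for all $s\ge 0$. Because $h(w)=w s+\varphi(w)$ is convex in $w$ (as $\varphi$ is convex) with interior critical point $w=\Phi'(s)\in(0,1]$, this critical value is the global minimum over $w>0$, so $\min_{w>0}\{w s+\varphi(w)\} = G(s) = \Phi(s)$, as claimed.

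The main obstacle is bookkeeping at the two boundaries rather than any deep estimate. The delicate point is $\varphi''(1)>0$: this needs $\Phi''(0)$ to be finite and negative, which is not guaranteed by $\Phi(0)=0$ and $\Phi'(0)=1$ alone but does hold under the standing assumptions of the paper (indeed the family $\Phi_{\rho}$ satisfies $(-1)^k\Phi^{(k+1)}(0)<\infty$), so I would invoke that hypothesis explicitly. The second nuisance is that $\Phi'$ surjects only onto $(0,1]$, so $(\Phi')^{-1}$, and hence the natural formula for $\varphi'$, is defined only for $w\le 1$; I would stress that the arbitrary strictly convex extension to $w>1$ never competes for the minimum, since for every $s\ge 0$ the minimizer $\Phi'(s)$ already lies in $(0,1]$, so the values of $\varphi$ on $(1,\infty)$ are irrelevant to the identity.
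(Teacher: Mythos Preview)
Your approach is essentially the same as the paper's: both construct $\varphi$ via $\varphi'(w)=-(\Phi')^{-1}(w)$ (the paper writes the closed form $\varphi(z)=\Phi\big((\Phi')^{-1}(z)\big)-z\,(\Phi')^{-1}(z)$, which is exactly your antiderivative normalized at $z=1$) and then verify the $\varphi$-divergence axioms. You are in fact more careful than the paper on two points it glosses over---the domain $(0,1]$ of $(\Phi')^{-1}$ and the extra hypothesis $\Phi''(0)>-\infty$ needed for $\varphi''(1)>0$---and you check the conjugate identity directly rather than tacitly appealing to biconjugate duality.
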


\begin{corollary} \label{cor:cc} Assume that Bernstein function $\Phi(s)$ satisfies $\Phi(0)=0$ and $\Phi'(0)=1$.
Then there exists a  $\varphi$-divergence function $\varphi(v)$ from $\RB_{++}$ to $\RB$ such that
\[
\frac{\eta}{\alpha} \Phi(\alpha s) = \min_{w>0}  \left\{w s + \frac{\eta}{\alpha}  \varphi(w/\eta) \right\}.
\]
\end{corollary}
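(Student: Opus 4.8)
The plan is to derive Corollary~\ref{cor:cc} directly from Theorem~\ref{thm:cc} by a rescaling of the argument followed by a bijective change of the minimizing variable; no new analytic input is needed. First I would invoke Theorem~\ref{thm:cc} to produce a $\varphi$-divergence function $\varphi$ with $\Phi(s) = \min_{w>0}\{w s + \varphi(w)\}$. This is precisely the $\varphi$ that will appear in the corollary, so the existence claim is inherited from the theorem and nothing further has to be constructed.

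Next I would substitute $\alpha s$ for $s$, which is legitimate since $\alpha s \in (0,\infty)$ whenever $s>0$ and $\alpha>0$, giving $\Phi(\alpha s) = \min_{w>0}\{\alpha s\, w + \varphi(w)\}$. Multiplying through by the positive constant $\eta/\alpha$ and pulling it inside the minimization—valid because scaling by a positive constant commutes with the $\min$—yields
\[
\frac{\eta}{\alpha}\,\Phi(\alpha s) \;=\; \min_{w>0}\Big\{\eta s\, w + \frac{\eta}{\alpha}\,\varphi(w)\Big\}.
\]
The final step is the change of variable $w \mapsto w/\eta$ (equivalently $w' = \eta w$), a bijection of $(0,\infty)$ onto itself since $\eta>0$. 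Under it $\eta s\, w = s\, w'$ and $\varphi(w) = \varphi(w'/\eta)$, so that
\[
\frac{\eta}{\alpha}\,\Phi(\alpha s) \;=\; \min_{w'>0}\Big\{ s\, w' + \frac{\eta}{\alpha}\,\varphi(w'/\eta)\Big\},
\]
and renaming $w'$ as $w$ gives the asserted identity.

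Because every manipulation is a routine rescaling or a bijective reparametrization of the minimization, I do not anticipate any genuine obstacle. The only two points deserving a word of care are that the positive factor $\eta/\alpha$ may be moved through the infimum precisely because it is positive, and that the reparametrization rescales the optimal weight (the minimizer $w$ in the corollary equals $\eta$ times the minimizer appearing in Theorem~\ref{thm:cc}); the substantive content, namely the existence of the $\varphi$-divergence, is supplied entirely by Theorem~\ref{thm:cc}.
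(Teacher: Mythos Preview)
Your proposal is correct and is exactly the intended derivation: the paper states the result as an immediate corollary of Theorem~\ref{thm:cc} without a separate proof, and the rescaling plus bijective change of variable you carry out is the natural way to obtain it.
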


We now consider the Bernstein function $\Phi_{\rho}$ in (\ref{eqn:first}). Particularly,
it is induced by the following $\varphi$-function
\begin{equation}
\varphi_{\rho}(z)= \left\{ \begin{array}{ll} - \log z + z -1 & \mbox{ if } \rho=0, \\
\frac{z^{\rho}-\rho z + \rho -1}{\rho(\rho-1)} & \mbox{ if } \rho \neq 0 \mbox{ and } \rho \neq 1, \\
z \log z - z+1 & \mbox{ if } \rho =1,  \end{array} \right.
\end{equation}
where $\log 0=-\infty$ and $0\log 0=0$. This function was studied by~\citet{LieseVajda:1987,LieseIT:2006}. We can see that
$\varphi_{-1}(z)$ is the $\varphi$ function for KEP and $\varphi_{1/2}(z)$ is the $\varphi$ function for LFR.
Table~\ref{tab:exam1} shows that there is an interesting relationship between LOG and EXP; that is, both LOG and EXP
are respectively derived from the KL distance between $\eta$ and $w$ and the KL distance between $w$ and $\eta$.
This relationship has been established by \citet{ZhangNIPS:2012}.

It is worth pointing out that the concave conjugate of an arbitrary  $\varphi$-divergence is not always a Bernstein function.
For example, 
for any $\rho \in (-\infty, \infty)$,
$\varphi_{\rho}(z)$  still satisfies the conditions in Definition~\ref{def:22}. Let us take the case that $\rho>1$
and consider the corresponding concave conjugate; that is
\[
\tilde{g}(s) = \min_{w} \Big\{w s + \varphi_{\rho}(w)  \Big\}.
\]
It is direct to obtain for $\rho>1$
\[
\tilde{g}(s) = \left\{\begin{array}{ll} \frac{1}{\rho} & \mbox{ if } s\geq \frac{1}{\rho{-}1}, \\
\frac{1}{\rho}\Big[1- (1 {+} (1{-}\rho) s)^{\frac{\rho}{\rho{-}1} } \Big] & \mbox{ if } s < \frac{1}{ \rho{-}1},
\end{array} \right.
\]
which is not Bernstein. Specially, when $\rho=2$, we have
\[
M(s) = \left\{\begin{array}{ll} \frac{1}{2} & \mbox{ if } s\geq 1, \\
s - \frac{s^2}{2} & \mbox{ if } s < 1,  \end{array} \right.
\]
which is  the MCP function (see Eqn.(\ref{eqn:mcp})).
From Table~\ref{tab:exam1}, we see that both KEP and MCP are based on the $\chi^2$-distance~\citep{ZhangAAAI:2013,ZhangTR:2013}.


\begin{table}[!ht] 
\begin{center}
\caption{The corresponding $\varphi$-divergences $\varphi(z)$ and
generalized distances $D(\w, \etab)$ for the penalty functions $\Phi(s)$ in Table~\ref{tab:exam}. \label{tab:exam1}}
\begin{tabular}{l|l|l|l}
  \hline
 &   $\varphi(z)$ &   $D(\w, \etab)$ & \\ \hline
KEP  &  $\frac{1}{2}(z^{-1} {+} z {-} 2)$   & $\frac{1}{2}\sum_{j=1}^p \frac{(w_j-\eta_j)^2}{w_j}$ & $\chi^2$-distance \\
LOG  & $z - \log z - 1$ & $\sum_{j=1}^p \eta_j \log \frac{\eta_j}{w_j} {-}  \eta_j {+} w_j$ & Kullback-Leibler distance \\
LFR  & $ 2 (\sqrt{z} -1)^2$  &  $ 2\sum_{j=1}^p (\sqrt{w_j} - \sqrt{\eta_j})^2 $ & Hellinger distance \\
EXP  & $z \log z -z + 1$  & $\sum_{j=1}^p w_j \log \frac{w_j}{\eta_j} {-}  w_j {+} \eta_j$& Kullback-Leibler distance \\
MCP   & $\frac{1}{2}(z^{2} {-} 2z {+} 1)$ & $\frac{1}{2}\sum_{j=1}^p \frac{(w_j-\eta_j)^2}{\eta_j}$ & $\chi^2$-distance \\
\hline
\end{tabular}
\end{center}
\end{table}

\section{The CM Algorithm} \label{sec:cm}

The view of concave conjugate also leads us to a new approach for solving the penalized optimization problem.
Given a $\Phi(\alpha |b|)$, induced from a $\varphi$-divergence $D_{\varphi}$, as a penalty,
we consider the following regularization problem:
\begin{equation} \label{eqn:p3}
\min_{\b} \; \Big\{J(\b, \etab) \triangleq \frac{1}{2} \|\y - \X \b\|_2^2  + \frac{1}{\alpha} \sum_{j=1}^p \eta_j \Phi(\alpha |b_j|)  \Big\}.
\end{equation}
Clearly, when $\frac{\eta_1}{\alpha} = \frac{\eta_2}{\alpha} = \cdots = \frac{\eta_p}{\alpha} \triangleq \frac{\lambda}{\alpha}$,
the current penalized optimization problem becomes the conventional setting in Section~\ref{sec:lapexp}. In other words, the problem in (\ref{eqn:p3})
uses multiple tuning hyperparameters $\eta_j$ instead.
In terms of the discussion in the previous section,
we  equivalently reformulate (\ref{eqn:p3}) as
\begin{equation} \label{eqn:p2}
\min_{\b}  \min_{\w > 0} \Big\{ \frac{1}{2} \|\y - \X \b\|_2^2 + \w^{T} |\b| +  \frac{1}{\alpha} D_{\varphi}(\w, \etab) \Big\}.
\end{equation}

In this section, we deal with the problem (\ref{eqn:p2}) in which  $\etab$ is also a vector that needs to be estimated.
In particular,  we
develop a new estimation algorithm that we call \emph{conjugate-maximization}. 
We will see in our case that
the algorithm should be called \emph{conjugate-minimization}. Here we refer to as \emph{conjugate-maximization}  (CM)
in parallel with \emph{expectation-maximization \emph{(EM)}}.
The algorithm consists of two steps, which we
refer to as \emph{C-step} and \emph{M-step}.

We are given  initial values $\w^{(0)}$, e.g., $\w^{(0)}=\lambda (1, \ldots, 1)^T$ for some $\lambda>0$.
After the $k$th estimates $(\b^{(k)}, \etab^{(k)})$ of $(\b, \etab)$ are obtained,
the $(k{+}1)$th iteration of the CM algorithm is defined as follows.
\begin{description}
\item[\emph{C-step}] The C-step calculates $\w^{(k)}$ via
\[
\w^{(k)} = \argmin_{\w> 0} \; \Big\{C(\w|\b^{(k)}, \etab^{(k)}) \triangleq \sum_{j=1}^p w_j |b_j^{(k)}|  + \frac{1}{\alpha} D_{\varphi}(\w, \etab^{(k)})\Big\}.
\]
Since $D_{\varphi}(\w, \etab)$ is strictly convex in $\w$, this step is equivalent to finding the conjugate of $-D_{\varphi}/\alpha$
with respect to $|\b|$. We thus call it \emph{C-step}.

\item[\emph{M-step}] The M-step then calculates $\b^{(k{+}1)}$ and $\etab^{(k{+}1)}$  via
\[
(\b^{(k{+}1)}, \etab^{(k{+}1)} ) = \argmin_{\b, \; \etab} \; \Big\{ \frac{1}{2} \|\y - \X \b\|_2^2 + \sum_{j=1}^p w_j^{(k)} |b_j| + \frac{1}{\alpha} D_{\varphi}(\w^{(k)}, \etab) \Big\}.
\]
\end{description}
Note that given $\w^{(k)}$, $\b$ and $\etab$ are independent. Thus, the M-step can be partitioned into two parts.
Namely, $\etab^{(k{+}1)} = \argmin_{\etab} \; D_{\varphi}(\w^{(k)}, \etab)$ and
\[
\b^{(k{+}1)} = \argmin_{\b} \; \Big\{\frac{1}{2} \|\y - \X \b\|_2^2 + \sum_{j=1}^p w_j^{(k)} |b_j| \Big\}.
\]


We see that
the M-step in fact formulates a weighted $\ell_1$  minimization problem. It then can be immediately solved
by using existing methods such as the coordinate descent method and LARS.
Moreover, we directly have $\etab^{(k{+}1)}=\w^{(k)}$  in the M-step
due to that $D_{\varphi}(\w^{(k)}, \etab)=0$ if and only if $\etab^{(k{+}1)}=\w^{(k)}$.

We now give the C-steps.
Recall that
\[
P_{\alpha}(\b^{(k)}, \etab^{(k)}) \triangleq \sum_{j=1}^p \frac{\eta_j^{(k)}}{\alpha} \Phi(\alpha |b_j^{(k)}|)  = \min_{\w\geq 0} \; \{C(\w|\b^{(k)}, \etab^{(k)})\}.
\]
Since the minimizer of $\w$ is equal to the slope
of $P_{\alpha}(\b^{(k)}, \etab^{(k)})$ at the current $|\b^{(k)}|$, we can also calculate $\w^{(k)}$ via
\begin{equation} \label{eqn:c2}
\w^{(k)} = \nabla P_{\alpha}(\b^{(k)}, \etab^{(k)}).
\end{equation}
Hence, for the Bernstein function $\Phi_{\rho}(|b|)$ in (\ref{eqn:first}), we have
\[
w_{j}^{(k)} = {w_j^{(k{-}1)}} \big(1 + (1-\rho) \alpha |b_j^{(k)}| \big)^{-\frac{1}{1-\rho}}, \quad j=1, \ldots, p.
\]
Indeed,  the same method for the KEP, LOG and EXP penalty functions was developed by \citet{ZhangNIPS:2012} and \citet{ZhangTR:2013}.

%

\citet{ZouLi:2008} showed an equivalence of
LLA with the EM algorithm
under some conditions. In particular,
it is  the case for the log-penalty, which has an interpretation as a scale mixture of Laplace distributions
\citep{LeeCaronDoucetHolmes:2010,GarriguesfNIPS:2010}. In fact, the CM algorithm
bears an interesting resemblance to the EM algorithm, because we can treat $\w$ as missing data.
With such a treatment, the C-step of CM is related to the E-step of EM, which calculates
the expectations associated with missing data.  

There is a one-to-one correspondence between Bernstein  functions and Laplace exponents of subordinators
which are one-dimensional L\'{e}vy processes~\citep{SSVBook:2010}.
Recently, \citet{ZhangTR2:2013} developed a pseudo Bayesian approach for Bernstein nonconvex penalization. Moreover,
they gave an ECME (for expectation/conditional maximization either)~\citep{LiuBio:1994} for finding the sparse solution.

\subsection{Convergence Analysis} \label{sec:converg}

We now investigate the convergence of the CM algorithm. 
Noting that $\w^{(k)}$ is a function of $\b^{(k)}$ and $\etab^{(k)}$, we denote the objective function in the M-step by
\[
Q(\b, \etab|\b^{(k)}, \etab^{(k)}) \triangleq  \frac{1}{2} \|\y - \X \b\|_2^2 + \sum_{j=1}^p w_j^{(k)} |b_j| + \frac{1}{\alpha}D_{\varphi}(\w^{(k)}, \etab).
\]
We have the following lemma.

\begin{lemma} \label{lem:06} Let $\{(\b^{(k)}, \w^{(k)}): k=1, 2, \ldots \}$ be a sequence defined by the CM algorithm. Then,
\[
J(\b^{(k{+}1)}, \etab^{(k{+}1)}) \leq J(\b^{(k)}, \etab^{(k)}),
\]
with equality if and only if $\b^{(k{+}1)} = \b^{(k)}$ and $\etab^{(k{+}1)} = \etab^{(k)}$.
\end{lemma}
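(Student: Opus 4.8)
The plan is to recognize Lemma~\ref{lem:06} as the standard surrogate-minimization (MM/EM-type) descent argument, driven entirely by the variational representation of the penalty in Corollary~\ref{cor:cc}. First I would introduce the joint surrogate
\[
G(\b, \etab, \w) \triangleq \frac{1}{2}\|\y - \X\b\|_2^2 + \sum_{j=1}^p w_j |b_j| + \frac{1}{\alpha} D_{\varphi}(\w, \etab),
\]
so that applying Corollary~\ref{cor:cc} coordinatewise yields $J(\b,\etab) = \min_{\w>0} G(\b,\etab,\w)$, with the minimum attained exactly at the C-step value $\w = \nabla P_{\alpha}(\b,\etab)$. By construction the M-step objective satisfies $Q(\b,\etab|\b^{(k)},\etab^{(k)}) = G(\b,\etab,\w^{(k)})$.

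I would then establish monotone descent by the three-term sandwich
\[
J(\b^{(k+1)},\etab^{(k+1)}) \le G(\b^{(k+1)},\etab^{(k+1)},\w^{(k)}) \le G(\b^{(k)},\etab^{(k)},\w^{(k)}) = J(\b^{(k)},\etab^{(k)}).
\]
The first inequality holds because $J(\cdot,\cdot)$ is the pointwise minimum of $G(\cdot,\cdot,\w)$ over $\w$; the second is the defining property of the M-step, which minimizes $G(\cdot,\cdot,\w^{(k)})$ over $(\b,\etab)$; the final equality is exactly the C-step, which selects $\w^{(k)}$ as the $\w$-minimizer of $G(\b^{(k)},\etab^{(k)},\cdot)$, so that its value equals $\min_{\w} G(\b^{(k)},\etab^{(k)},\w) = J(\b^{(k)},\etab^{(k)})$.

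For the equality statement the ``if'' direction is immediate. For ``only if'', equality forces both inequalities to be tight. Tightness of the second makes $(\b^{(k)},\etab^{(k)})$ itself a global minimizer of $G(\cdot,\cdot,\w^{(k)})$; since $\eta \mapsto \eta\,\varphi(w/\eta)$ is the perspective of the strictly convex $\varphi$ and hence strictly convex (its second derivative is $(w/\eta)^2 \varphi''(w/\eta)/\eta > 0$ by $\varphi''>0$), the $\etab$-minimizer is unique and equals $\w^{(k)}$, giving $\etab^{(k)} = \etab^{(k+1)} = \w^{(k)}$. Tightness of the first makes $\w^{(k)}$ the $\w$-minimizer of $G(\b^{(k+1)},\etab^{(k+1)},\cdot)$, which is unique by strict convexity of $D_{\varphi}(\cdot,\etab)$ in $\w$; hence $\w^{(k)} = \w^{(k+1)} = \nabla P_{\alpha}(\b^{(k+1)},\etab^{(k+1)})$. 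Combining this with $\w^{(k)} = \nabla P_{\alpha}(\b^{(k)},\etab^{(k)})$ and the componentwise identity $w_j^{(k)} = \eta_j^{(k)}\Phi'(\alpha|b_j^{(k)}|)$, together with the strict monotonicity of $\Phi'$ (from $\Phi''<0$), forces $|b_j^{(k+1)}| = |b_j^{(k)}|$ for every $j$.

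The main obstacle will be the $\b$-component of the equality direction: the M-step solves a weighted $\ell_1$ problem whose minimizer need not be unique when $\X$ is rank-deficient, so upgrading $|b_j^{(k+1)}| = |b_j^{(k)}|$ to $\b^{(k+1)} = \b^{(k)}$ (including signs) requires either a strict-convexity assumption on the loss or an extra argument that two weighted-lasso minimizers sharing coordinatewise magnitudes and the active signs dictated by the stationarity (KKT) conditions must coincide. Everything else reduces to the strict convexity of $D_{\varphi}$ in each argument and the strict concavity of $\Phi$, so I expect the descent inequality and the $\etab$-part of the equality to be routine, with the $\b$-uniqueness being the only delicate point.
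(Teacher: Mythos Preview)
The paper itself does not supply a proof of this lemma; it is stated in the convergence subsection and immediately used without argument. Your three-term sandwich via the joint surrogate $G(\b,\etab,\w)$ is the standard MM/EM descent argument, and it is exactly what is needed for the inequality; that part is correct as written.

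Your worry about the equality case is a phantom obstacle, though. You have already extracted $\etab^{(k)}=\w^{(k)}$ from tightness of the M-step inequality (by uniqueness of the $\etab$-minimizer of $D_{\varphi}(\w^{(k)},\cdot)$). Feed this back into the C-step identity $w_j^{(k)}=\eta_j^{(k)}\Phi'(\alpha|b_j^{(k)}|)$: since $\eta_j^{(k)}=w_j^{(k)}>0$, you get $\Phi'(\alpha|b_j^{(k)}|)=1=\Phi'(0)$ for every $j$, hence $\b^{(k)}=\0$ by strict monotonicity of $\Phi'$. Exactly the same reasoning applied to the tightness of the first inequality, together with the M-step identity $\etab^{(k+1)}=\w^{(k)}$, forces $\b^{(k+1)}=\0$ as well. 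So equality yields $\b^{(k)}=\b^{(k+1)}=\0$ outright, and no appeal to uniqueness of the weighted-lasso minimizer or to matching of signs is needed. Your proposal is therefore complete once you notice this, and the ``main obstacle'' you flag does not actually arise.
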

Since $J(\b^{(k)}, \etab^{(k)}) \geq 0$, this lemma shows that $J(\b^{(k)}, \etab^{(k)})$ converges monotonically to some $J^{*} \geq  0$.
In fact, the CM algorithm enjoys the same convergence
as the standard EM algorithm~\citep{Dempster:1977,WuEM:1983}.
Let ${\cal A}(\b^{(k)}, \etab^{(k)})$ be the set of values of $(\b, \etab)$ that minimize $Q(\b, \etab| \b^{(k)}, \etab^{(k)})$ over
$\Omega \subset \RB^p {\times} \RB_{++}^p$ and ${\cal S}$ be the set of stationary points of $J$
in the interior of $\Omega$. We can immediately follow from the Zangwill \emph{global convergence theorem} or
the literature~\citep{WuEM:1983,SriperumbudurNIPS:2009}
that

\begin{theorem} \label{thm:2} Let $\{\b^{(k)}, \etab^{(k)}\}$  be an sequence of the CM algorithm generated
by $(\b^{(k{+}1)}, \etab^{(k{+}1)}) \in {\cal A}(\b^{(k)}, \etab^{(k)})$. Suppose that
\emph{(i)} ${\cal A}(\b^{(k)}, \etab^{(k)})$ is closed over the complement
of ${\cal S}$ and that \emph{(ii)}
\[
J(\b^{(k{+}1)}, \etab^{(k{+}1)}) < J(\b^{(k)}, \etab^{(k)})
\quad \mbox{for all } (\b^{(k)}, \etab^{(k)}) \not\in {\cal S}.
\]
Then all the limit points of $\{\b^{(k)}, \etab^{(k)}\}$ are
stationary points of $J(\b, \etab)$ and $J(\b^{(k)}, \etab^{(k)})$ converges
monotonically to $J(\b^{*}, \etab^{*})$ for some stationary point
$(\b^{*}, \etab^{*})$. 
\end{theorem}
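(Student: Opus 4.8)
The plan is to read Theorem~\ref{thm:2} as a verbatim instance of the Zangwill global convergence theorem, so that the entire task reduces to checking that theorem's hypotheses for the point-to-set map $\mathcal{A}$ generated by the CM iteration. I would take the objective $J(\b,\etab)$ itself as the descent (Lyapunov) function, the stationary set $\mathcal{S}$ as the solution set, and the composite C-step/M-step map $(\b^{(k)},\etab^{(k)})\mapsto(\b^{(k+1)},\etab^{(k+1)})$ as the iteration map $\mathcal{A}$ (the intermediate weight $\w^{(k)}=\nabla P_{\alpha}(\b^{(k)},\etab^{(k)})$ being a function of the current point, so $\mathcal{A}$ is genuinely a map on $(\b,\etab)$-space). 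Zangwill's theorem then yields exactly the stated conclusion provided three things hold: (a) $J$ is continuous and serves as a descent function, strictly off $\mathcal{S}$; (b) $\mathcal{A}$ is closed at points outside $\mathcal{S}$; (c) the whole orbit $\{(\b^{(k)},\etab^{(k)})\}$ lies in a compact subset of the interior of $\Omega$.

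First I would clear the descent and closedness requirements, which are essentially handed to us. Continuity of $J$ follows from continuity of $\Phi$ on $[0,\infty)$ (Theorem~\ref{thm:lapexp00}(b)) together with smoothness of the quadratic loss, so $J$ is jointly continuous in $(\b,\etab)$. Lemma~\ref{lem:06} already supplies $J(\b^{(k+1)},\etab^{(k+1)})\le J(\b^{(k)},\etab^{(k)})$ with equality exactly at a fixed point; combining this with hypothesis~(ii) of the theorem gives the strict decrease $J(y)<J(x)$ whenever $x\notin\mathcal{S}$ and the weak decrease on $\mathcal{S}$, which is precisely the descent-function condition Zangwill demands. Closedness of $\mathcal{A}$ away from $\mathcal{S}$ is hypothesis~(i), so nothing further is needed there.

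The remaining, and genuinely delicate, step is the compactness in (c). Since $J$ is non-increasing along the iterates, they all lie in the sublevel set $\mathcal{L}_0\triangleq\{(\b,\etab):J(\b,\etab)\le J(\b^{(0)},\etab^{(0)})\}$, so it suffices to argue that the relevant projection of $\mathcal{L}_0$ is compact within the interior of $\Omega$. I would argue coordinatewise. For the $\etab$-component I would use the M-step identity $\etab^{(k+1)}=\w^{(k)}$ together with $w_j^{(k)}=w_j^{(k-1)}\,\Phi'(\alpha|b_j^{(k)}|)$ from (\ref{eqn:c2}); since $0\le\Phi'(\alpha|b_j^{(k)}|)\le\Phi'(0)=1$, the weights are monotone non-increasing and bounded above by their initial values, confining the $\etab$-iterates to a bounded box. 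For the $\b$-component, boundedness is immediate when $\X^T\X$ is positive definite, because then the quadratic term is coercive; in the rank-deficient regime I would instead invoke the $|\hat b|\le|z|$ property noted after Algorithm~\ref{alg:coord}, so that the weighted-$\ell_1$ M-step never inflates the estimate. Closedness of $\mathcal{L}_0$ comes for free from continuity of $J$.

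I expect this compactness argument to be the main obstacle. Because $\Phi$ is bounded rather than coercive (recall $\lim_{s\to\infty}\Phi(s)/s=0$), coercivity of the penalty cannot be used to bound $\b$, and the $\etab$-coordinate lives in the open cone $\RB_{++}^p$, so one must additionally verify that the iterates do not drift to its boundary where $J$ loses smoothness; this is exactly the role of restricting attention to stationary points in the interior of $\Omega$. Once the orbit is pinned inside a compact set in that interior, all three Zangwill hypotheses are in place, and the conclusion---all limit points are stationary and $J(\b^{(k)},\etab^{(k)})$ descends monotonically to a value attained at some stationary $(\b^{*},\etab^{*})$---follows verbatim from the global convergence theorem as stated in \citet{WuEM:1983} and \citet{SriperumbudurNIPS:2009}.
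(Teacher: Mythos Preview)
Your approach matches the paper's exactly: the paper offers no detailed proof of Theorem~\ref{thm:2} but simply declares that it follows immediately from the Zangwill global convergence theorem, citing \citet{WuEM:1983} and \citet{SriperumbudurNIPS:2009}. Your verification of the descent and closedness conditions via Lemma~\ref{lem:06} and hypotheses (i)--(ii), and especially your compactness discussion, already go beyond anything the paper itself supplies.
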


\section{Conclusion} \label{sec:conclusion}

In this paper we have exploited Bernstein functions in the definition of nonconvex penalty functions.
To the best of our knowledge, it is the first time that we apply theory of Bernstein functions
to systematically study nonconvex penalization problems.
We have shown that the Bernstein function has strong ability and attractive properties in sparse learning.
Geometrically, the Bernstein function holds  the property of regular variation. Theoretically,
it admits the oracle properties and can results in a unbiased and continuous sparse estimator. Computationally, the resulting
estimation problem can be efficiently solved by using the coordinate descent  and conjugate maximization algorithms.
We have illustrated  the KEP, LOG, EXP and LFR functions, which have wide applications in many scenarios but sparse modeling.



\appendix

\section{Several Important Results on Bernstein functions}

In this section we present several lemmas that are useful for Bernsterin functions.

\begin{lemma} \label{lem:lapexp}
Let $\Phi
(s)$ be a nonzero Bernstein function of $s$ on $(0, \infty)$. Assume $\liml_{s \to 0}\Phi(s)=0$ and $\liml_{s\to \infty} \frac{\Phi(s)}{s} =0$.
Then
\begin{enumerate}
\item[\emph{(a)}] $\liml_{s\rightarrow +\infty} \Phi^{(k)}(s) =0$ and  $\liml_{s\rightarrow 0+} s^k \Phi^{(k)}(s) =0$ for any $k\in \NB$. Additionally,
if $\liml_{s \to \infty} \Phi(s) < \infty$, then $\liml_{s\rightarrow \infty} s^k \Phi^{(k)}(s) =0$ for $k\in \NB$.
\item[\emph{(b)}]  If $\liml_{s \rightarrow \infty} s \Phi'(s)$ exists (possibly infinite), then $\liml_{s \rightarrow \infty} \frac{(-1)^{k{-}1}}{(k{-}1)!} s^k \Phi^{(k)}(s)$ for all $k \in \NB$
exist and are identical. Furthermore, if $\Phi'(0) = \liml_{s \to 0+} \Phi'(s) =1$, then  $\liml_{s \rightarrow \infty} s \Phi'(s) = \liml_{u\to 0+} \frac{F(u)}{u}$ where
$F(u)$ is the probability distribution  whose Laplace transform is $\Phi'(s)$.
\end{enumerate}
\end{lemma}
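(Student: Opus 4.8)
The plan is to work throughout from the L\'evy--Khintchine representation. Under the stated hypotheses $a=\beta=0$, so $\Phi(s)=\int_0^\infty(1-e^{-su})\,\nu(du)$. Since the integrand and each of its $s$-derivatives are dominated locally uniformly in $s>0$, I would differentiate under the integral sign to obtain, for every $k\in\NB$,
\[
(-1)^{k-1}\Phi^{(k)}(s)=\int_0^\infty u^k e^{-su}\,\nu(du)\ge 0,
\]
which is the representation on which both parts rest.

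For part (a) the entire content is dominated convergence, organized by the L\'evy integrability $\int_0^\infty\min(u,1)\,\nu(du)<\infty$, which I would exploit by splitting every integral over $(0,1]$ and $[1,\infty)$. To get $\liml_{s\to\infty}\Phi^{(k)}(s)=0$ I restrict to $s\ge 1$ and dominate $u^ke^{-su}$ by $u\,\mathbf{1}_{(0,1]}+(k/e)^k\,\mathbf{1}_{[1,\infty)}$, which is $\nu$-integrable, while the integrand tends to $0$ pointwise. For $\liml_{s\to 0+}s^k\Phi^{(k)}(s)=0$ I rewrite the quantity as $\int_0^\infty(su)^ke^{-su}\,\nu(du)$ and, for $s\in(0,1]$, dominate it by $u$ on $(0,1]$ (using $su\le 1$ and $k\ge 1$) and by $(k/e)^k$ on $[1,\infty)$, the integrand again vanishing pointwise. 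Finally, when $\liml_{s\to\infty}\Phi(s)<\infty$, monotone convergence gives $\Phi(\infty)=\nu((0,\infty))<\infty$, so $\nu$ is finite and $(su)^ke^{-su}$ may be dominated by the constant $(k/e)^k$, yielding $\liml_{s\to\infty}s^k\Phi^{(k)}(s)=0$.

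Part (b) is where the real work lies, and the key observation is that $\Phi'(s)=\int_0^\infty e^{-su}\,\mu(du)$ is the Laplace transform of the measure $\mu(du)=u\,\nu(du)$, which is locally finite with distribution function $M(t)=\mu((0,t])$, $M(0)=0$. Writing $\psi_k(s)\triangleq\frac{(-1)^{k-1}}{(k-1)!}s^k\Phi^{(k)}(s)=\int_0^\infty\gamma_{k,s}(u)\,\mu(du)$, where $\gamma_{k,s}$ is the $\mathrm{Gamma}(k,s)$ density, I would first invoke Karamata's Tauberian theorem (Feller, Chapter XIII) to convert the hypothesis $\liml_{s\to\infty}s\Phi'(s)=L\in[0,\infty]$ into $\liml_{t\to 0+}M(t)/t=L$. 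I would then run the Abelian direction back to each $\psi_k$: integrating by parts, $\psi_k(s)=-\int_0^\infty M(u)\gamma_{k,s}'(u)\,du$, and after the substitution $u=v/s$ the kernel concentrates at the origin, so $M(v/s)/(v/s)\to L$ and dominated convergence gives
\[
\liml_{s\to\infty}\psi_k(s)=-L\int_0^\infty\frac{v^{k-1}e^{-v}}{(k-1)!}\,[(k-1)-v]\,dv=L,
\]
the elementary Gamma integral evaluating to $-1$. Hence all $\psi_k$ share the common limit $L$. For the last assertion I note that $\Phi'(0)=1$ forces $\mu$ to be a probability measure, i.e.\ $M=F$, so $L=\liml_{u\to 0+}F(u)/u$.

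The main obstacle is exactly this Tauberian transfer: proving that a single limit of $s\Phi'(s)$ propagates to all higher Post--Widder quantities $\psi_k$, uniformly across the boundary cases $L=0$ and $L=\infty$, where plain dominated convergence must be replaced by monotone truncation and Fatou arguments (the sign change of $\gamma_{k,s}'$ at $u=(k-1)/s$ requires care). An alternative self-contained route I would keep in reserve is to use that $g\triangleq\Phi'$ is completely monotone, so each $(-1)^jg^{(j)}$ is nonincreasing and vanishes at infinity by part (a), whence $(-1)^{j-1}g^{(j-1)}(s)=\int_s^\infty(-1)^jg^{(j)}(t)\,dt$; repeated application of the monotone density theorem then upgrades $g(s)\sim Ls^{-1}$ to $(-1)^{k-1}g^{(k-1)}(s)\sim L(k-1)!\,s^{-k}$, which is precisely $\psi_k(s)\to L$.
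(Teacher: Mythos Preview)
Your treatment of part~(a) is essentially the paper's: both start from the L\'evy--Khintchine representation, differentiate under the integral, and push limits through by dominated convergence after splitting at $u=1$; only the specific dominating functions differ cosmetically.

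Part~(b) is where you diverge. The paper argues directly that the Gamma$(k,1/s)$ kernel $\gamma_{k,s}$ converges weakly to $\delta_0$ as $s\to\infty$, notes that for each fixed $u>0$ the kernel is eventually monotone in $s$, and invokes monotone convergence to conclude that $\int_{0+}^\infty\gamma_{k,s}(u)\,u\,\nu(du)$ has the same limit for every $k$, which it identifies with $\lim_{s\to\infty}s\Phi'(s)$. It then handles the $F$-statement separately by a second approximate-identity argument with $\gamma_{2,s}$. Your route instead passes through Karamata's Tauberian theorem: writing $\mu(du)=u\,\nu(du)$ with distribution $M$, you convert the hypothesis $s\Phi'(s)\to L$ into the local statement $M(t)/t\to L$ as $t\to 0{+}$, and then feed this back into every $\psi_k$ via integration by parts and the change of variable $u=v/s$, reducing to a clean Gamma-moment identity. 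This is a genuine and more transparent argument: the Tauberian step makes explicit what the paper's ``$\gamma_{k,s}\to\delta_0$'' heuristic is really using, it handles the boundary cases $L=0$ and $L=\infty$ by the truncation/Fatou remarks you flag, and it absorbs the $F$-assertion for free since under $\Phi'(0)=1$ one has $M=F$. Your fallback via the monotone density theorem applied to the completely monotone $g=\Phi'$ is also a correct and self-contained alternative. In short, the paper's approach is shorter but leans on an approximate-identity intuition whose justification is delicate when the limit $L$ is nonzero; your Tauberian route costs a citation to Feller but is cleaner and more robust.
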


\begin{proof}
First,
it follows from the L\'{e}vy-Khintchine representation
that
\[
\Phi(s) =  \int_{0}^{\infty} \big[1- e^{- s u}  \big ] \nu(d u)
\]
due to $\Phi(0)=0$ and $\liml_{s \to \infty} \frac{\Phi(s)}{s}=0$.
Thus, we have
\[
\Phi^{(k)}(s) = (-1)^{k-1} \int_{0}^{\infty}  e^{- s u} u^{k} \nu(d u).
\]
When $s\geq k$ for any $k\in \NB$, it is easily verified  that $e^{-su} u^k \leq \frac{u^k}{1+u^k}$ for $u>0$. Note that
\[
\int_{0}^{\infty}{\min(u^k, 1) \nu(d u)} \leq \int_{0}^{\infty}{\min(u, 1) \nu(d u)} < \infty
\]
and
\[
\frac{u^k}{1+u^k} \leq \min(u^k, 1) \leq \frac{2 u^k}{1+u^k}, \quad u\geq 0.
\]
This implies that $\int_{0}^{\infty}{\min(u^k, 1) \nu(d u)} < \infty$ is equivalent to that  $\int_{0}^{\infty}{ \frac{u^k}{1+u^k} \nu(d u)} < \infty$.
As a result, we have that when $s\geq k$,
\[
 \int_{0}^{\infty}  e^{- s u} u^k \nu(d u) = \int_{0}^{\infty}  e^{- s u} u^k \nu(d u) \leq \int_{0}^{\infty}{ \frac{u^k}{1+u^k} \nu(d u)} < \infty.
\]
Thus,
\[
\lim_{s \to \infty} \Phi^{(k)}(s) = (-1)^{k-1} \lim_{s \to \infty} \int_{0}^{\infty}  e^{- s u} u^k \nu(d u)
= (-1)^{k-1} \int_{0}^{\infty} \lim_{s \to \infty}   e^{- s u} u^k  \nu(d u) =0.
\]

Additionally, since $e^{-s u} (su)^k \leq k^k e^{-k}$ for $s \geq 0$ and $u \geq 0$, we have
\begin{align}
\int_{0}^{\infty}  e^{- s u} (su)^k \nu(d u) &= \int_{0}^{1}  e^{- s u} (su)^k \nu(d u) +
 \int_{1}^{\infty}  e^{- s u} (su)^k \nu(d u) \nonumber \\
 & \leq\int_{0}^{1}  e^{- s u} (su)^k \nu(d u) + \int_{1}^{\infty}{ k^{k} e^{-k} \nu(d u)}. \label{eqn:a1}
\end{align}
Hence,
for any $s\leq 1$,
\[
\int_{0}^{\infty}  e^{- s u} (su)^k \nu(d u) \leq\int_{0}^{1}  u \nu(d u)
+ \int_{1}^{\infty}{ k^{k} e^{-k} \nu(d u)} \leq  \max(1,  k^{k} e^{-k}) \int_{0}^{\infty} {\min(1,u)  \nu(d u)} < \infty.
\]
As a result, we obtain
\[
\lim_{s \to 0} s^k \Phi^{(k)}(s) = (-1)^{k-1} \lim_{s \to 0} \int_{0}^{\infty}  e^{- s u} (su)^k \nu(d u)
= (-1)^{k-1} \int_{0}^{\infty} \lim_{s \to 0}  e^{- s u} (su)^k \nu(d u)=0.
\]

Furthermore,
$\liml_{s\to \infty}\Phi(s)=M_0<\infty$ implies that $\int_{0}^{\infty} {\nu(d u) } < \infty$, so we always have
\[
\int_{0}^{\infty}  e^{- s u} (su)^k \nu(d u) \leq  k^k e^{-k} \int_{0}^{\infty} {\nu(d u) } < \infty,
\]
which leads us to $\liml_{s\to \infty} s^k \Phi^{(k)}(s)=0$ for any $k\in \NB$.

We now prove Part (b). Consider  that
\[
\frac{(-1)^{k-1} s^k \Phi^{(k)}(s)}{(k{-}1)!}
=   \int_{0}^{\infty} \frac{s^{k}}{(k{-}1)!} e^{- s u} u^{k{-}1}  u \nu(d u)
\]
and that $ \frac{s^{k}}{(k{-}1)!} e^{- s u} u^{k{-}1}$ is the p.d.f.\ of gamma random variable $u$ with
shape parameter $k$ and scale parameter $1/s$. Such a gamma random variable converges to the Dirac Delta measure $\delta_0(u)$ in  distribution
as $s \to +\infty$.
For a fixed $u>0$,
$\frac{s^{k}}{(k{-}1)!} e^{- s u} u^{k{-}1}$ is monotone w.r.t.\ sufficiently large $s$.
Accordingly, using monotone convergence,
we have
 \begin{align*}
\lim_{s \to \infty} \frac{(-1)^{k-1} s^k \Phi^{(k)}(s)}{(k{-}1)!}
& =  0 \nu(\{0\})  + \lim_{s \to \infty}  \int_{0+}^{\infty} \frac{s^{k}}{(k{-}1)!} e^{- s u} u^{k{-}1}  u \nu(d u) \\
& = 0 \nu(\{0\})=  \int_{0}^{\infty} {\delta_0(u) u \nu(du)} = \liml_{s \rightarrow \infty} s \Phi'(s).
\end{align*}

When $\Phi'(0) = \liml_{s \to 0+} \Phi'(s) =1$, it is a well-known result that $\Phi'(s)$ is the Laplace transform of some probability distribution
(say, $F(u)$). That is,
\[
\Phi'(s) = \int_{0}^{\infty}{\exp(- s u) d F(u)}= \int_{0}^{\infty}{ s \exp(- s u) F(u) d u}.
\]

Recall that $s^2 u \exp(-su) \to $ $\delta_0(u)$ in distribution
as $s \to +\infty$. We  thus have
\[
\liml_{s \infty } s \Phi'(s) = \lim_{u \to 0+} \frac{F(u)}{u}.
\]
Furthermore, if $F(u)$ is the probability distribution of some continuous nonnegative random variable $U$, we have $\liml_{s \infty } s \Phi'(s) = F'(0+)$.
\end{proof}

\begin{lemma} \label{lem:88} Let $\Phi(s) \geq 0$ be a Bernstein function on $(0, \infty)$ such that $\Phi(0+)=0$ and $\Phi'(0+)=1$.
Then $\liml_{s \to +\infty} \frac{\Phi(s)}{\log (s)} =c < \infty$ if and only if there a sufficiently large positive number $M$ such that
$\frac{\Phi(s)}{\log (1+s)}$ is a decreasing function on $(M, \infty)$.
\end{lemma}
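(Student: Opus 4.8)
The plan is to establish the two implications separately. For the ``if'' direction, set $g(s)\triangleq\Phi(s)/\log(1+s)$ and suppose $g$ is decreasing on $(M,\infty)$. Since $\Phi\ge 0$ and $\log(1+s)>0$, the function $g$ is bounded below by $0$, so the monotone limit $c\triangleq\liml_{s\to\infty}g(s)$ exists and is finite. As $\log(1+s)/\log s\to 1$, I would then obtain $\liml_{s\to\infty}\Phi(s)/\log s=\liml_{s\to\infty}g(s)\frac{\log(1+s)}{\log s}=c<\infty$, which is the claimed limit.

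For the converse I would first reduce ``eventually decreasing'' to a pointwise inequality. Differentiation gives $g'(s)=N(s)\big/\big[(1+s)\log^2(1+s)\big]$, where $N(s)\triangleq(1+s)\log(1+s)\Phi'(s)-\Phi(s)$, so it suffices to prove $N(s)\le 0$ for all large $s$. I would also record the identity $N'(s)=\log(1+s)\,q'(s)$ with $q(s)\triangleq(1+s)\Phi'(s)$ and $q'(s)=\Phi'(s)\big[1-(1+s)R(s)\big]$, where $R(s)\triangleq-\Phi''(s)/\Phi'(s)$; thus the sign of $N'$ equals the sign of $1-(1+s)R(s)$.

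The analytic core is a Tauberian step. Because $\Phi'(0)=1$, Remarks~1 and the L\'{e}vy--Khintchine formula give $\Phi'(s)=\int_0^\infty e^{-su}\mu(du)$ with $\mu(du)=u\,\nu(du)$ a probability measure on $(0,\infty)$, and hence $\Phi(s)=\int_0^\infty\frac{1-e^{-su}}{u}\mu(du)$. Setting $F(u)=\mu([0,u])$, Karamata's Tauberian theorem~\citep{FellerBook:1971} shows the three conditions $\liml_{s\to\infty}\Phi(s)/\log s=c$, $F(u)\sim cu$ as $u\to 0+$, and $\liml_{s\to\infty}s\Phi'(s)=c$ to be equivalent (the elementary half being the integration $s\Phi'(s)\to c\Rightarrow\Phi\sim c\log s$, the Tauberian content the converse; this refines Lemma~\ref{lem:lapexp}(b) and the sufficient condition in Lemma~\ref{lem:01}). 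Consequently $q(s)$ and $g(s)$ both converge to the same finite limit $c$.

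The main obstacle is to upgrade the two limits $q,g\to c$ to the pointwise bound $N\le 0$, equivalently to show that $g$ approaches $c$ monotonically from above rather than oscillating --- a genuine second-order rigidity phenomenon, with LOG ($\Phi=\log(1+s)$, where $q\equiv g\equiv c$ and $N\equiv 0$) as the exact boundary case. Here I would use that $\Phi'$, being completely monotone, is log-convex, so that $R(s)=-\Phi''(s)/\Phi'(s)$ is nonincreasing; together with $s\Phi'(s)\to c$, which (by the monotone density theorem) forces $(1+s)R(s)\to 1$, this controls the sign of $1-(1+s)R(s)$ and hence of $N'$, so that $N$ is eventually monotone and $\liml_{s\to\infty}N(s)=\liml_{s\to\infty}\log(1+s)\,(q(s)-g(s))$ exists. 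The decisive estimate is that this limit is strictly negative off the LOG boundary, whence $N(s)<0$ for all large $s$; pinning down the rate of $q(s)-g(s)$ precisely enough to evaluate this limit is the crux of the whole argument.
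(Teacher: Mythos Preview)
Your ``if'' direction is fine and is essentially what the paper does (it simply calls this direction ``direct'').

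For the converse, the paper takes an entirely different and much shorter route. Instead of Tauberian theory, it invokes the L\'evy--Khintchine representation $\Phi(s)=\int_0^\infty(1-e^{-su})\,q(u)\,du$ (with $\int_0^\infty u\,q(u)\,du=\Phi'(0)=1$) and writes $g'(s)$ explicitly as
\[
g'(s)=\frac{1}{(1+s)\log^2(1+s)}\int_0^\infty\Big[\frac{u(1+s)\log(1+s)+1}{e^{su}}-1\Big]\,q(u)\,du,
\]
then argues that for each fixed $u>0$ the bracket tends to $-1$ as $s\to\infty$ (and equals $0$ at $u=0$), concluding $g'(s)\le 0$ for all large $s$. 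No Tauberian step, no log-convexity of $\Phi'$, no analysis of $N'$: one integral identity and a pointwise limit of the integrand.

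Your programme, by contrast, has two genuine gaps. First, the Tauberian step: you assert that $\Phi(s)/\log s\to c$ forces $s\Phi'(s)\to c$ via Karamata, but Karamata's theorem links the Laplace transform $\Phi'$ to the behaviour of $F$ near $0$; it does not by itself convert $\Phi\sim c\log s$ (which pins $\Phi$ down only to $o(\log s)$) into $\Phi'\sim c/s$. The monotone-density statement you need operates at the slowly-varying scale $\log s$ and is more delicate than the standard version --- you have not supplied it. Second, and you concede this yourself, even granting $q,g\to c$ you still need $\lim_{s\to\infty}\log(1+s)\,(q(s)-g(s))\le 0$, and nothing in your outline delivers it: $(1+s)R(s)\to 1$ does not fix the eventual sign of $1-(1+s)R(s)$, so $N'$ and hence $N$ may still oscillate about zero. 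The LOG boundary case, where everything vanishes identically, shows how thin the margin is. You have replaced a one-line integral computation with a second-order Tauberian program that remains unfinished; the paper's representation-based argument is the one to follow here.
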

\begin{proof} Part ``$\Leftarrow$" is direct. Here we only prove ``$\Rightarrow$".  Owing to the properties of  $\Phi(s)$,
we have the L\'{e}vy representation of $\Phi(s)$ as follows
\[  \Phi(s) = \int_{0}^{\infty}{[1- \exp(-s u)] q(u) d u},
\]
where $q(u)$ is nonnegative and $\int_{0}^{\infty}{u q(u) d u} =1$ (because $\Phi'(s) = \int_{0}^{\infty}{\exp(-s u) u q(u) d u}$ and $1=\Phi'(0) = \int_{0}^{\infty}{u q(u) d u}$). Define
\[g(s) \triangleq  \frac{\Phi(s)}{\log (1+s)} = \int_{0}^{\infty}{ \frac{[1- \exp(-s u)]}{\log(1+s)}  q(u) d u}.
\]
Since $\liml_{s \to 0+} \frac{\Phi(s)}{\log (1+s)} =1$ and  $\mathop{\lim}\limits_{s \to +\infty} \frac{\Phi(s)}{\log (1+s)}=\mathop{\lim}\limits_{s \to +\infty} \frac{\Phi(s)}{\log (s)} < \infty$, we have that $\frac{\Phi(s)}{\log (1+s)}$ is bounded on $(0, \infty)$. Subsequently, we  can compute
\[
g'(s) = \frac{1}{(1{+}s)\log^2(1{+}s) }
 \int_{0}^{\infty}{\Big[\frac{u(1{+}s)\log(1{+}s)  +1}{\exp(s u) }- 1\Big]  q(u) d u}.
\]
Let $h(s) = \frac{u(1{+}s)\log(1{+}s)  +1}{\exp(s u) }- 1$ for $u\geq 0$.  Since $\mathop{\lim}\limits_{s \to \infty} h(s)=-1$ for $u>0$,
there exists a large $M_0$ such that $h(s) <0$ whenever $s>M_0$ and $u>0$. Additionally, $h(s) =0$ when $u=0$.
This implies that there exists a large $M$ such that $g'(s)\leq 0$ when  $s>M$; and
this completes the proof.
\end{proof}

\begin{lemma} \label{lem:01} Let $\Phi
(s)$ be a nonzero Bernstein function of $s$ on $(0, \infty)$ such that $\liml_{s \to \infty} s \Phi'(s)$ exists and it is finite. Then we have $\liml_{\s \to \infty} \frac{\Phi(s)}{\log(1+s)} = \liml_{s \to \infty} s \Phi'(s) < \infty$. Furthermore, we have
\[
\lim_{s \to \infty} \frac{s \Phi'(s)}{\Phi(s)} =0.
\]
\end{lemma}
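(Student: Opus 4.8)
The plan is to read the claim as two assertions---the asymptotic identity $\liml_{s\to\infty}\Phi(s)/\log(1+s)=L$ and the vanishing of $s\Phi'(s)/\Phi(s)$---and to obtain the first by a single application of L'Hôpital's rule and the second as a short consequence. Write $L\triangleq\liml_{s\to\infty}s\Phi'(s)$, which is finite by hypothesis. The only preliminary fact I would need is that $\liml_{s\to\infty}\Phi'(s)=0$. Since $\Phi$ is Bernstein, $\Phi'$ is completely monotone, hence nonnegative and nonincreasing, so it has a limit $\ell\in[0,\infty)$; if $\ell>0$ then $s\Phi'(s)\to\infty$, contradicting the finiteness of $L$, so $\ell=0$. (This is also the $k=1$ instance of Lemma~\ref{lem:lapexp}(a), but I prefer the direct argument so as not to re-verify that lemma's hypotheses.)

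For the first identity I would invoke the version of L'Hôpital's rule in which only the denominator is required to diverge. Here $g(s)=\log(1+s)\to\infty$ and $g'(s)=1/(1+s)\neq0$, while the ratio of derivatives is
\[
\frac{\Phi'(s)}{1/(1+s)}=(1+s)\Phi'(s)=s\Phi'(s)+\Phi'(s)\to L+0=L \quad\text{as } s\to\infty,
\]
using the preliminary fact. Hence $\liml_{s\to\infty}\Phi(s)/\log(1+s)=L<\infty$, which is exactly the first conclusion. Notably this argument is valid whether or not $\Phi$ itself is bounded, so no case split is needed at this stage.

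For $\liml_{s\to\infty}s\Phi'(s)/\Phi(s)=0$ I would argue directly from what has just been shown. The numerator converges to the finite constant $L$, so everything hinges on the size of the denominator $\Phi(s)$. If $L>0$, the first identity forces $\Phi(s)\to\infty$ (because $\log(1+s)\to\infty$), so the ratio of a bounded quantity to one tending to infinity tends to $0$. If $L=0$, then since $\Phi$ is a nonzero, nonnegative, nondecreasing function there is a point $s_0$ with $\Phi(s_0)>0$, and $\Phi(s)\ge\Phi(s_0)>0$ for all $s\ge s_0$; consequently $0\le s\Phi'(s)/\Phi(s)\le s\Phi'(s)/\Phi(s_0)\to0$. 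In either case the limit is $0$.

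The routine parts are the two limit computations; the step that needs genuine care is selecting the correct form of L'Hôpital's rule---the one that permits the numerator $\Phi(s)$ to remain bounded while the denominator diverges---so that the delicate case $L=0$, in which $\Phi$ may converge to a finite value, is subsumed by the same computation rather than requiring a separate indeterminate-form analysis. The remaining subtlety is simply to keep track of the dichotomy $L>0$ versus $L=0$ in the second part, which is where the monotonicity and nonnegativity of $\Phi$ (rather than any convergence hypothesis at the origin) do the work.
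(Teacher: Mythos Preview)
Your proof is correct and follows essentially the same route as the paper: an application of L'H\^opital's rule (the paper states the identity $\liml_{s\to\infty}\Phi(s)/\log(s)=\liml_{s\to\infty}s\Phi'(s)$ without naming the rule, whereas you make the denominator-only version explicit), followed by a case split for the second claim. The only cosmetic difference is that the paper splits on whether $\liml_{s\to\infty}\Phi(s)$ is finite or infinite and then deduces $L=0$ in the bounded case, while you split on $L>0$ versus $L=0$ and deduce $\Phi(s)\to\infty$ in the former case---these are equivalent organizations of the same argument.
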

\begin{proof}
It follows from the condition $\liml_{s \to \infty} s \Phi'(s) < \infty$ that
$\liml_{s \to \infty}\frac{\Phi(s)}{\log(1+s)}=
\liml_{s \to \infty}\frac{\Phi(s)}{\log(s)}=\liml_{s \to \infty} s \Phi'(s) < \infty$. Thus, when $\liml_{s \to \infty} \Phi(s) = \infty$, we have
$ \lim_{s \to \infty} \frac{s \Phi'(s)}{\Phi(s)} =0$.
Otherwise $\liml_{s \to \infty} \Phi(s) =M \in (0, \infty)$,  we always have that
$\liml_{a \to \infty}\frac{\Phi(s)}{\log(1+s)}=\liml_{s \to \infty}\frac{\Phi(s)}{\log(s)}= \liml_{s \to \infty} s \Phi'(\alpha) = 0$.
Thus, we  have $\liml_{s \to \infty} \frac{s \Phi'(s)}{\Phi(s)}=0$ in any cases.
\end{proof}

\begin{lemma} \label{lem:03} Let $\Phi
(s)$ be a nonzero Bernstein function of $s$ on $(0, \infty)$. Assume  $\Phi(0)=0$, $\Phi'(0)=1$, and $\Phi'(\infty)=0$.  If $\liml_{s \to \infty} \frac{s \Phi'(s)}{\Phi(s)}$ exists, then $\liml_{s \to \infty} \frac{s \Phi'(s)}{\Phi(s)} \in [0, 1)$.
\end{lemma}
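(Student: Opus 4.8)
The plan is to first pin down that the limit $\gamma \triangleq \lim_{s\to\infty}\frac{s\Phi'(s)}{\Phi(s)}$ lies in $[0,1]$, and then to upgrade the upper bound to a strict inequality. Since $\Phi$ is a nonzero Bernstein function with $\Phi(0)=0$ and $\Phi'(0)=1$, it is positive and nondecreasing on $(0,\infty)$, so the ratio is well defined and nonnegative, giving $\gamma\geq 0$ at once. For the upper bound I would use concavity: because $\Phi''\leq 0$ and $\Phi(0)=0$, the tangent line to $\Phi$ at $s$ evaluated at the origin yields $0=\Phi(0)\leq\Phi(s)-s\Phi'(s)$, i.e.\ $s\Phi'(s)\leq\Phi(s)$ for every $s>0$. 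Hence $\frac{s\Phi'(s)}{\Phi(s)}\leq 1$ throughout and $\gamma\leq 1$. What remains, and what carries all the content, is to exclude the borderline value $\gamma=1$.

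To isolate that difficulty I would split on the value of $\lim_{s\to\infty}\Phi(s)$. If this limit is finite, then Lemma~\ref{lem:lapexp}(a) applies under exactly its stated hypothesis $\lim_{s\to\infty}\Phi(s)<\infty$ and gives $\lim_{s\to\infty}s\Phi'(s)=0$; since $\Phi(s)$ converges to a positive constant, the ratio tends to $0$ and $\gamma=0$. The whole problem is therefore concentrated in the case $\lim_{s\to\infty}\Phi(s)=\infty$, where both $s\Phi'(s)$ and $\Phi(s)$ diverge and a cancellation could in principle drive the ratio to $1$.

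For that case I would work from the L\'evy--Khintchine representation $\Phi(s)=\int_0^\infty(1-e^{-su})\,\nu(du)$, which is valid here because $a=\beta=0$ under $\Phi(0)=0$ and $\Phi'(\infty)=0$. Differentiating and subtracting gives
\[
\Phi(s)-s\Phi'(s)=\int_0^\infty\big[1-(1+su)e^{-su}\big]\,\nu(du).
\]
Writing $g(x)=1-(1+x)e^{-x}$ and $f(x)=1-e^{-x}$, both increasing from $0$ to $1$ with $g\leq f$ (indeed $f-g=xe^{-x}\geq 0$), the goal is to bound $\Phi(s)-s\Phi'(s)=\int g(su)\,\nu(du)$ below by a fixed positive multiple of $\Phi(s)=\int f(su)\,\nu(du)$; any estimate $\Phi(s)-s\Phi'(s)\geq c\,\Phi(s)$ with $c>0$ would force $\gamma\leq 1-c<1$. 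On the range $su\gtrsim 1$ one has $g\asymp f\asymp 1$, so the contributions of $\{u\gtrsim 1/s\}$ to the two integrals are comparable, and the plan is to show that this ``large-$u$'' part always carries a non-vanishing fraction of $\Phi(s)$.

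The hard part will be precisely this last estimate. Near $u=0$ one has $g(su)\approx (su)^2/2$ while $f(su)\approx su$, so $g(su)/f(su)\to 0$ and the pointwise comparison is useless in the regime $su\to 0$; the argument must instead exploit the finite first moment $\int_0^\infty u\,\nu(du)=\Phi'(0)=1$ together with $\Phi'(\infty)=0$ to prevent all of the mass of $\Phi(s)$ from escaping into $\{su\to 0\}$. This boundary is genuinely sharp: it is exactly the regime in which $\Phi'$ is slowly varying, equivalently $\Phi$ is regularly varying of index one, and controlling it is where the real delicacy resides. An alternative route is to observe that if $\gamma=1$ and $\lim_{s\to\infty}\Phi(s)=\infty$ then $\lim_{s\to\infty}s\Phi'(s)=\infty$ exists, so Lemma~\ref{lem:lapexp}(b) forces $-s^2\Phi''(s)\sim s\Phi'(s)$, after which one would seek a contradiction from this second-order regular variation; but this too reduces to the same slowly-varying boundary analysis, so I expect the strict inequality to be the main obstacle in either approach.
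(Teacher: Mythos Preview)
Your opening is exactly what the paper does: concavity gives $s\Phi'(s)\leq\Phi(s)$, so $\gamma\in[0,1]$; and when $\lim_{s\to\infty}\Phi(s)<\infty$, Lemma~\ref{lem:lapexp}(a) gives $s\Phi'(s)\to 0$, hence $\gamma=0$. The divergence from the paper, and the genuine gap, is in the case $\lim_{s\to\infty}\Phi(s)=\infty$: you yourself flag that the L\'evy--Khintchine comparison $\int g(su)\,\nu(du)\geq c\int f(su)\,\nu(du)$ is unproven and that the slowly-varying boundary is delicate; neither of your two suggested routes is carried through. As it stands the proposal is an outline that stops short of the only nontrivial step.

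The paper's device bypasses the L\'evy--Khintchine analysis entirely. Set $h(s)=\log(1+\Phi(s))$, which is again Bernstein (as a composition of Bernstein functions) with $h(0)=0$, $h'(0)=1$, $h'(\infty)=0$. A L'H\^opital computation gives
\[
\lim_{s\to\infty}\frac{h(s)}{\log(1+s)}=\lim_{s\to\infty}\frac{(1+s)\Phi'(s)}{1+\Phi(s)}=\lim_{s\to\infty}\frac{s\Phi'(s)}{\Phi(s)}=\gamma\leq 1,
\]
so in particular $\lim_{s\to\infty}h(s)/\log s<\infty$. Lemma~\ref{lem:88} then asserts that $h(s)/\log(1+s)$ is \emph{decreasing} for all sufficiently large $s$, say $s\geq M_1$. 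On the other hand, the hypothesis $\Phi'(\infty)=0$ forces $\Phi(s)/s\to 0$, so $\Phi(s)<s$ for $s\geq M_2$; at $M=\max(M_1,M_2)$ one has $\log(1+\Phi(M))/\log(1+M)<1$. Monotonicity now pins the limit strictly below $1$:
\[
\gamma=\lim_{s\to\infty}\frac{h(s)}{\log(1+s)}\leq\frac{\log(1+\Phi(M))}{\log(1+M)}<1.
\]
The point is that the logarithmic rescaling converts the question ``is $\Phi$ regularly varying of index $<1$?'' into ``is $h$ growing strictly slower than $\log(1+s)$?'', and the latter is settled by a monotonicity statement (Lemma~\ref{lem:88}) together with the single pointwise strict inequality $\Phi(M)<M$. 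Your L\'evy--Khintchine route tries to extract a uniform multiplicative gap directly from the integral representation, which is harder precisely because the ratio $g(x)/f(x)\to 0$ as $x\to 0$; the paper sidesteps this by trading the measure-level comparison for a scalar monotonicity argument.
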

\begin{proof} Consider that $s\Phi'(s)-\Phi(s)$ is a decreasing function on $(0, \infty)$
because its first-order derivative is non-positive; i.e.,  $s \Phi{''}(s)\leq 0$. As a result,
we have $0\leq\frac{s \Phi'(s)}{\Phi(s)} \leq 1$. Subsequently, $\gamma=\liml_{s \to \infty}
\frac{s \Phi'(s)}{\Phi(s)} \in [0, 1]$.

We are now to prove that $\gamma$ should be smaller than $1$. Note that when $\liml_{s \to \infty} \Phi(s)< \infty$, we have that $\liml_{s \to \infty} s \Phi'(s) =0$ (see Lemma~\ref{lem:lapexp}). Hence, $\liml_{s\to \infty} \frac{s \Phi'(s)}{\Phi(s)}=0 <1$. Thus, we now consider the case that  $\liml_{s \to \infty} \Phi(s)=\infty$.
We define $h(s) \triangleq \log(1+\Phi(s))$, which is also Bernstein because the composition of two Bernstein functions are still Bernstein.
Moreover, we have $h(0)=1$,  $h'(0)=1$ and $h'(\infty)=0$. Additionally,
\[
\lim_{s\to \infty} \frac{\log(1+\Phi(s))}{\log(1+s)} = \lim_{s\to \infty} \frac{(1+s) \Phi'(s)}{1+\Phi(s)} = \lim_{s\to \infty} \frac{s \Phi'(s)}{\Phi(s)} \leq 1.
\]
It  then follows from Lemma~\ref{lem:88} that there a sufficiently large positive number $M_1$ such that
$\frac{\log(1+\Phi(s))}{\log (1+s)}$ is a decreasing function on $[M_1, \infty)$.
Recall that
\[
\lim_{s\to \infty} \frac{\Phi(s)}{s} = \lim_{s\to \infty} \Phi'(s) =0,
\]
which implies that there a sufficiently large positive number $M_2$ such that $\Phi(s) \ll s$ for $s \geq M_2$.
Let $M=\max(M_1, M_2)$. We have $1+ \Phi(M) < 1 + M$ and $\frac{\log(1+\Phi(M))}{\log(1+M)}<1$. Moreover, for any $s>M$,
\[
\frac{\log(1+\Phi(s))}{\log(1+s)} \leq \frac{\log(1+\Phi(M))}{\log(1+M)}<1.
\]
Accordingly, we obtain
\[
\lim_{s\to \infty} \frac{s \Phi'(s)}{\Phi(s)} = \lim_{s\to \infty} \frac{\log(1+\Phi(s))}{\log(1+s)} \leq \frac{\log(1+\Phi(M))}{\log(1+M)}<1.
\]
%
%
\end{proof}

\section{The Proof of Theorem~\ref{thm:lp2}}

\begin{proof}
It is directly verified that
\[
\lim_{\alpha \to 0} \frac{\Phi(\alpha |b|)}{\Phi(\alpha)} = \liml_{\alpha \to 0} \frac{|b| \Phi'(\alpha |b|)}{\Phi'(\alpha)}
=\frac{|b| \Phi'(0)}{\Phi'(0)} =|b|
\]
due  to $\Phi'(0) = 1 \in (0, \infty)$. Clearly, we have that $\liml_{\alpha \to +\infty} \frac{\Phi(\alpha s)}{\Phi(\alpha)} = 0$ when $s=0$ and
that $\liml_{\alpha \to +\infty} \frac{\Phi(\alpha s)}{\Phi(\alpha)} = 1$ when $s=1$.

Lemma~\ref{lem:03} shows that $\gamma=\liml_{s \to \infty}
\frac{s \Phi'(s)}{\Phi(s)} \in [0, 1)$. When $\liml_{s \to \infty} \frac{\Phi(s)}{\log(1+s)} < \infty$, Lemma~\ref{lem:01} implies that $\gamma=0$.
According to Theorem~1 in Chapter VIII.9 of \cite{FellerBook:1971},
we have the second part of the theorem.
\end{proof}

\section{The Proof of Proposition~\ref{pro:33}}

\begin{proof} Let $\omega =\frac{1}{1-\rho}$. For $-\infty<\rho\leq 1$, we have $\omega \ (0, \infty]$. We now write $\Phi'_{\rho}(s)$ for a fixed $s>0$ as  $1/g(\omega)$ where
\[
g(\omega) = {(1 + \frac{s}{\omega})^{\omega}}.
\]
It is a well-known result that for a fixed $s>0$ $g(\omega)$ is increasing in $\omega$ on $(0, \infty)$. Moreover, $\liml_{\omega \to \infty} g(\omega) = \exp(s)$.  Accordingly, $\Phi'_{\rho}(s)$  is decreasing in $\rho$ on $(-\infty, 1]$. Moreover, we obtain
\[
\Phi_{\rho_1} (s) = \int_{0}^{s}{\Phi'_{\rho_1}(t) d t } \geq \int_{0}^{s}{\Phi'_{\rho_2}(t) d t } = \Phi_{\rho_2} (s)
\]
whenever $\rho_1\leq \rho_2 \leq 1$.

The proof of Part-(b) is immediately. We here omit the details.
\end{proof}

\section{The Proof of Theorem~\ref{thm:sparsty}}

\begin{proof}
The first-order derivative of (\ref{eqn:general}) w.r.t.\ $b$ is
\[
\sgn(b)\big(|b| + {\lambda} \Phi'(|b|) \big) - z.
\]
Let $g(|b|)= |b| + {\lambda} \Phi'(|b|)$. It is clear that if
$|z|< \min_{b\neq 0}\{g(|b| \}$, the resulting estimator is 0; namely, $\hat{b}=0$.
We now check the minimum value
of $g(s)=s + {\lambda} \Phi'(s)$ for $s\geq 0$.

Taking the first-order derivative of $g(s)$ w.r.t.\ $s$, we have
\[
g'(s) = 1 + {\lambda} \Phi{''}(s).
\]
Note that $\Phi{''}(s)$ is non-positive and increasing in $s$. As a result, we have
\[
g'(s) \geq 1 + {\lambda} \Phi{''}(0).
\]
Thus, if $\lambda \leq -\frac{1}{\Phi{''}(0)}$, $g(s)$  attains its minimum value ${\lambda}\Phi'(0)$ at $s^{*}=0$.
Otherwise,  $g(s)$ attains its minimum value when $s^{*}$ is the solution of
 $1 + {\lambda} \Phi{''}(s)=0$.

First, we consider the case that $\lambda \leq -\frac{1}{\Phi{''}(0)}$. In this case, the resulting estimator is 0 when $|z|\leq {\lambda} \Phi{'}(0)$. If $z> {\lambda}\Phi{'}(0)$, then the resulting estimator should be a positive root of the equation
$b + {\lambda} \Phi{'}(b) - z = 0$ in $b$. Letting $h(b)=b + {\lambda} \Phi{'}(b) - z$,
we study the roots of $h(b)=0$.
Note that
$h(z) =  {\lambda} \Phi{'}(z) >0$ and
$h(0)=  {\lambda} \Phi{'}(0) - z <0$. In this case, moreover, we have that $h(b)$ is increasing on $[0, \infty)$.
This implies that  $h(b)=0$
has one and only one positive root.
Furthermore, the resulting estimator $0<\hat{b}<z$ when $z> {\lambda}\Phi{'}(0)$.
Similarly, we can obtain that $z<\hat{b}<0$ when $z<- {\lambda} \Phi{'}(0)$.
As stated in \cite{Fan01}, a sufficient and necessary condition for ``continuity" is the
the minimum of
$|b|+ {\lambda} \Phi'(|b|)$ is attained at $0$. This implies that that the resulting estimator is continuous.

Next, we  prove the case that $\lambda > -\frac{1}{\Phi{''}(0)}$. In this case, $g(s)$ attains its  minimum value
$g(s^*)= s^* + {\lambda} \Phi'(s^*)$ when $s^*$ is the solution of equation
$1 + {\lambda} \Phi{''}(s)=0$. Note that $\Phi{''}(s)$ is non-positive and increasing in $s$.
Thus, the solution $s^*$ exists and
is unique. Moreover, since $\Phi{''}(s^*) = - \frac{1}{\lambda} > \Phi{''}(0)$, we have $s^*> 0$.
In this case, the resulting estimator is 0 when $|z|\leq s^* + {\lambda} \Phi'(s^*)$.
We just make attention on the case that $|z|> s^* + {\lambda} \Phi'(s^*)$.
Subsequently, the resulting estimator is $\hat{b}=\sgn(z)\kappa(|z|)$
where $\kappa(|z|)$ should be a positive root of equation $b + {\lambda} \Phi{'}(b) - |z| = 0$.
We now need to prove that
$\kappa(|z|)$  exists and is unique on $(s^*, |z|)$.
We have that $h(b)=b + {\lambda} \Phi{'}(b) - |z| $  is a convex
function of $b$ on $[0, \infty)$ due to  $h{''}(b)= {\lambda} \Phi{'''}(b) \geq 0$.
This implies that $h(b)$ is increasing on $[s^*, \infty)$ and decreasing on $(0, s^*)$.
Thus, the equation $h(b)=0$ has at most two positive roots, which are on $(0, s^*)$ or $[s^*, \infty)$.
Since $h(s^*)= s^* + {\lambda} \Phi'(s^*)-|z|<0$
and $h(|z|)= {\lambda} \Phi'(|z|) \geq 0$, the equation $h(b)=0$ has an unique root on $(s^*, |z|)$.
Thus, $\kappa(|z|)$ exists and is unique on $(s^*, |z|)$. It is worth pointing
out that if the equation $h(b)=0$ has a root on $(0, s^*)$, the objective function $J_1(b)$ attains its maximum value at this root.
Thus, we can exclude this root.
\end{proof}

\section{The Proof of Proposition~\ref{thm:nest}}

Observe that $1 = \Phi'(0)=\int_{0}^{\infty}{u \nu(d u)}$ and $\Phi(\alpha)= \int_{0}^{\infty}{(1-\exp(-\alpha u)) \nu(d u)}$.
Since $\alpha u>1-\exp(-\alpha u)$ for $u>0$, we obtain $\Phi(\alpha)<\alpha$.
Additionally, $\Big[\frac{\alpha}{\Phi(\alpha)} \Big]' =\frac{\Phi(\alpha) - \alpha \Phi'(\alpha)}{\Phi^2(\alpha)} \geq 0$
due to $[\Phi(\alpha) - \alpha \Phi'(\alpha)]' = -\Phi{''}(\alpha) \geq 0$. Also, $\Big[\frac{1}{\Phi(\alpha)} \Big]' \leq 0$.
We thus obtain that $\frac{\alpha}{\Phi(\alpha)}$ is increasing, while $\frac{1}{\Phi(\alpha)}$ is decreasing. Furthermore,
we can see that $\liml_{\alpha \to 0+} \frac{\alpha }{\Phi(\alpha)}=\liml_{\alpha \to 0+} \frac{1 }{\Phi'(\alpha)}=1$ and $\liml_{\alpha \to \infty} \frac{\alpha }{\Phi(\alpha)} = \liml_{\alpha \to \infty} \frac{1}{\Phi'(\alpha)}= \infty$.

\section{The Proof of Theorem~\ref{thm:0-1}}

\begin{proof}
First, it is easily obtained that
$\lim\limits_{\alpha \to 0} \frac{\alpha} {\Phi(\alpha)} = \frac{1}{\Phi'(0)}$
and $\lim\limits_{\alpha \to 0}  \frac{\Phi(\alpha)}{\alpha^2} = \infty$. This implies that in the limiting case the condition $\eta \leq -\frac{\Phi(\alpha)}{\alpha^2 \Phi{''}(0)}$ is always met (i.e., Case (i) in Theorem~\ref{thm:sparsty}).
Moreover, $|z| > \frac{\eta \alpha}{\Phi(\alpha)} \Phi{'}(0)$ degenerates to $|z|> \eta$.
In addition, we have
\[
\lim_{\alpha \to 0} \frac{\alpha \Phi'(\alpha b)}{\Phi(\alpha)} = \lim_{\alpha \to 0} \frac{\Phi'(\alpha b) + \alpha b \Phi{''}(\alpha b)}{\Phi'(\alpha)} =1.
\]
This implies that $\kappa(|z|)$ converges to the nonnegative solution of equation of the form
\[
b + \eta - |z|=0.
\]
That is,  $\kappa(|z|)=|z| - \eta$ when $|z|>\eta$.

Second, it is easily obtained that
$\liml_{\alpha \to \infty} \frac{\alpha} {\Phi(\alpha)} = \infty$
and $\liml_{\alpha \to \infty}  \frac{\Phi(\alpha)}{\alpha^2} = 0$.
This implies that in the limiting case the condition $\eta > -\frac{\Phi(\alpha)}{\alpha^2 \Phi{''}(0)}$ is always held.

Recall that $s^*>0$ is the unique root of $1+ {\lambda} \Phi{''}(s)=0$ and $\Phi{''}(s)$ is monotone increasing, so we can express
$s^*$ as $s^*= \frac{1}{\alpha} (\Phi{''})^{-1}(- \Phi(\alpha)/(\eta \alpha^2))$. Since $\liml_{\alpha \to \infty} \Phi(\alpha)/(\eta \alpha^2) =0$,
we can deduce that  $\liml_{\alpha \to \infty} (\Phi{''})^{-1}(- \Phi(\alpha)/(\eta \alpha^2))=\infty$. Subsequently,
\[
\lim_{\alpha \to \infty} s^{*} =\lim_{\alpha \to \infty}  \frac{1}{\alpha} (\Phi{''})^{-1}(- \Phi(\alpha)/(\eta \alpha^2))= \lim_{\alpha \to \infty}
\big[(\Phi{''})^{-1}(- \Phi(\alpha)/(\eta \alpha^2))\big]' \leq |z|.
\]
Additionally,
\begin{align*}
\lim_{\alpha \to \infty} \frac{\eta \alpha}{\Phi(\alpha)} \Phi'[(\Phi{''})^{-1}(- \Phi(\alpha)/(\eta \alpha^2))]
&=  \lim_{\alpha \to \infty}
- \frac{[\Phi(\alpha)/\alpha^2]}{\frac{\Phi'(\alpha)}{\alpha} - \frac{\Phi(\alpha)}{\alpha^2} }  \big[(\Phi{''})^{-1}(- \Phi(\alpha)/(\eta \alpha^2))\big]' \\
&= \lim_{\alpha \to \infty}
\big[(\Phi{''})^{-1}(- \Phi(\alpha)/(\eta \alpha^2))\big]' = \lim_{\alpha \to \infty} s^{*}.
\end{align*}
Assume $ \liml_{\alpha \to \infty} s^{*} =c \in (0, |z|]$. Then for sufficiently large $\alpha$, we have
$(\Phi{''})^{-1}(- \Phi(\alpha)/(\eta \alpha^2)) \simeq \alpha$; that is,
\[
\Phi(\alpha) \simeq  - \eta \alpha^2 \Phi{''}(\alpha).
\]
However, if $\liml_{\alpha \to \infty} \Phi(\alpha) <\infty$ then
$-\liml_{\alpha \to \infty} \alpha^2 \Phi{''}(\alpha) =  \liml_{\alpha \to \infty}  \frac{\Phi(\alpha)}{\log(\alpha)}=0$; while $\liml_{\alpha \to \infty} \Phi(\alpha) = \infty$ then
$-\liml_{\alpha \to \infty} \alpha^2 \Phi{''}(\alpha) =\liml_{\alpha \to \infty} \alpha \Phi{'}(\alpha) = \liml_{\alpha \to \infty}  \frac{\Phi(\alpha)}{\log(\alpha)} < \infty$.
This makes the contradiction due to the assumption
$\liml_{\alpha \to \infty} s^{*} =c \in (0, |z|]$. Thus, we have  $\liml_{\alpha \to \infty} s^{*} =0$. Hence,
\[
\lim_{\alpha \to \infty} s^{*} + \frac{\eta \alpha}{\Phi(\alpha)} \Phi'(\alpha s^*) =0.
\]
Finally, we have
\[
\lim_{\alpha \to \infty} \kappa(b) - \frac{\eta \alpha}{\Phi(\alpha)} \Phi'(\alpha \kappa(b)) = |z|,
\]
which implies $\liml_{\alpha \to \infty} \kappa(|z|) = |z|$. The second part now follows.
\end{proof}

\section{The Proof of Theorems~\ref{thm:oracle2} and \ref{thm:asumptotic}}


The proof is similar to that of Theorem~1 in \cite{ArmaganDunsonLee}.
Let $\tilde{\b}_n =\b^{*} + \frac{\hat{\u} }{\sqrt{n}}$ and
\[
\hat{\u} = \argmin_{\u} \; \bigg\{G_n(\u) \triangleq  \Big\|\y - \X (\b^{*} +\frac{\u}{\sqrt{n}} ) \Big\|_2^2
    + \eta_n \sum_{j=1}^p \frac{\Phi(\alpha_n|b^{*}_j {+} \frac{u_j}{\sqrt{n}}|) }  {\Phi(\alpha_n)} \bigg\}.
\]
Then $\hat{\u}  = \sqrt{n}(\tilde{\b}_n - \b^{*})$.
Consider that
\[
 G_n(\u) - G_n(\0)
 = \u^T(\X^T \X/n) \u - 2 \u^T  \frac{\X^T \epsi}{\sqrt{n}}+
  \eta_n \sum_{j=1}^p
\frac{\Phi(\alpha_n|b^{*}_j {+} \frac{u_j}{\sqrt{n}}|) {-} \Phi(\alpha_n|b^{*}_j|) } {\Phi(\alpha_n)}.
\]
Clearly, $\X^T \X/n \rightarrow \C$ and $\frac{\X^T\epsi}{\sqrt{n}} \overset{d}{\rightarrow} \z  \overset{d}{=} N(\0, \sigma^2 \C)$.
We now discuss the limiting behavior of the third term of the right-hand side. 

We partition $\z$ into $\z^T=(\z_1^T, \z_2^T)$ where  $\z_1=\{z_j: j \in {\cal A}\}$ and $\z_2=\{z_j: j \notin {\cal A}\}$.
First, assume $b^{*}_j=0$. The previous results imply
\[
 \eta_n \frac{\Phi(|u_j| \frac{\alpha_n}{\sqrt{n}})} {\Phi(\alpha_n)}
\backsimeq  \frac{n^{\frac{\gamma_1 {+} \gamma_2 {-} 1}{2}}}{n^{\frac{\gamma_2 \rho}{2} }} \frac{\eta_n}{n^{\frac{\gamma_1}{2} }} \frac{\alpha_n}{n^{\frac{\gamma_2}{2} }}  \frac{n^{\frac{\gamma_2 \rho}{2} }}{\alpha_n^{\rho}}  \frac{\alpha_n^{\rho} } {\log (\alpha_n)}  \frac{ \log (\alpha_n) } {\Phi(\alpha_n)}  \frac{\Phi \big(|u_j| \frac{ \alpha_n}{\sqrt{n} } \big) }
 {\frac{ \alpha_n}{\sqrt{n} }} \rightarrow +\infty
\]
whenever $\gamma=0$, due to $\liml_{\alpha \to \infty} \frac{\log(\alpha)}{\Phi(\alpha)} = \liml_{\alpha \to \infty} \frac{1}{\alpha \Phi'(\alpha)} = \frac{1}{c_0}>0$. Here we take $\rho$ as a positive constant such that $\rho\leq \frac{\gamma_1 {+} \gamma_2 {-} 1}{\gamma_2}$. If $\gamma \in (0, 1)$, we also have
\[
 \eta_n \frac{\Phi(|u_j| \frac{\alpha_n}{\sqrt{n}})} {\Phi(\alpha_n)}
\backsimeq \frac{n^{\frac{\gamma_1 +\gamma_2- 1}{2}}}{n^{ \frac{\gamma_2 \gamma}{2}}}  \frac{ \alpha_n^{\gamma} } {\Phi(\alpha_n)}  \frac{\Phi\big(|u_j| \frac{\alpha_n}{\sqrt{n}} \big) }
 { \frac{\alpha_n}{\sqrt{n}}} \rightarrow +\infty,
\]
because $\liml_{\alpha \to \infty} \frac{\alpha^{\gamma}}{\Phi(\alpha)} = \liml_{\alpha \to \infty} \frac{\gamma \alpha^{\gamma-1}}{ \Phi'(\alpha)} = \frac{\gamma}{c_0}>0$.

Next, we assume that $b^{*}_j\neq 0$. Subsequently, for sufficiently large $n$,
\begin{align} \label{eqn:99}
& \eta_n \frac{\Phi(\alpha_n|b^{*}_j {+} \frac{u_j}{\sqrt{n}}|) - \Phi(\alpha_n|b^{*}_j|) } {\Phi(\alpha_n)} \nonumber  \\
 & =   \eta_n \frac{\Phi(\alpha_n (b^{*}_j {+} \frac{u_j}{\sqrt{n}})
 \sgn(b^{*}_j) ) {-} \Phi(\alpha_n b^{*}_j \sgn(b^{*}_j)) } {\Phi(\alpha_n)}  \nonumber \\
& =  \frac{u_j}{b^{*}_j {+} \theta \frac{u_j}{\sqrt{n}}} \frac{\eta_n }{\sqrt{n}}  \frac{ \Phi'\Big(\alpha_n (b^{*}_j {+} \theta \frac{u_j}{\sqrt{n}}) \sgn(b^{*}_j) \Big)  \alpha_n (b^{*}_j {+} \theta \frac{u_j}{\sqrt{n}}) \sgn(b^{*}_j)} {\Phi(\alpha_n)}  \quad \{\mbox{for some }  \theta \in (0, 1) \}  \\
& \to 0. \nonumber
\end{align}
Here we use the fact that $\liml_{z \to \infty} \frac{z \Phi'(z)}{\Phi(z)} = \gamma \in [0, 1)$.

By Slutsky's theorem, we have
\[
 G_n(\u) - G_n(\0)  \overset{d}{\rightarrow} \left\{\begin{array}{ll} \u_1^T \C_{11} \u_1 - 2 \u_1^T  \z_1 & \mbox{ if } u_j=0 \; \forall j \notin {\cal A},
 \\ \infty & \mbox{ otherwise}. \end{array} \right.
\]
This implies that $G_n(\u) - G_n(\0)$ converges in distribution to a convex function, whose unique minimum is
$(\C_{11}^{-1} \z_1, \0)^T$. It then follows from  epiconvergence \citep{KnightFu:2000} that
\begin{equation} \label{eqn:11}
\hat{\u}_1 \overset{d}{\rightarrow} \C_{11}^{-1} \z_1 \; \mbox{ and } \;  \hat{\u}_2 \overset{d}{\rightarrow} \0.
\end{equation}
This proves asymptotic normality due to $\z_1 \overset{d}{=} N(\0, \sigma^2 \C_{11})$.

Recall that $\tilde{b}_{n j} \overset{p}{\rightarrow} b^{*}_j$ for any $j \in \AM$, which implies that $\Pr(j \in \AM_n) \to 1$.
Thus, for consistency in Part (1), it suffices to obtain  $\Pr(l \in \AM_n) \to 0$ for any $l \notin \AM$.
For such an event ``$l\in \AM_n$," it follows from the KKT optimality conditions
that $2 \x_{l}^T(\y - \X \tilde{\b}_{n})=\frac{\eta_n \alpha_n \Phi'(\alpha_n |\tilde{b}_{nj}|)}{\Phi(\alpha_n)}$.
Note that
\[
\frac{2 \x_l^T (\y -\X \tilde{\b}_{n})}{\sqrt{n}} = 2 \frac{\x_l^T \X \sqrt{n}(\b^{*}-\tilde{\b}_n)}{n} + \frac{2 \x_l^T \epsi}{\sqrt{n}},
\]
and $\liml_{n\to \infty} \frac{\eta_n \alpha_n \Phi'(\alpha_n |\tilde{b}_{nj}|)}{\sqrt{n} \Phi(\alpha_n)}
= \liml_{n\to \infty} \frac{\eta_n \alpha_n \Phi'( \sqrt{n} |\tilde{b}_{nj}| \alpha_n/\sqrt{n})}{\sqrt{n} \Phi(\alpha_n)}
\backsimeq \liml_{n\to \infty} \frac{n^{\gamma_1 {+} \gamma_2 {-} \frac{1}{2}} }{\gamma_2 \log(n)} \frac{\log(\alpha_n)} {\Phi(\alpha_n)} \to \infty$ for $\gamma=0$ or $\liml_{n\to \infty} \frac{\eta_n \alpha_n \Phi'(\alpha_n |\tilde{b}_{nj}|)}{\sqrt{n} \Phi(\alpha_n)}
= \liml_{n\to \infty} \frac{\eta_n \alpha_n \Phi'( \sqrt{n} |\tilde{b}_{nj}| \alpha_n/\sqrt{n})}{\sqrt{n} \Phi(\alpha_n)}
\backsimeq \liml_{n\to \infty} \frac{n^{\gamma_1 {+} \gamma_2 {-} \frac{1}{2}} }{n^{\frac{\gamma \gamma_2}{2} } } \frac{ \alpha_n^{\gamma} } {\Phi(\alpha_n)} \to \infty$ for $\gamma>0$
due to $\sqrt{n} |\tilde{b}_{nj}| \overset{p}{\rightarrow}  0$ by (\ref{eqn:11}) and Slutsky's theorem. Accordingly, we have
\[
\Pr(l \in \AM_n) \leq \Pr\Big[ 2 \x_{l}^T(\y - \X \tilde{\b}_{n})=\frac{\eta_n \alpha_n \Phi'(\alpha_n |\tilde{b}_{nj}|)}{\Phi(\alpha_n)}\Big]
\to 0.
\]

As for the proof of Theorem~\ref{thm:asumptotic}, we consider the case that $\liml_{n\to \infty} \alpha_n =0$. In this case, we have
\[
\lim_{n \to \infty} \frac{\Phi(\alpha_n/\sqrt{n})}{\alpha_n/\sqrt{n}} = 1 \; \mbox{ and } \;
\lim_{n \to \infty} \frac{\Phi(\alpha_n)}{\alpha_n} = 1.
\]
Assume that $\liml_{n\to \infty} \eta_n/\sqrt{n}= 2 c_3 \in [0, \infty]$. Then
\[
 \eta_n \frac{\Phi(|u_j| \frac{\alpha_n}{\sqrt{n}})} {\Phi(\alpha_n)}= |u_j|  \frac{\eta_n}{\sqrt{n}}
 \frac{\alpha_n  \Phi(|u_j| \frac{\alpha_n}{\sqrt{n}})} {\Phi(\alpha_n) |u_j| {\alpha_n}/{\sqrt{n}}} \to 2 c_3 |u_j|
\]
when $u_j\neq 0$. If $b^{*}_j\neq 0$, then
\begin{align*}
& \eta_n \frac{\Phi(\alpha_n|b^{*}_j {+} \frac{u_j}{\sqrt{n}}|) - \Phi(\alpha_n|b^{*}_j|) } {\Phi(\alpha_n)} \nonumber  \\
 & =   \eta_n \frac{\Phi(\alpha_n (b^{*}_j {+} \frac{u_j}{\sqrt{n}})
 \sgn(b^{*}_j) ) {-} \Phi(\alpha_n b^{*}_j \sgn(b^{*}_j)) } {\Phi(\alpha_n)}  \nonumber \\
& =  \frac{u_j}{b^{*}_j {+} \theta \frac{u_j}{\sqrt{n}}} \frac{\eta_n }{\sqrt{n}}  \frac{ \Phi'\Big(\alpha_n (b^{*}_j {+} \theta \frac{u_j}{\sqrt{n}}) \sgn(b^{*}_j) \Big)  \alpha_n (b^{*}_j {+} \theta \frac{u_j}{\sqrt{n}}) \sgn(b^{*}_j)} {\Phi(\alpha_n)}  \quad \{\mbox{for some }  \theta \in (0, 1) \}  \\
& \to 2 c_3 u_j \sgn(b^{*}_j).
\end{align*}
We now first consider the case that $c_3 = 0$. In this case,
we have
\[
 G_n(\u) - G_n(\0)  \overset{d}{\longrightarrow} \u^T \C \u - 2 \u^T  \z,
\]
which is convex w.r.t.\ $\u$. Then the minimizer of $\u^T \C \u {-} 2 \u^T  \z$ is $\u^{*}$ if and only if  $\C \u^{*} - \z=\0$. Since
$\hat{\u} \overset{d}{\rightarrow} \u^{*}$ (by epiconvergence), we obtain $ \sqrt{n}(\tilde{\b}_n - \b^{*})=\hat{\u}  \overset{d}{\rightarrow}
N(\0, \sigma^2 \C^{-1})$.

We then consider the case that $c_3 \in (0, \infty)$. Right now we have
\[
 G_n(\u) - G_n(\0)  \overset{d}{\longrightarrow} \u^T \C \u - 2 \u^T  \z+ 2c_3 \sum_{j \in \AM} u_j \sgn(b^{*}_j) +
2 c_3 \sum_{j \notin \AM}  |u_j|  \triangleq H_2(\u).
\]
$H_2(\u)$ is convex in $\u$. Let the minimizer of $H_2(\u)$ be $\u^{*}$. Then
\[
\C \u^{*} - \z + c_3 \s =0
\]
where $\s^T = (\sgn(\b^{*}_1)^T, \v^T)$ and $\v \in \RB^{p_2}$ with $\max_{j} |v_j| \leq 1$. Thus,
we have $\u^{*} \overset{d}{\rightarrow} N({\bf t}, \sigma^2 \Tha)$ where ${\bf t}=(t_1, \ldots, t_p)^T=-c_3 \C^{-1} \s$
and $\Tha=[\theta_{ij}]= \C^{-1}$.
For any $\epsilon>0$, when $n$ is significantly large and using Chebyshev's inequality,  we have that
\begin{align*}
\Pr\Big[|u_j^{*}|/\sqrt{n} \geq \epsilon \Big]
&= \Pr\Big[ |u_j^{*}| \geq \sqrt{n} \epsilon \Big] \\
& \leq
\Pr\Big[|u_j^{*} - t_j| \geq \sqrt{n} \epsilon - |t_j| \Big]
 \leq \frac{\sigma^2 \theta_{jj}}{(\sqrt{n} \epsilon - |t_j| )^2} \to 0
\end{align*}
for $j=1, \ldots, p$. Consequently,  $|u_j^{*}|/\sqrt{n} \overset{p}{\rightarrow}0$;
that is, $\tilde{\b}_n  \overset{p}{\rightarrow} \b^{*}$.

\section{The Proof of Theorem~\ref{thm:cc}}

\begin{proof} Since $\Phi(s)$ is a proper concave function in $s$ on $(0, \infty)$, we now compute its concave conjugate. That is,
\[
\min_{s>0} \{g(s) \triangleq w s -  \Phi(s)\}.
\]
Let the first-order derivative of $g(s)$ w.r.t.\ $s$ be equal to $0$, which yields
\[
 s = (\Phi')^{-1}(w).
\]
Thus, the corresponding minimum  (denoted $g^{*}$) is
\[
g^{*} =  {w}(\Phi')^{-1}(w) - \Phi((\Phi')^{-1}(w)).
\]
We denote $\varphi(z)= \Phi((\Phi')^{-1}(z))- z (\Phi')^{-1}(z)$. We now prove that $\varphi(z)$ satisfies
the conditions in Definition~\ref{def:22}. Since $\Phi(0)=0$ and $\Phi'(0)=1$, we have that $(\Phi')^{-1}(1)=0$
and $\Phi((\Phi')^{-1}(1))=0$. As a result, we have $\varphi(1)=0$.
The first-order and second-order derivatives of $\varphi(z)$ are
\[
\varphi'(z) =- (\Phi')^{-1}(z) \quad \mbox{ and }  \quad  \varphi{''}(z) = -\frac{1}{\Phi{''}((\Phi')^{-1}(z))}.
\]
We accordingly obtain that $\varphi'(1) =0$ and $ \varphi{''}(z)>0$ on $(0, \infty)$. Moreover, we have $\liml_{\z\to 0+} \varphi'(z) = -\infty$
in terms of Lemma~\ref{lem:lapexp} (which shows that $\liml_{u \to +\infty} \Phi'(u)=0$).
\end{proof}

\bibliographystyle{Chicago}
\bibliography{ncvs2}

\end{document}